\tikzset{
	->, 
	every state/.style={thick, fill=gray!10}, 
	initial text=$ $, 
}
\pgfplotsset{width=10cm,compat=1.9}
\newif\if@restonecol
\newtheorem{lemma}{Lemma}
\newtheorem{theorem}{Theorem}
\newtheorem*{theorem*}{Theorem}
\newtheorem{proposition}{Proposition}
\newtheorem{definition}{Definition}
\newtheorem{assumption}{Assumption}
\newcommand{\Rmnum}[1]{\expandafter\@slowromancap\romannumeral #1@}
\DeclareMathOperator*{\argmin}{arg\,min}
\DeclareMathOperator*{\argmax}{arg\,max}
\let\mathbbm\mathds
\newcommand{\norm}[1]{\left\lVert#1\right\rVert}
\newcommand{\abs}[1]{\left|#1\right|}
\newcommand{\Pro}{\mathbb{P}}
\newcommand{\agentone}{$A_1$}
\newcommand{\agenttwo}{$A_2$}
\newcommand{\one}{\mathbbm{1}}
\newcommand{\set}[1]{\left\{ #1 \right\}}
\renewcommand{\bar}{\overline}
\renewcommand{\tilde}{\widetilde}
\newcommand{\E}{\mathbb{E}}
\newcommand{\calL}{\mathcal{L}}
\newcommand{\calP}{\mathcal{P}}
\newcommand{\calR}{\mathcal{R}}
\newcommand{\calD}{\mathcal{D}}
\newcommand{\calH}{\mathcal{H}}
\newcommand{\calC}{\mathcal{C}}
\newcommand{\R}{\mathbb{R}}
\newcommand{\eps}{\varepsilon}
\newcommand{\tilh}{\tilde{h}}
\newcommand{\proj}{\mathrm{Proj}}
\newcommand{\GD}{\mathrm{GD}}
\newcommand{\Identity}{\mathrm{Id}}
\newcommand{\citet}[1]{\citeauthor*{#1}~\cite{#1}}
\title{Performative Reinforcement Learning}
\author{%
Debmalya Mandal\\ 
MPI-SWS\\
  \texttt{dmandal@mpi-sws.org}\\
  \and Stelios Triantafyllou\\
  MPI-SWS\\
  \texttt{strianta@mpi-sws.org}
  \and Goran Radanovic\\
MPI-SWS\\
  \texttt{gradanovic@mpi-sws.org}\\
}
\begin{document}

\maketitle

\begin{abstract}
We introduce the framework of performative reinforcement learning where the policy chosen by the learner affects the underlying reward and transition dynamics of the environment. Following the recent literature on performative prediction~\cite{PZMH20}, we introduce the concept of performatively stable policy. We then consider a regularized version of the reinforcement learning problem and show that repeatedly optimizing this objective converges to a performatively stable policy under reasonable assumptions on the transition dynamics. Our proof utilizes the dual perspective of the reinforcement learning problem and may be of independent interest in analyzing the convergence of other algorithms with decision-dependent environments.
We then extend our results for the setting where the learner just performs gradient ascent steps instead of fully optimizing the objective, and for the setting where the learner has access to a finite number of trajectories from the changed environment. For both the settings, we leverage the dual formulation of performative reinforcement learning, and establish convergence to a stable solution. Finally, through extensive experiments on a grid-world environment, we demonstrate the dependence of convergence on various parameters e.g. regularization, smoothness, and the number of samples.
\end{abstract}

\section{Introduction}
Over the last decade, advances in reinforcement learning techniques enabled several breakthroughs in AI. These milestones include AlphaGo~\cite{AlphaGo}, Pluribus~\cite{BS19}, and AlphaStar~\cite{AlphaStar}. Such success stories of reinforcement learning in multi-agent game playing environments have naturally led to the adoption of RL in many real-world scenarios e.g. recommender systems~\cite{Charu20}, and healthcare~\cite{ERR+19}. However, these critical domains often  pose new challenges including the mismatch between deployed policy and the target environment.

Existing frameworks of reinforcement learning ignore the fact that a deployed policy might change the underlying environment (i.e., reward, or probability transition function, or both). Such a mismatch between the deployed policy and the environment often arises in practice.  For example, recommender systems often use contextual Markov decision process to model interaction with a user~\cite{HHMM+20}. In such a contextual MDP, the initial context/user feature is drawn according a distribution, then the user interacts with the platform according to the context-specific MDP. However, it has been  repeatedly observed that such recommender systems not only change the user demographics (i.e. distribution of contexts) but also how they interact with the platforms~\cite{CSE18, MAPM+20}. Our second  example comes from autonomous vehicles (AV). Even if we ignore the multi-agent aspect of these learning algorithms, a deployed AV might change how the pedestrians, and other cars behave, and the resulting environment might be quite different from what the designers of the AV had in mind~\cite{NNPS17}.

Recently, \citet{PZMH20} introduced the notion of \emph{performative prediction}, where the predictions made by a classifier changes the data distribution. However, in the context of reinforcement learning, the situation is different as the changing transition dynamics introduces additional complexities. If the underlying probability transition function changes, then the class of feasible policies and/or models changes with time. This implies that we need a framework that is more general than the framework of performative prediction, and can model policy-dependent outcomes in reinforcement learning.

\textbf{Our Contributions}: In this paper, we introduce the notion of \emph{performative reinforcement learning} where the deployed policy not only changes the reward vector but also the underlying transition probability function. We introduce the notion of \emph{performatively stable policy} and show under what conditions various repeated retraining methods (e.g., policy optimization, gradient ascent etc.) converges to such a stable solution. Our precise contributions are the following.
\begin{itemize}
    \item We consider a regularized version of the reinforcement learning problem where the variables are long-term discounted state-action occupancy measures. We show that, when both the probability transition function and the reward function changes smoothly in response to the occupancy measure, repeated optimization of regularized reinforcement learning converges to a stable solution.
    \item We then show that if the learner performs repeated projected gradient ascent steps, then also convergence is guaranteed provided that the step-size is small enough. Compared to the supervised learning setting~\cite{PZMH20}, the projection step is necessary as the probability transition function, and hence the space of occupancy measures change with time.
    \item Next we extend our result to the finite samples setting, where the learner has access to a collection of samples from the updated environment. For this setting, we use an empirical version of the Lagrangian of the regularized RL problem. We show that repeatedly solving a saddle point of this empirical Lagrangian (max player corresponds to the learner) also converges to a stable solution provided that the number of samples is large enough.
    \item Finally, we empirically evaluate the effect of various parameter choices (regularization, smoothness, number of samples etc.) through extensive experiments on a two-agent grid-world environment. In this environment, the first agent performs various types of repeated retraining, whereas the second agent responds according to a smooth response function.
\end{itemize}

\textbf{Our Techniques}: At a high level, our theoretical results might look similar to the results derived by \citet{PZMH20}. However, there are many challenges.
\begin{itemize}
    \item We repeatedly maximize a regularized objective whose feasible region is the space of feasible occupancy measures. As the probability transition function changes with time, the feasible region of the optimization problem also changes with time. So ideas from supervised classification setting~\cite{MPZH20} cannot be applied directly. Therefore, we look at the dual problem which is strongly-convex and mapping from occupancy measure to the corresponding dual optimal solution turns out to be a contraction. We believe that the dual perspective on performative prediction might be useful for analyzing the convergence of other algorithms with decision-dependent outcomes.
    \item For performative reinforcement learning, the finite sample setting is very different than the supervised learning setting. This is because we do not have independent sample access from the new environment. At time-step $t$, we can only access the new model through the policy $\pi_t$ (or occupancy measure $d_t$). In that sense, the learner faces an offline reinforcement learning problem where the samples are collected from the behavior policy $\pi_t$. This is also the reason we need an additional overlap assumption, which is often standard in offline reinforcement learning~\cite{MS08}.
\end{itemize}


\textbf{Related Work:}
\citet{PZMH20} introduced the notion of \emph{performative prediction}. Subsequent papers have considered several aspects of this framework including optimization~\cite{MPZH20, MPZ21}, multi-agent systems~\cite{NFDF+22}, and population dynamics~\cite{BHK20}. However, to the best of our knowledge, performative prediction in sequential decision making is mostly unexplored. A possible exception is ~\cite{JLOS21} who consider a setting where the transition and reward of the underlying MDP depend non-deterministically on the deployed policy.  Since the mapping is non-deterministic, it doesn't  lead to a notion of equilibrium, and the authors instead focus on the optimality and convergence of various RL algorithms.

\textbf{Stochastic Stackelberg Games}: Our work is also closely related to the literature on stochastic games \cite{shapley1953stochastic,filar2012competitive}, and in particular, those that study Stackelberg (commitment) strategies \cite{von2010market}, where a leader commits a policy to which a follower (best) responds. While different algorithmic approaches have been proposed for computing Stackelberg equilibria (SE) in stochastic games or related frameworks~\cite{vorobeychik2012computing,letchford2012computing,dimitrakakis2017multi}, computing optimal commitment policies has shown to be a computationally intractable  (NP-hard) problem~\cite{letchford2012computing}. More recent works have studied learning SE in stochastic games, both from practical perspective~\cite{rajeswaran2020game,mishra2020model,huang2022robust} and theoretical perspective~\cite{bai2021sample,zhong2021can}. The results in this paper differ from this line of work in two ways. Firstly, our framework abstracts the response model of an agent’s effective environment in that it does not model it through a rational agency with a utility function. Instead, it is more aligned with the approaches that learn the response function of the follower agent~\cite{sinha2016learning,kar2017cloudy,sessa2020learning}, out of which the closest to our work is \cite{sessa2020learning} that considers repeated games. Secondly, given that we consider solution concepts from performative prediction rather than SE, we focus on repeated retraining as the algorithm of interest, rather than bi-level optimization approaches.



\textbf{Other related work:} We also draw a connection to other RL frameworks. Naturally, this work relates to RL settings that study non-stationary environments. These include recent learning-theoretic results, such as \cite{gajane2018sliding,fei2020dynamic,domingues2021kernel,cheung2020reinforcement,wei2021non} that allow non-stationary rewards and transitions provided a bounded number or amount of changes (under no-regret regime), the extensive literature on learning under adversarial reward functions  (e.g., \cite{even2004experts,neu2012adversarial,dekel2013better,rosenberg2019online}), or the recent works on learning under corrupted feedback \cite{lykouris2021corruption}.
However, the setting of this paper is more structured, i.e., the environment responds to the deployed policy and does not arbitrarily change. 
Our work is also broadly related to the extensive literature on multi-agent RL literature -- we refer the reader to \cite{zhang2021multi} for a selective overview. A canonical example of a multi-agent setting that closely relates to the setting of this paper is human-AI cooperation, where the AI's policy influences the human behavior~\cite{dimitrakakis2017multi,nikolaidis2017game,crandall2018cooperating,radanovic2019learning,carroll2019utility}. In fact, our experiments are inspired by human-AI cooperative interaction. While the algorithmic framework of repeated retraining has been discussed in some of the works on cooperative AI (e.g., see \cite{carroll2019utility}), these works do not provide a formal treatment of the problem at hand. Finally, this paper also relates to the extensive literature on offline RL~\cite{levine2020offline} as the learner faces an offline RL problem when performing repeated retraining with finite samples.
We utilize the analysis of \cite{ZHHJ+22} to establish finite sample guarantees, under a standard assumption on sample generation \cite{MS08, FSM10, XJ21}, and overlap in occupancy measure~\cite{MS08, ZHHJ+22}. 
Note that offline RL has primarily focused on static RL settings in which the policy of a learner does not affect the model of the underlying environment.


\section{Model}\label{sec.model}
We are primarily concerned with Markov Decision Processes (MDPs) with a fixed state space $S$, action set $A$, discount factor $\gamma$, and starting state distribution $\rho$. The reward and the probability transition functions of the MDP will be functions of the adopted policy. We consider infinite-horizon setting where the learner's goal is to  minimize the total sum of discounted rewards. We will write $s_k$ to denote the state visited at time-step $k$ and $a_k$ to denote the action taken at time-step $k$.
When the learner adopts policy $\pi$, the underlying MDP has reward function $r_\pi$ and probability transition function $P_\pi$. We will write $M(\pi)$ to denote the corresponding MDP, i.e., $M(\pi) = (S, A, P_\pi, r_\pi, \rho)$. Note that only the reward and the transition probability function change according to the policy $\pi$. 

When an agent adopts policy $\pi$ and the underlying MDP is $M(\pi') = (S,A,P_{\pi'}, r_{\pi'}, \rho)$ the probability of a trajectory $\tau = (s_k, a_k)_{k=0}^{\infty}$ is given as $\Pro(\tau) = \rho(s_0) \prod_{k=1}^\infty P_{\pi'}(s_{k+1}|s_k, \pi(s_k))$. We will write $\tau \sim \Pro^{\pi}_{\pi'}$ to denote such a trajectory $\tau$. Given a policy $\pi$ and an underlying MDP $M(\pi')$ we write $V^{\pi}_{\pi'}(\rho)$ to define the value function w.r.t. the starting state distribution $\rho$. This is defined as follows.
\begin{equation}
    \label{defn:value-function}
    V^{\pi}_{\pi'}(\rho) = \E_{\tau \sim \Pro^{\pi}_{\pi'}} \left[\sum_{k=0}^\infty \gamma^k r_{\pi'}(s_k,a_k) \rvert \rho \right]
\end{equation}
\textbf{Solution Concepts}: Given the definition of the value function~\cref{defn:value-function}, we can now define the solution concepts for our setting. First we define the performative value function of a policy which is the expected total return of the policy given the environment changes in response to the policy. 
\begin{definition}[Performative Value Function]
Given a policy $\pi$, and a starting state distribution $\rho \in \Delta(S)$, the performative value function $V^{\pi}_\pi(\rho)$ is defined as 
$
V^{\pi}_{\pi}(\rho) = \E_{\tau \sim \Pro^{\pi}_{\pi}} \left[\sum_{t=0}^\infty \gamma^t r_{\pi}(s_t, a_t) \rvert \rho \right]
$.
\end{definition}

We now define the performatively optimal policy, which  maximizes performative value function.
\begin{definition}[Performatively Optimal Policy]\label{defn:perf-stability}
A policy $\pi$ is performatively optimal if it maximizes performative value function, i.e., 
$
\pi \in \argmax_{\pi'} V^{\pi'}_{\pi'}(\rho)
$.
\end{definition}
We will write $\pi_{P}$ to denote the  performatively optimal policy. Although, $\pi_{P}$ maximizes the performative value function, it need not be stable, i.e., the policy need not be optimal with respect to the changed environment $M(\pi_P)$. We next define the notion of performatively stable policy which captures this notion of stability.
\begin{definition}[Performatively Stable Policy]
A policy $\pi$ is performatively stable if it satisfies the
condition
$
\pi \in \argmax_{\pi'} V^{\pi'}_\pi(\rho)
$.
\end{definition}
We will write $\pi_S$ to denote the performatively stable policy. The definition of performatively stable policy implies that if the underlying MDP is $M(\pi_S)$ then an optimal policy is $\pi_{S}$. This means after deploying the policy $\pi_S$ in the MDP $M(\pi_S)$ the environment doesn't change and the learner is also optimizing her reward in this stable environment.
We next show that even for an MDP with a single state, these two solution concepts can be very different.

\textbf{Example}: Consider an MDP with single state $s$ and two actions $a$ and $b$. If a policy plays arm $a$ with probability $\theta$ and $b$ with probability $1-\theta$ then we have $r(s,a) = \frac{1}{2} - \epsilon \theta$ and $r(s,b) = \frac{1}{2} + \epsilon \theta$ for some $\epsilon < \frac{1}{4}$. Note that if $\theta_S = 0$ then both the actions give same rewards, and the learner can just play action $b$. Therefore, a policy that always plays action $b$ is a stable policy and achieves a reward of $\frac{1}{2(1-\gamma)}$. On the other hand, a policy that always plays action $a$ with probability $\theta = 1/4$ has the performative value function of
$$
\frac{\theta (1/2 - \epsilon \theta)}{1-\gamma} + \frac{(1-\theta) (1/2 + \epsilon \theta)}{1-\gamma} = \frac{1/2 + \epsilon/ 8}{1-\gamma}
$$
So, for  $\epsilon>0$, a performatively optimal policy can achieve higher value function than a stable policy.


We will mainly consider with the tabular MDP setting where the number of states and actions are finite. Even though solving tabular MDP in classic reinforcement learning problem is relatively straight-forward, we will see that even the tabular setting raises many interesting challenges for the setting of performative reinforcement learning. 

\textbf{Discounted State, Action Occupancy Measure}: Note that it is not a priori clear if there always exists a performatively stable policy (as defined in~\eqref{defn:perf-stability}). This is because such existence guarantee is usually established through a fixed-point argument, but the set of optimal solutions need not be convex. If both $\pi_1$ and $\pi_2$ optimizes \eqref{defn:perf-stability}, then their convex combination might not be optimal. So, in order to find a stable policy, we instead consider the linear programming formulation of reinforcement learning. Given a policy $\pi$, its long-term discounted state-action occupancy measure in the MDP $M(\pi)$ 
is defined as
$
d^\pi(s,a) = \E_{\tau \sim \Pro^\pi_\pi} \left[\sum_{k=0}^\infty \gamma^k \one\set{s_k = s, a_k = a}\rvert \rho\right]
$.
Given an occupancy measure $d$, one can consider the following policy $\pi^d$ which has occupancy measure $d$.
\begin{align}\label{eq:dtopi}
    \pi^d(a|s) = \left\{\begin{array}{cc}
        \frac{d(s,a)}{\sum_b d(s,b)} & \textrm{ if } \sum_a d(s,a) > 0 \\
        \frac{1}{A} & \textrm{ otherwise }
    \end{array} \right.
\end{align}
With this definition, we can pose the problem of finding a performatively stable occupancy measure. 
An occupancy measure $d$ is performatively stable if it is the optimal solution of the following problem.
\begin{align}
        d \in &\argmax_{d \ge 0} \sum_{s,a} d(s,a) r_d(s,a)\label{eq:perf-stable-policy}\\
        \textrm{s.t.} & \sum_a d(s,a) = \rho(s) + \gamma \cdot \sum_{s',a} d(s',a) P_d(s',a,s) \ \forall s \nonumber 
\end{align}
With slight abuse of notation we are writing $P_d$ as $P_{\pi^d}$ (as defined in equation~\eqref{eq:dtopi}). If either the probability transition function or the reward function changes drastically in response to the occupancy measure then the optimization problem~\ref{eq:perf-stable-policy} need not even have a stable point. Therefore, as is standard in performative prediction, we make the following sensitivity assumption regarding the underlying environment.
\begin{assumption}\label{sensitivity-assumption}
The reward and probability transition mappings  
are $(\eps_r, \eps_p)$-sensitive i.e. the following holds for any two occupancy measures $d$ and $d'$
\begin{align*}
&\norm{r_d - r_{d'}}_2 \le \eps_r \norm{d - d'}_2,\ \norm{P_d -P_{d'}}_2 \le \eps_p \norm{d - d'}_2
\end{align*}
\end{assumption}

Since the objective function of \cref{eq:perf-stable-policy} is convex (in fact linear), and the set of optimal solutions is convex, a simple application of Kakutani fixed point theorem~\cite{Glicksberg52} shows that there always exists a performative stable solution.\footnote{The proof of this result and all other results are provided in the appendix.}
\begin{proposition}\label{prop:existence}
Suppose assumption~\ref{sensitivity-assumption} holds for some constants $(\eps_r, \eps_p)$, then the optimization problem~\ref{eq:perf-stable-policy} always has a fixed point.
\end{proposition}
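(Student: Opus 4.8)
The plan is to cast the search for a stable point as a fixed-point problem for a set-valued map and apply Kakutani's theorem. Let $\mathcal{D} := \{d \in \mathbb{R}^{S\times A} : d \ge 0,\ \sum_{s,a} d(s,a) = \tfrac{1}{1-\gamma}\}$. Summing the flow constraints of \eqref{eq:perf-stable-policy} over all states $s$, and using $\sum_s \rho(s) = 1$ and $\sum_s P_d(s',a,s) = 1$, shows that every feasible point of \eqref{eq:perf-stable-policy} (for any fixed reward/transition data) lies in $\mathcal{D}$; thus $\mathcal{D}$ is a natural --- and clearly nonempty, compact, convex --- domain. For $d \in \mathcal{D}$, I would let $\Phi(d) \subseteq \mathcal{D}$ be the set of optimizers of the linear program \eqref{eq:perf-stable-policy} when $r_d$ and $P_d$ are held fixed as data. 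A fixed point $d^\star \in \Phi(d^\star)$ is by definition a performatively stable occupancy measure, so it suffices to verify the hypotheses of Kakutani: that $\Phi$ is nonempty-valued, convex-valued, and has a closed graph.

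Non-emptiness and convexity of $\Phi(d)$ I would dispatch via standard linear-programming facts: for fixed $d$ the feasible region is nonempty --- it contains the occupancy measure of $\pi^d$ in the MDP with transition $P_d$ --- and the objective $\langle r_d, \cdot\rangle$ is bounded above on $\mathcal{D}$ by $\tfrac{1}{1-\gamma}\max_{d'\in\mathcal{D}}\|r_{d'}\|_\infty$, which is finite because $d\mapsto r_d$ is continuous (Assumption~\ref{sensitivity-assumption}) and $\mathcal{D}$ is compact; hence the optimum is attained, so $\Phi(d)\neq\emptyset$, and the optimal set is a face of the feasible polytope, hence convex.

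The heart of the argument is the closed-graph property: given $d_n\to d$ with $d'_n\in\Phi(d_n)$ and $d'_n\to d'$, I must show $d'\in\Phi(d)$. Feasibility of $d'$ at $d$ follows by passing to the limit in the finitely many linear constraints, using $P_{d_n}\to P_d$ (Assumption~\ref{sensitivity-assumption}). Optimality is the delicate step: for an arbitrary competitor $\hat d$ feasible at $d$, I need points $\hat d_n$ feasible at $d_n$ with $\hat d_n\to\hat d$, so that the inequality $\langle r_{d_n}, d'_n\rangle \ge \langle r_{d_n}, \hat d_n\rangle$ passes to the limit and yields $\langle r_d, d'\rangle\ge\langle r_d,\hat d\rangle$. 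My construction is to take the fixed stationary policy $\hat\pi := \pi^{\hat d}$ and let $\hat d_n$ be its occupancy measure in the MDP with transition $P_{d_n}$ and start $\rho$: each $\hat d_n$ is a genuine occupancy measure, hence feasible at $d_n$, and since the occupancy measure of a fixed policy depends continuously on the transition kernel --- it equals $\rho^\top(I-\gamma P^{\hat\pi})^{-1}$ reweighted by $\hat\pi$, with $I-\gamma P^{\hat\pi}$ invertible and its inverse continuous in $P^{\hat\pi}$ because $\gamma<1$ --- one gets $\hat d_n\to\hat d$, using that a feasible point of \eqref{eq:perf-stable-policy} is exactly the occupancy measure of its induced policy $\pi^{\hat d}$ under $P_d$. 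This is precisely lower hemicontinuity of the feasible-set correspondence, and together with the (automatic) upper hemicontinuity it yields the closed-graph property; equivalently, Berge's maximum theorem now applies.

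With the hypotheses in hand, Kakutani's fixed point theorem (the Kakutani--Fan--Glicksberg form cited as~\cite{Glicksberg52} covers this, though finite-dimensional Kakutani is all that is needed here) gives $d^\star\in\Phi(d^\star)$, which completes the proof. I expect the only genuine obstacle to be that lower-hemicontinuity step: because $P_d$, and hence the feasible region of \eqref{eq:perf-stable-policy}, moves with $d$, one cannot perturb a competitor point directly, and the device of realizing it as the occupancy measure of a fixed policy and appealing to continuity of the resolvent $(I-\gamma P^{\hat\pi})^{-1}$ is the clean way around this.
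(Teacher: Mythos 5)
Your proof is correct, and its skeleton --- Kakutani's theorem applied to the best-response correspondence $\Phi$ on the compact convex set $\mathcal{D}=\{d\ge 0:\sum_{s,a}d(s,a)=1/(1-\gamma)\}$, with nonemptiness and convexity of $\Phi(d)$ dispatched by standard LP facts --- is exactly the paper's. Where you diverge is in the verification of the continuity hypothesis, and your route is arguably the more airtight one. The paper folds the moving constraints into the objective by writing $\phi(d)=\argmax_{\tilde d\in\mathcal{D}}\min_h L(\tilde d,h;M_d)$ and then asserts that ``the minimum of continuous functions is continuous'' before invoking Berge; this is slick but glosses over the fact that $\min_h L(\tilde d,h;M_d)$ is an infimum over an unbounded family and equals $-\infty$ off the feasible set, so the resulting value function is extended-real-valued and not continuous in the sense Berge's theorem requires. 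You instead attack the real difficulty head-on: you prove lower hemicontinuity of the feasible-set correspondence $d\mapsto\mathcal{C}(d)$ by realizing any competitor $\hat d$ as the occupancy measure of its induced stationary policy $\hat\pi=\pi^{\hat d}$ and tracking that occupancy measure under $P_{d_n}$ via continuity of $(I-\gamma P^{\hat\pi})^{-1}$ (valid since $\gamma<1$), which together with upper hemicontinuity of the feasible sets lets Berge's maximum theorem apply in its standard form. What the paper's Lagrangian device buys is brevity and a fixed feasible domain; what your argument buys is a genuinely complete treatment of the step that is the only nontrivial obstacle here, namely that the feasible polytope moves with $d$. One small point worth making explicit if you write this up: you use that every feasible point of the LP at $d$ \emph{equals} the occupancy measure of its induced policy under $P_d$ (not merely induces one), which is the standard bijection between the LP's feasible set and stationary-policy occupancy measures and holds also at states with zero visitation mass.
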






\section{Convergence of Repeated Retraining}\label{sec.convergence_retraining}  
Even though the optimization problem~\eqref{eq:perf-stable-policy} is guaranteed to have a stable solution, it is not clear that repeatedly optimizing this objective converges to such a point. We now consider a regularized version of the optimization problem~\eqref{eq:perf-stable-policy}, and attempt to obtain a stable solution of the regularized problem. In subsection~\eqref{subsec:apx-unregularization} we will show that such a stable solution guarantees approximate stability with respect to the original unregularized objective~\eqref{eq:perf-stable-policy}.
\begin{align}
        \max_{d \ge 0}\ &  \sum_{s,a} d(s,a) \textcolor{blue}{r_d(s,a)} - \frac{\lambda}{2}\norm{d}_2^2\label{eq:regularized-rl-discounting-occupancy}\\
       \textrm{s.t. } & \sum_a d(s,a) = \rho(s) + \gamma \cdot \sum_{s',a} d(s',a) \textcolor{blue}{P_d(s',a,s)}\ \forall s \nonumber
\end{align}

Here $\lambda > 0$ is a constant that determines the strong-concavity of the objective. Before analyzing the behavior of repeatedly optimizing the new objective~\eqref{eq:regularized-rl-discounting-occupancy} we discuss two important issues. First, we consider quadratic regularization for simplicity, and our results can be easily extended to any strongly-convex regularizer. Second, we apply regularization in the occupancy measure space, but regularization in policy space is commonly used~\cite{MBMG+16}. 
%
Since the performatively stable occupancy measure $d_S$ is not known, a common strategy adopted is repeated policy optimization. At time $t$, the learner obtains the occupancy measure $d_t$, and deploys the  policy $\pi_t$ (as defined in \eqref{eq:dtopi}). In response, the environment changes to $P_t = P_{d_t}$ and $r_{t} = r_{d_t}$, and the learning agent solves the following optimization problem to obtain the next occupancy measure $d_{t+1}$.
\begin{align}
        \max_{d\ge 0}\ &  \sum_{s,a} d(s,a) \textcolor{black}{r_t(s,a)} - \frac{\lambda}{2}\norm{d}_2^2\label{eq:regularized-rl-discounting}\\
       \textrm{s.t. } & \sum_a d(s,a) = \rho(s) + \gamma \cdot \sum_{s',a} d(s',a) \textcolor{black}{P_t(s',a,s)}\ \forall s\nonumber
\end{align}

We next show that repeatedly solving the  problem (\ref{eq:regularized-rl-discounting}) converges to a  stable point. 
%
%

\begin{theorem}\label{thm:primal-convergence}
Suppose assumption \ref{sensitivity-assumption} holds with $\lambda > \frac{12S^{3/2}(2\epsilon_r + 5S\epsilon_p)}{(1-\gamma)^4}$. Let $\mu = \frac{12S^{3/2}(2\epsilon_r + 5S\epsilon_p)}{\lambda (1-\gamma)^4}$. Then for any $\delta > 0$ we have
$$
\norm{d_t - d_S}_2 \le \delta \quad \forall t \ge 2 (1-\mu)^{-1}\ln \left( {2}/{\delta (1-\gamma)}\right)
$$
\end{theorem}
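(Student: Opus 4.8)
The plan is to realize a single round of repeated retraining as a contraction on occupancy measures and then appeal to Banach's fixed point theorem. For an occupancy measure $d$, write $(r_d,P_d)$ for the induced environment and let $\Psi(d)$ be the optimizer of~\eqref{eq:regularized-rl-discounting} instantiated with $(r_d,P_d)$; since that objective is $\lambda$-strongly concave over a polytope, $\Psi$ is single-valued, $d_{t+1}=\Psi(d_t)$, and a performatively stable occupancy measure is exactly a fixed point of $\Psi$. It therefore suffices to show $\norm{\Psi(d)-\Psi(d')}_2\le\mu\norm{d-d'}_2$ with $\mu$ as in the statement, the hypothesis on $\lambda$ being precisely $\mu<1$. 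Granting this, $\Psi$ has a unique fixed point $d_S$ and $\norm{d_t-d_S}_2\le\mu^t\norm{d_0-d_S}_2$; every occupancy measure has $\ell_1$-mass $\tfrac{1}{1-\gamma}$, so $\norm{d_0-d_S}_2\le\tfrac{2}{1-\gamma}$, and since $-\ln\mu\ge 1-\mu$, the bound $\mu^t\cdot\tfrac{2}{1-\gamma}\le\delta$ holds as soon as $t\ge(1-\mu)^{-1}\ln\!\big(\tfrac{2}{\delta(1-\gamma)}\big)$, which is implied by the stated threshold (the extra factor $2$ being slack).

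\textbf{Dual reformulation.} The obstacle compared to the supervised case~\cite{PZMH20} is that the feasible polytope $\{d\ge 0:A_Pd=\rho\}$ moves with the environment (here $A_P$ is the flow-constraint matrix, $A_P[s,(s',a')]=\one\set{s'=s}-\gamma P(s',a',s)$), so $\Psi(d)$ and $\Psi(d')$ cannot be compared directly in the primal. Dualizing the $S$ flow constraints, the inner maximization over $d\ge 0$ is explicit and yields the dual objective $g_{r,P}(V)=\rho^\top V+\tfrac{1}{2\lambda}\norm{[\,r-A_P^\top V\,]^+}_2^2$ over $V\in\R^S$, with primal recovery $\Psi(d)=\tfrac{1}{\lambda}[\,r_d-A_{P_d}^\top V^\star(d)\,]^+$, where $V^\star(d)=\argmin_V g_{r_d,P_d}(V)$. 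This $g_{r,P}$ is convex and $\tfrac{1}{\lambda}$-smooth and, crucially, strongly convex around $V^\star$: its Hessian on the active set $\mathcal I=\{(s,a):\Psi(d)(s,a)>0\}$ equals $\tfrac{1}{\lambda}(A_P)_{\mathcal I}(A_P)_{\mathcal I}^\top$, and since each reachable state contributes at least one active action, $(A_P)_{\mathcal I}$ contains a nonsingular $S\times S$ block with smallest singular value at least $\tfrac{1-\gamma}{\sqrt S}$; hence $g_{r,P}$ is $\alpha$-strongly convex near $V^\star$ with $\alpha\gtrsim\tfrac{(1-\gamma)^2}{\lambda S}$. (This is the only place one uses that every state is reachable, e.g.\ that $\rho$ has full support.)

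\textbf{Chaining the sensitivities.} Assumption~\ref{sensitivity-assumption} gives $\norm{r_d-r_{d'}}_2\le\epsilon_r\norm{d-d'}_2$ and $\norm{A_{P_d}-A_{P_{d'}}}_2\le\gamma\epsilon_p\norm{d-d'}_2$. The perturbation bound for strongly convex minimizers then yields $\norm{V^\star(d)-V^\star(d')}_2\le\tfrac{1}{\alpha}\norm{\nabla g_{r_d,P_d}(V^\star(d'))-\nabla g_{r_{d'},P_{d'}}(V^\star(d'))}_2$; since $\nabla g_{r,P}(V)=\rho-A_P\cdot\tfrac{1}{\lambda}[r-A_P^\top V]^+$ (so the $(r_{d'},P_{d'})$-gradient vanishes by primal recovery together with the flow constraint), the gradient gap is controlled by $\norm{r_d-r_{d'}}_2$, $\norm{A_{P_d}-A_{P_{d'}}}_2$, $\norm{A_P}_2=O(\sqrt S)$, and $\norm{V^\star(d')}_2$. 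Plugging back into $\Psi(d)=\tfrac{1}{\lambda}[r_d-A_{P_d}^\top V^\star(d)]^+$ and using that $[\cdot]^+$ is $1$-Lipschitz bounds $\norm{\Psi(d)-\Psi(d')}_2$ by $\tfrac{1}{\lambda}$ times the norm of $(r_d-r_{d'})-A_{P_d}^\top(V^\star(d)-V^\star(d'))-(A_{P_d}-A_{P_{d'}})^\top V^\star(d')$; tracking the $S$- and $(1-\gamma)$-powers through $\alpha^{-1}$, the singular-value bound, $\norm{A_P}_2$, and the $\ell_1/\ell_2$ conversions collapses this into $\norm{\Psi(d)-\Psi(d')}_2\le\mu\norm{d-d'}_2$ with $\mu$ of the claimed form.

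\textbf{Main obstacle.} The delicate point, beyond the bookkeeping, is that $\norm{V^\star}_2$ itself grows with $\lambda$ (the $\ell_2$-penalty pushes $d$ toward the minimum-norm occupancy measure, which forces $V^\star$ to magnitude $\Theta(\lambda)$), so the terms $\tfrac{1}{\lambda}\norm{A_{P_d}}_2\norm{V^\star(d)-V^\star(d')}_2$ and $\tfrac{1}{\lambda}\norm{A_{P_d}-A_{P_{d'}}}_2\norm{V^\star(d')}_2$ are not separately $O(\tfrac{1}{\lambda}\norm{d-d'}_2)$. They must be handled jointly — effectively working with the primal-recovered occupancy measures $\tfrac{1}{\lambda}[r-A_P^\top V^\star]^+$, which stay bounded by $\tfrac{1}{1-\gamma}$ regardless of $\lambda$, and using $A_P\Psi(d)=\rho$ for every $d$ so that the $\Theta(\lambda)$-sized pieces cancel. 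Everything else — the sensitivity assumption, the strong-convexity perturbation lemma, Banach's theorem, and the rate arithmetic — is routine.
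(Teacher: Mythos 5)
Your overall strategy is the one the paper uses: dualize the flow constraints to escape the moving feasible set, show that the map from the deployed occupancy measure to the optimal dual variable is Lipschitz via strong convexity of the dual plus sensitivity of the dual gradient in the model, recover the primal through the stationarity relation, and close with Banach's fixed point theorem and the $\norm{d_0-d_S}_2\le 2/(1-\gamma)$ rate arithmetic. However, two steps in your execution have genuine gaps. First, by keeping $d\ge 0$ inside the inner maximization you obtain the piecewise-quadratic dual $g_{r,P}(V)=\rho^\top V+\tfrac{1}{2\lambda}\norm{\pos{r-A_P^\top V}}_2^2$, which is strongly convex only where the active columns of $A_P$ span. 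Your perturbation inequality $\norm{V^\star(d)-V^\star(d')}_2\le \alpha^{-1}\norm{\nabla g_{r_d,P_d}(V^\star(d'))-\nabla g_{r_{d'},P_{d'}}(V^\star(d'))}_2$ requires $\alpha$-strong convexity of $g_{r_d,P_d}$ on a region containing \emph{both} $V^\star(d)$ and $V^\star(d')$, not just near $V^\star(d)$; since the active sets of $\Psi(d)$ and $\Psi(d')$ can differ, this does not follow from your ``each reachable state has an active action'' argument, and it also smuggles in a full-support assumption on $\rho$ that is not in the theorem. The paper sidesteps this entirely by forming the dual without the positivity multiplier, so the dual is a globally $A(1-\gamma)^2/\lambda$-strongly convex quadratic (Lemma~\ref{lem:strong-convexity} via Lemma~\ref{lem:eigenvalue-bound}) and the perturbation bound holds unconditionally.

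Second, and more seriously, the step that actually produces the contraction constant is missing. You correctly identify that the dual gradient is not globally Lipschitz in $P$ because of its dependence on $\norm{V^\star}$, and that $\norm{V^\star}$ can grow with $\lambda$ so the two offending terms are not separately $O(\lambda^{-1}\norm{d-d'}_2)$; but you then only assert that ``the $\Theta(\lambda)$-sized pieces cancel.'' That cancellation is precisely the quantitative heart of the theorem: without carrying it out you cannot track the powers of $S$ and $(1-\gamma)$ and arrive at $\mu=12S^{3/2}(2\epsilon_r+5S\epsilon_p)/\bigl(\lambda(1-\gamma)^4\bigr)$, nor verify that the hypothesis on $\lambda$ yields $\mu<1$. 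The paper resolves this point by a different device: it proves a $\lambda$-independent norm bound $\norm{h^\star}_2\le 3S/(1-\gamma)^2$ on the optimal dual solution (Lemma~\ref{lem:bound-optimal-dual}), which lets it invoke Lipschitzness of the dual gradient in the model only \emph{locally at the dual optimum} (Lemma~\ref{lem:lipschitz-wrt-M}), and then transfers the resulting dual contraction back to the primal via Lemma~\ref{lem:deviation-d-to-h}. Until you either establish an analogous norm bound for your $V^\star$ or actually execute the claimed cancellation, the contraction inequality $\norm{\Psi(d)-\Psi(d')}_2\le\mu\norm{d-d'}_2$ --- and hence the theorem --- is not proved.
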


Here we discuss some of the main challenges behind the proof of this theorem.
\begin{itemize}
\item The primal objective function (\ref{eq:regularized-rl-discounting}) is strongly concave but the feasible region of the optimization problem changes with each iteration. So we cannot apply the results from performative prediction \cite{PZMH20}, and instead, look at the dual objective which is $A(1-\gamma)^2/\lambda$-strongly convex.
\item Although the dual problem is strongly convex, it does not satisfy Lipschitz continuity w.r.t. $P$. However, we show that the norm of the optimal solution of the dual problem is bounded by $O\left( S/ (1-\gamma)^2\right)$ and this is sufficient to show that the dual objective is Lipschitz-continuous with respect to $P$ at the dual optimal solution. We show that the proof argument used in \citet{PZMH20} works if we replace global Lipschitz-continuity by such local Lipschitz-continuity.
\item Finally, we translate back the bound about the dual solution  to a guarantee about the primal solution ($\norm{d_t - d_S}_2)$ using the strong duality of the optimization problem~\ref{eq:regularized-rl-discounting}. This step crucially uses the quadratic regularization in the primal.
\end{itemize}

Here we make several observations regarding the assumptions required by Theorem~\ref{thm:primal-convergence}. First,
Theorem~\ref{thm:primal-convergence} suggests that for a given sensitivity $(\epsilon_r,\epsilon_p)$ and discount factor $\gamma$, one can  choose the  parameter $\lambda$ so that the convergence to a stable point is guaranteed.
Second,
for a given value of $\lambda$ and $\gamma$ if the sensitivity is small enough, then repeatedly optimizing \ref{eq:regularized-rl-discounting} converges to a stable point.


\subsection{Gradient Ascent Based Algorithm}
We now extend our result for the setting where the learner does not fully solve the optimization problem~\ref{eq:regularized-rl-discounting} every iteration. Rather, the learner takes a gradient step with respect to the changed environment 
every iteration. Let $\calC_t$ denote the set of occupancy measures that are compatible with probability transition function $P_t$.
\begin{align}
    \calC_t = \bigg\{d : &\sum_a d(s,a) = \rho(s) + \gamma  \sum_{s',a} d(s',a) P_t(s',a,s) \ \forall s  
     \textrm{and } d(s,a) \ge 0 \ \forall s,a \bigg\} \label{eq:defn-c-t}
\end{align}
Then the gradient ascent algorithm first takes a gradient step according to the objective function $r_t^\top d - \frac{\lambda}{2} \norm{d}_2^2$ and then projects the resulting occupancy measure onto  $\calC_t$.
\begin{align}
    d_{t+1} = \proj_{\calC_t} \left(d_t + \eta \cdot\left(r_t - \lambda d_t \right) \right) = \proj_{\calC_t} \left( (1-\eta \lambda) d_t + \eta r_t \right) \label{eq:gradient-ascent-pp}
\end{align}
Here $\proj_\calC(v)$ finds a point in $\calC$ that is closest to $v$ in $L_2$-norm. 
We next show that repeatedly taking projected gradient ascent steps with appropriate step size $\eta$ converges to a stable point. 

 
 \begin{theorem}\label{thm:rga-convergence}
 Let $\lambda \ge \max\set{4\epsilon_r, 2S, \frac{20\gamma^2 S^{1.5}(\epsilon_r + \epsilon_p)}{(1-\gamma)^2}}$, step-size $\eta = \frac{1}{\lambda}$ and $\mu = \sqrt{\frac{64\gamma^2 \epsilon_p^2}{(1-\gamma)^4}\left( 1 + \frac{30 \gamma^4 S^2}{(1-\gamma)^4}\right)}$. Suppose assumption~\ref{sensitivity-assumption} holds with $\epsilon_p < \min \set{\frac{\gamma S}{3}, \frac{(1-\gamma)^4}{100 \gamma^3 S}}$. Then for any $\delta > 0$ we have
 $$
 \norm{d_t - d_S}_2 \le \delta \quad \forall t \ge (1-\mu)^{-1}\ln \left( {2}/{\delta(1-\gamma)}\right)
 $$
 \end{theorem}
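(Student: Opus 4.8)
The plan is to show that one round of~\eqref{eq:gradient-ascent-pp} contracts the distance to $d_S$, i.e. $\norm{d_{t+1}-d_S}_2\le\mu\,\norm{d_t-d_S}_2$ with $\mu<1$ as in the statement, and then to iterate. Two preliminary observations set things up. First, with $\eta=1/\lambda$ the update~\eqref{eq:gradient-ascent-pp} collapses to $d_{t+1}=\proj_{\calC_t}(r_t/\lambda)=\argmax_{d\in\calC_t} r_t^\top d-\tfrac\lambda2\norm{d}_2^2$, so it is exactly the full maximizer of~\eqref{eq:regularized-rl-discounting}. Second, by first-order optimality $d_S$, being the maximizer of the $\lambda$-strongly concave objective~\eqref{eq:regularized-rl-discounting-occupancy} over $\calC_S$ (the polytope~\eqref{eq:defn-c-t} with $P_t$ replaced by $P_S:=P_{d_S}$), satisfies $d_S=\proj_{\calC_S}(r_S/\lambda)$ with $r_S:=r_{d_S}$. (For a general step size $\eta\in(0,1/\lambda]$ one would instead carry $v_t=(1-\eta\lambda)d_t+\eta r_t$ in place of $r_t/\lambda$; the structure of the argument is unchanged.)

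By the triangle inequality,
\begin{align*}
 \norm{d_{t+1}-d_S}_2\ \le\ \underbrace{\norm{\proj_{\calC_t}(r_t/\lambda)-\proj_{\calC_S}(r_t/\lambda)}_2}_{\text{(I): feasible set moves}}\ +\ \underbrace{\norm{\proj_{\calC_S}(r_t/\lambda)-\proj_{\calC_S}(r_S/\lambda)}_2}_{\text{(II): objective center moves}}.
\end{align*}
Term (II) is routine: projection onto the fixed convex set $\calC_S$ is nonexpansive, so (II) $\le\tfrac1\lambda\norm{r_t-r_S}_2\le\tfrac{\epsilon_r}{\lambda}\norm{d_t-d_S}_2$ by Assumption~\ref{sensitivity-assumption}, and the hypotheses $\lambda\ge4\epsilon_r$, $\lambda\ge 2S$, and $\lambda\ge\tfrac{20\gamma^2S^{1.5}(\epsilon_r+\epsilon_p)}{(1-\gamma)^2}$ make this contribution small enough to be absorbed into $\mu$.

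The crux is term (I): the two projections are taken onto \emph{different} Bellman-flow polytopes $\calC_t$ and $\calC_S$, whose linear constraints in~\eqref{eq:defn-c-t} differ only through the kernels, with $\norm{P_t-P_S}_2\le\epsilon_p\norm{d_t-d_S}_2$ (Assumption~\ref{sensitivity-assumption}). The key lemma I would prove is that the projection onto such a polytope is \emph{Lipschitz} (not merely H\"older-$\tfrac12$) in the kernel:
\begin{align*}
 \norm{\proj_{\calC_t}(w)-\proj_{\calC_S}(w)}_2\ \le\ \frac{8\gamma}{(1-\gamma)^2}\,\sqrt{1+\tfrac{30\gamma^4S^2}{(1-\gamma)^4}}\;\norm{P_t-P_S}_2 .
\end{align*}
I see two routes. (a) A parametric-quadratic-program / active-set argument on the KKT system of $\min_{d\in\calC_t}\tfrac12\norm{d-w}_2^2$: at the optimum the occupancy iterate has $\ell_1$-norm $\tfrac1{1-\gamma}$ and the Lagrange multiplier of the flow constraints is bounded by $O(S/(1-\gamma)^2)$ --- the same dual-solution bound used in Theorem~\ref{thm:primal-convergence}, and the source of the factor $1+\tfrac{30\gamma^4S^2}{(1-\gamma)^4}$ --- while the linear part of~\eqref{eq:defn-c-t} (an $S\times SA$ matrix $E-\gamma B_t$, with $B_t$ holding $P_t$) is uniformly well-conditioned because the flow constraints are non-degenerate for every MDP, so the solution map is Lipschitz in the constraint data. (b) A policy-space argument: transport the policy $\pi^{\proj_{\calC_S}(w)}$ to its occupancy measure under $P_t$, which is feasible for $\calC_t$ and, by the resolvent identity and $\norm{(I-\gamma P_{\pi})^{-1}}\le\tfrac1{1-\gamma}$ applied twice, lies within $O(\tfrac{\gamma}{(1-\gamma)^2})$ times the kernel perturbation of $\proj_{\calC_S}(w)$; then combine with strong convexity of $\norm{\cdot-w}_2^2$, using the polyhedral structure to remove the square-root loss of the naive estimate. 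Applying the lemma with $w=r_t/\lambda$ and then $\norm{P_t-P_S}_2\le\epsilon_p\norm{d_t-d_S}_2$ turns (I) into exactly $\mu\,\norm{d_t-d_S}_2$.

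Combining (I) and (II) gives $\norm{d_{t+1}-d_S}_2\le\mu\,\norm{d_t-d_S}_2$, and $\epsilon_p<\min\{\tfrac{\gamma S}{3},\tfrac{(1-\gamma)^4}{100\gamma^3S}\}$ is exactly what forces $\mu<1$. To finish I would iterate: occupancy measures have $\norm{d}_1=\tfrac1{1-\gamma}$, so $\norm{d_0-d_S}_2\le\norm{d_0}_1+\norm{d_S}_1\le\tfrac2{1-\gamma}$, whence $\norm{d_t-d_S}_2\le\mu^t\tfrac2{1-\gamma}\le\delta$ as soon as $t\ge\ln\!\big(\tfrac2{\delta(1-\gamma)}\big)/\ln(1/\mu)$; since $\ln(1/\mu)\ge1-\mu$, the stated bound $t\ge(1-\mu)^{-1}\ln(2/\delta(1-\gamma))$ suffices. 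The main obstacle is the polytope-perturbation lemma for term (I): establishing a genuine Lipschitz (not $\tfrac12$-H\"older) dependence of the projection onto the \emph{kernel-dependent} Bellman-flow polytope, with the correct powers of $S$ and $(1-\gamma)^{-1}$, which hinges on the uniform non-degeneracy of the flow constraints. The remaining pieces --- term (II), the reward sensitivity, and the geometric iteration --- are routine.
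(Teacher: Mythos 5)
Your decomposition correctly isolates where the difficulty lives, and the routine pieces --- term (II) via nonexpansiveness of projection onto the fixed set $\calC_S$, the identification $d_{t+1}=\proj_{\calC_t}(r_t/\lambda)$ when $\eta=1/\lambda$, and the geometric iteration from $\norm{d_0-d_S}_2\le 2/(1-\gamma)$ --- are all fine. But the proof has a genuine gap: the polytope-perturbation lemma for term (I) is exactly the content of the theorem, and neither of your two routes establishes it. Route (b) runs head-on into the H\"older-$\tfrac12$ obstruction you name yourself: transporting a feasible point and invoking strong convexity of $\tfrac12\norm{d-w}_2^2$ yields $\norm{\proj_{\calC_t}(w)-\proj_{\calC_S}(w)}_2\lesssim\sqrt{\Delta\,\norm{\proj_{\calC_S}(w)-w}_2}+\Delta$, and the assertion that ``the polyhedral structure removes the square-root loss'' is precisely the missing argument, not a remark. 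Route (a) gestures at the right fact --- the uniform non-degeneracy $\sum_a(\Identity-\gamma M_a)^\top(\Identity-\gamma M_a)\succcurlyeq A(1-\gamma)^2\,\Identity$, which the paper isolates as Lemma~\ref{lem:eigenvalue-bound} --- but a parametric-QP/active-set argument must also contend with the inequality constraints $d\ge 0$, whose active set can change as $P$ varies, so ``the solution map is Lipschitz in the constraint data'' does not follow from conditioning of the equality block alone.

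The paper's proof resolves exactly this point by passing to the dual of the projection program~\eqref{eq:primal-projection}. The dual variable $h\in\R^S$ ranges over a \emph{fixed} feasible region, the dual objective $\calP(\cdot;M,d)$ is $A(1-\gamma)^2$-strongly convex (Lemma~\ref{lem:strong-convexity-projection}, via Lemma~\ref{lem:eigenvalue-bound}), its gradient is Lipschitz in $(P,r,d)$ (Lemma~\ref{lem:dual-smoothness-projection}), and the norm of the dual optimum is bounded (Lemma~\ref{lem:bound-optimal-dual-projection}); combining these gives $\norm{h-\hat h}_2\lesssim\norm{d-\hat d}_2$, and the explicit primal--dual relation $\GD_\eta(d)(s,a)=(1-\eta\lambda)d(s,a)+\eta r(s,a)-h(s)+\gamma\sum_{\tilde s}h(\tilde s)P(s,a,\tilde s)$ (Lemma~\ref{lem:deviation-d-to-h-projection}) transfers the contraction back to the primal with no square-root loss. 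If you want to salvage your primal decomposition, you would in effect have to reprove this dual machinery inside your lemma. Two smaller issues: the paper also proves separately (via the projection theorem) that the fixed point of $\GD_\eta$ is performatively stable, which you get for free only because $\eta=1/\lambda$; and your claim that the $\epsilon_r/\lambda\le\tfrac14$ contribution of term (II) ``can be absorbed into $\mu$'' does not match the stated $\mu$, which contains no additive constant --- the paper's own proof carries a $\sqrt{\tfrac12+\cdots}$ for this reason, so the bookkeeping needs to be made explicit rather than waved through.
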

 
 \textbf{Proof Sketch}: First, the projection step~\ref{eq:gradient-ascent-pp} can be computed through the following convex  program. 
\begin{align}
        \min_{d\ge 0}\  &\frac{1}{2}\norm{d - (1-\eta \lambda) d_t  - \eta r_t }_2^2\label{eq:primal-projection} \\
        \textrm{s.t. } & \sum_a d(s,a) = \rho(s) + \gamma \cdot \sum_{s',a} d(s',a) P_t(s',a,s) \ \forall s \nonumber 
\end{align}
 Even though the objective above is convex, its feasible region changes with time. So we again look at the dual objective which is strongly concave and has a fixed feasible region. Given an occupancy measure $d_t$, let $\GD_{\eta}(d_t)$ be the optimal solution of the problem~\ref{eq:gradient-ascent-pp}. We show that if the step-size $\eta$ is chosen small enough then the operator $\GD_\eta(\cdot)$ is a contraction mapping by first proving the corresponding optimal dual solution forms a contraction, and then using strong duality to transfer the guarantee back to the primal optimal solutions. Finally, we show that the fixed point of the mapping $\GD_\eta(\cdot)$ indeed coincides with the performatively stable solution $d_S$.
 
 \subsection{Finite Sample Guarantees}\label{sec:finite-sample}
So far we assumed that after deploying the policy corresponding to the occupancy measure $d_t$ we observe the updated environment $M_t = (S,A,P_t,r_t,\gamma)$. However, in practice, we do not have access to the true model but only have access to samples from the updated environment. 
Our setting is more challenging than the finite samples setting considered by \citet{PZMH20}. Unlike the supervised learning setting, we do not have access to independent samples from the new environment. At time $t$ we can deploy policy $\pi_t$ corresponding to the occupancy measure $d_t$, and can access trajectories from the new environment $M_t$ only through the policy $\pi_t$. Therefore, at every step, the learner faces an offline reinforcement learning problem where the policy $\pi_t$ is a behavioral policy.

A standard assumption in offline reinforcement learning is overlap in occupancy measure between the behavior policy and a class of target policies~\cite{MS08}. Without such overlap, one can get no information regarding the optimal policy from the trajectories visited by the behavioral policy. We make the following assumption regarding the overlap in occupancy measure between a deployed policy and the optimal policy in the changed environment.

\begin{assumption}\label{asn:overlap}
Given an occupancy measure $d$, let $\rho^\star_d$ be the optimal occupancy measure maximizing ~\ref{eq:regularized-rl-discounting}, and $\bar{d}$ is the occupancy measure of $\pi^d$ in $P_d$. Then there exists $B > 0$ so that
$$
\max_{s,a} \abs{\frac{\rho^\star_d(s,a)} {\bar{d}(s,a)} } \le B \ \quad \forall d
$$
\end{assumption}
When there is no performativity, $\bar{d}$ equals $d$ and the assumption states overlap between the occupancy measure of the deployed policy and the optimal policy. This is the standard assumption of single policy coverage in offline reinforcement learning. When there is performativity, $\bar{d}$ can be different than $d$ since the deployed policy $\pi^d$ might have occupancy measure different than $d$ in the changed model $P_d$, and in that case we require overlap between $\bar{d}$ and the optimal occupancy measure. Assumption~\eqref{asn:overlap} is also significantly weaker than the uniform coverage assumption which requires $\max_d \max_{s,a} \overline{d}(s,a) > 0$ as it allows the possibility that $\bar{d}(s,a)=0$ as long as the optimal policy doesn't visit state $s$ or never takes action $a$ in state $s$. 


\textbf{Data}: We assume the following model of sample generation at time $t$. Given the occupancy measure $d_t$ let  the normalized occupancy measure be $\tilde{d}_t (s,a) = (1-\gamma) {d_t(s,a)}$. First, sample a state, action pair $(s_i,a_i)$ i.i.d as $(s_i,a_i) \sim \tilde{d}_t$, then reward $r_i \sim r_t(s_i,a_i)$, and finally the next state $s_i' \sim P_t(\cdot|s_i,a_i)$. We  have access to $m_t$ such tuples at time $t$ and the data collected at time is given as $\calD_t = \set{(s_i, a_i, r_i, s_i')}_{i=1}^{m_t}$. We would like to point out that this is a standard model of sample generation in offline reinforcement learning (see e.g. \cite{MS08, FSM10, XJ21}).
%
%
%
%

With finite samples, the learner needs to optimize an empirical version of the optimization problem~\ref{eq:regularized-rl-discounting}, and the choice of such an empirical objective is important for convergence. We follow the recent literature on offline reinforcement learning~\cite{ZHHJ+22} and consider the  Lagrangian of \cref{eq:regularized-rl-discounting}.
\begin{align*}
    &\calL(d,h; M_t) =  d^\top {r}_t - \frac{\lambda}{2} \norm{d}_2^2 + \sum_s h(s)  \left( -\sum_a d(s,a) + \rho(s) + \gamma \cdot \sum_{s',a} d(s',a) {P}_t(s',a,s) \right)\nonumber \\
    = &- \frac{\lambda}{2} \norm{d}_2^2 + \sum_s h(s) \rho(s)+
     \sum_{s,a} d_t(s,a) \frac{d(s,a)}{d_t(s,a)}  \left(r_t(s,a) - h(s) + \gamma \sum_{s'} P_t(s,a,s') h(s') \right) 
\end{align*}
The above expression motivates the following empirical version of the Lagrangian.
\begin{align}
    \hat{\calL}(d,h; M_t) &=  - \frac{\lambda}{2} \norm{d}_2^2 + \sum_s h(s) \rho(s) + \sum_{i=1}^{m_t} \frac{d(s_i,a_i)}{d_t(s_i,a_i)}  \frac{ r(s_i,a_i) - h(s_i) + \gamma h(s_i')  }{m_t(1-\gamma)}\label{eq:empirical-Lagrangian}
\end{align}
We repeatedly solve for a saddle point of the objective above.
\begin{align}\label{eq:repeated-optim-finite}
    (d_{t+1}, h_{t+1}) = \argmax_{d} \ \argmin_{h} \hat{\calL}(d,h; M_t)
\end{align}
The next theorem provides convergence guarantees of the repeated optimization procedure~\eqref{eq:repeated-optim-finite} provided that the number of samples is large enough.

\begin{theorem}[Informal Statement]\label{thm:finite-samples-convergence}
Suppose assumption~\ref{sensitivity-assumption} holds with $\lambda \ge \frac{24 S^{3/2} (2\epsilon_r + 5S\epsilon_p)}{(1-\gamma)^4}$, and assumption~\ref{asn:overlap} holds with parameter $B$. Let $\mu = \frac{24 S^{3/2} (2\epsilon_r + 5S\epsilon_p)}{(1-\gamma)^4}$. For any $\delta > 0$, and error probability $p$ if we repeatedly solve the optimization problem~\eqref{eq:repeated-optim-finite} with number of samples $m_t \ge \tilde{O}\left( \frac{A^2 B^2}{\delta^4 (2\epsilon_r + 5S\epsilon_p)^2} \ln\left( \frac{t}{p}\right)\right)$\footnote{Here we ignore terms that are logarithmic in $S, A$, and $1/\delta$.} then with probability at least $1-p$ we have
$$
\norm{d_t - d_S}_2 \le \delta \ \forall t \ge (1-\mu)^{-1}\ln \left( {2}/{\delta(1-\gamma)}\right)
$$
\end{theorem}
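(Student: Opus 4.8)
The plan is to split the one-round update error into a \emph{population} part, handled by the contraction argument behind Theorem~\ref{thm:primal-convergence}, and a \emph{statistical} part, handled by an offline-RL estimation bound built on Assumption~\ref{asn:overlap}. For an occupancy measure $d$, let $\Phi(d)=\rho^\star_d$ denote the exact optimal occupancy measure of the regularized problem~\eqref{eq:regularized-rl-discounting} in the environment $M(d)$ --- this is precisely the map whose iteration is analyzed in Theorem~\ref{thm:primal-convergence}, and since $\lambda\ge\frac{24S^{3/2}(2\epsilon_r+5S\epsilon_p)}{(1-\gamma)^4}$ is twice the threshold there, that analysis shows $\Phi$ is a contraction with modulus at most $\mu/2\le\tfrac12$ whose unique fixed point is $d_S$. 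Writing $\xi_t:=\norm{d_{t+1}-\Phi(d_t)}_2$ for the statistical error at round $t$ (with $d_{t+1}$ the primal component of the empirical saddle point~\eqref{eq:repeated-optim-finite}), the triangle inequality gives $\norm{d_{t+1}-d_S}_2\le\xi_t+\tfrac{\mu}{2}\norm{d_t-d_S}_2$. So it suffices to guarantee that $\xi_t$ is at most a suitable constant multiple of $\delta$ simultaneously for all rounds on a high-probability event: combined with the crude initialization $\norm{d_0-d_S}_2\le\tfrac{2}{1-\gamma}$ (every occupancy measure has $\ell_1$-mass $\tfrac1{1-\gamma}$), iterating this recursion exactly as in the proof of Theorem~\ref{thm:primal-convergence} then yields $\norm{d_t-d_S}_2\le\delta$ for all $t\ge(1-\mu)^{-1}\ln(2/\delta(1-\gamma))$. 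Everything thus reduces to bounding $\xi_t$.

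To bound $\xi_t$ I would proceed in three steps. (i) \emph{Strong duality and a bound on the dual optimum.} Since the primal~\eqref{eq:regularized-rl-discounting} is $\lambda$-strongly concave over a non-empty polyhedron, $\Phi(d_t)$ equals the $d$-component of the saddle point of the population Lagrangian $\calL(\cdot,\cdot;M_t)$, and (as already noted after Theorem~\ref{thm:primal-convergence}) the optimal dual variable satisfies $\norm{h^\star}=O\!\big(S/(1-\gamma)^2\big)$, so the saddle point may be sought over a bounded box $\calH$. (ii) \emph{From function error to iterate error.} Using $\lambda$-strong concavity of $\calL$ in $d$ (and convexity in $h$), a standard argument shows $\xi_t^2=O\!\big(\tfrac{1}{\lambda}\sup_{d\ge0,\,h\in\calH}\abs{\hat\calL(d,h;M_t)-\calL(d,h;M_t)}\big)$; it is this square root that is responsible for the $\delta^4$ in the sample bound. (iii) \emph{Uniform concentration of the empirical Lagrangian.} The difference $\hat\calL-\calL$ is a centered empirical mean of the terms $\tfrac{d(s_i,a_i)}{d_t(s_i,a_i)}\cdot\tfrac{r_i-h(s_i)+\gamma h(s_i')}{m_t(1-\gamma)}$ over tuples drawn according to the round-$t$ sampling distribution; by Assumption~\ref{asn:overlap} the relevant importance ratios are $O(B)$, and by (i) the factor $r_i-h(s_i)+\gamma h(s_i')$ is $O\!\big(S/(1-\gamma)^2\big)$, so a uniform concentration / covering argument over $\calH$ --- this is exactly the offline-RL estimation step, for which I would invoke the analysis of~\cite{ZHHJ+22} --- gives, with probability $1-p'$, $\sup_{d,h}\abs{\hat\calL-\calL}=\tilde O\!\big(AB\cdot\mathrm{poly}(S,\tfrac1{1-\gamma})\cdot\sqrt{\ln(1/p')/m_t}\big)$. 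Chaining (ii) and (iii), setting $p'=p/t^2$, and solving for $m_t$ so that $\xi_t$ is small enough gives the stated requirement $m_t\ge\tilde O\!\big(\tfrac{A^2B^2}{\delta^4(2\epsilon_r+5S\epsilon_p)^2}\ln(t/p)\big)$, where $(2\epsilon_r+5S\epsilon_p)^2$ enters the denominator because $\lambda$ sits at its threshold $\Theta\!\big(S^{3/2}(2\epsilon_r+5S\epsilon_p)/(1-\gamma)^4\big)$ and the error transfers through $\xi_t^2=O(\sup|\hat\calL-\calL|/\lambda)$ with a squared sample complexity (lower-order polynomial factors in $S$ and $1/(1-\gamma)$ being suppressed). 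A union bound over $t=1,2,\dots$ makes all the required events hold together with probability at least $1-p$, which closes the recursion and proves the theorem.

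I expect the statistical term $\xi_t$ to be the main obstacle, and within it two points stand out. First, because the Bellman-flow constraint is only \emph{dualized} (not imposed) in $\hat\calL$, the empirical maximizer $d_{t+1}$ need not be exactly feasible for $M_t$, so the transfer ``uniform control of $\hat\calL-\calL$ $\Rightarrow$ control of $\norm{d_{t+1}-\Phi(d_t)}_2$'' cannot be carried out by comparing feasible points and must instead exploit the strongly-concave-in-$d$/convex-in-$h$ saddle structure --- this is where the unavoidable square root (hence $\delta^4$) enters. Second, the importance weights $d(s_i,a_i)/d_t(s_i,a_i)$ are bounded only through Assumption~\ref{asn:overlap}, so one must carefully match the ratio appearing in $\hat\calL$ with the one the assumption controls (the optimal occupancy $\rho^\star_{d_t}$, respectively the occupancy $\bar d_t$ of $\pi^{d_t}$ in $P_{d_t}$, against the round-$t$ sampling distribution); handling this, together with the $O(S/(1-\gamma)^2)$ range bound on $h^\star$ so that the empirical process has finite range, is the technical heart of the argument.
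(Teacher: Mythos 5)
Your proposal is correct and follows essentially the same route as the paper's proof: the same triangle-inequality decomposition into a contraction term (inherited from Theorem~\ref{thm:primal-convergence} with $\lambda$ set to twice its threshold so the modulus is $\le 1/2$) plus a statistical term, the same transfer from uniform Lagrangian error to iterate error via $\lambda$-strong concavity and the saddle-point structure (Lemma~\ref{lem:apx-saddle-point}), yielding the square root responsible for $\delta^4$, and the same concentration step combining the $O(B)$ importance ratios from Assumption~\ref{asn:overlap}, the $O(S/(1-\gamma)^2)$ dual-norm bound, a covering argument over $\calH$ and the occupancy measures (including the multiplicative net forced by the overlap assumption, which you correctly flag), and a union bound with failure probability $p/t^2$ per round. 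The two technical obstacles you single out are exactly the ones the paper's Lemmas~\ref{lem:concentration-Lagrangian} and~\ref{lem:apx-saddle-point} are built to handle.
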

\textbf{Proof Sketch}:
\begin{itemize}
    \item We first show that the empirical version of the Lagrangian $\hat{\calL}(d,h; M_t)$ is close to the true Lagrangian $\calL(d,h; M_t)$ with high probability. This is shown using the Chernoff-Hoeffding inequality and an $\epsilon$-net argument over the variables. Here we use the fact that for the dual variables we can just consider the space $\calH = \set{h: \norm{h}_2 \le {3S}/{(1-\gamma)^2}}$ as the optimal solution is guaranteed to exist in this space.
    \item We then show that an optimal saddle point of the empirical Lagrangian~\ref{eq:empirical-Lagrangian} is close to the optimal solution of the true Largrangian. In particular, if $\abs{\calL(d,h; M_t) - \hat{\calL}(d,h; \hat{M}_t)} \le \epsilon$ then we are guaranteed that $\norm{d_{t+1} - \GD(d_t)}_2 \le O(\epsilon)$. Here $\GD(d_t)$ denotes the solution obtained from optimizing the exact function.
    \item By choosing an appropriate number of samples, we can make the error term $\epsilon$ small enough, and establish the following recurrence relation:
    $
    \norm{d_{t+1} - d_S}_2 \le \beta \delta + \beta \norm{d_t - d_S}_2
    $ for $\beta < 1/2$.
    The rest of the proof follows the main idea of the proof of Theorem 3.10 from ~\cite{MPZH20}. If $\norm{d_t - d_S}_2 > \delta$ then we get a contraction mapping. On the other hand, if $\norm{d_t - d_S}_2 \le \delta$ then subsequent iterations cannot move $d_t$ outside of the $\delta$-neighborhood of $d_S$.
\end{itemize}

\subsection{Approximating the Unregularized Objective}\label{subsec:apx-unregularization}
Theorem~\eqref{thm:primal-convergence} shows that repeatedly optimizing objective~\eqref{eq:regularized-rl-discounting-occupancy} converges to a stable policy (say $d^\lambda_S$) with respect to the regularized objective~\eqref{eq:regularized-rl-discounting-occupancy}. Here we show that the solution $d^\lambda_S$ approximates the performatively stable and performatively optimal policy with respect to the  unregularized objective~\eqref{eq:perf-stable-policy}. 

\begin{theorem}\label{eq:apx-stable-point}
There exists a choice of the regularization parameter ($\lambda$) such that repeatedly optimizing objective~\eqref{eq:regularized-rl-discounting} converges to a policy ($d^\lambda_S$) with the following guarantee\footnote{$C(\tilde{d})$ denotes the set of occupancy measures that are feasible with respect to $\mathcal{P}(\tilde{d}) = P_{\tilde{d}}$.}
\begin{align*}
\sum_{s,a} r_{d^\lambda_S} (s,a) d^\lambda_S(s,a) \ge  \max_{d \in \mathcal{C}(d^\lambda_S)} \sum_{s,a} r_{d^\lambda_S} (s,a) d(s,a) - O\left({ S^{3/2}(\epsilon_r + S \epsilon_p)}/{(1-\gamma)^6} \right)
\end{align*}
\end{theorem}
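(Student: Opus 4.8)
The inequality compares two optimization problems over the \emph{same} feasible set, the polytope $\mathcal{C}(d^\lambda_S)$ of occupancy measures consistent with $P_{d^\lambda_S}$: on one side the regularized problem~\eqref{eq:regularized-rl-discounting-occupancy}, whose optimizer at its own induced reward is by definition the stable point $d^\lambda_S$, and on the other side the unregularized linear program $\max_{d\in\mathcal{C}(d^\lambda_S)} r_{d^\lambda_S}^\top d$. The two objectives differ only by the concave penalty $\tfrac{\lambda}{2}\norm{d}_2^2$, so the entire suboptimality gap is governed by the size of that penalty. Hence the plan is: (i) choose an admissible $\lambda$ via Theorem~\ref{thm:primal-convergence} so that repeated retraining converges to a well-defined $d^\lambda_S$; (ii) use optimality of $d^\lambda_S$ for the regularized problem to bound the gap by $\tfrac{\lambda}{2}\norm{d^\star}_2^2$, where $d^\star$ is the unregularized maximizer over $\mathcal{C}(d^\lambda_S)$; and (iii) bound $\norm{d^\star}_2^2$ uniformly using the fact that every discounted occupancy measure has fixed $\ell_1$-mass $\tfrac{1}{1-\gamma}$.

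\textbf{Step 1 (choice of $\lambda$ and convergence).} First I would fix $\lambda = \tfrac{24 S^{3/2}(2\epsilon_r + 5S\epsilon_p)}{(1-\gamma)^4}$, which is a constant factor above the threshold required by Theorem~\ref{thm:primal-convergence}. That theorem then guarantees that the iterates obtained by repeatedly solving~\eqref{eq:regularized-rl-discounting} converge to the performatively stable occupancy measure $d^\lambda_S$ of the regularized problem, and in particular $d^\lambda_S \in \argmax_{d\in\mathcal{C}(d^\lambda_S)}\bigl(r_{d^\lambda_S}^\top d - \tfrac{\lambda}{2}\norm{d}_2^2\bigr)$.

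\textbf{Step 2 (gap from the penalty).} Let $d^\star \in \argmax_{d\in\mathcal{C}(d^\lambda_S)} r_{d^\lambda_S}^\top d$; this maximizer exists since $\mathcal{C}(d^\lambda_S)$ is a nonempty compact polytope and the objective is linear. Comparing $d^\lambda_S$ with $d^\star$ under the regularized objective gives $r_{d^\lambda_S}^\top d^\lambda_S - \tfrac{\lambda}{2}\norm{d^\lambda_S}_2^2 \ge r_{d^\lambda_S}^\top d^\star - \tfrac{\lambda}{2}\norm{d^\star}_2^2$, and discarding the nonnegative term $\tfrac{\lambda}{2}\norm{d^\lambda_S}_2^2$ yields $r_{d^\lambda_S}^\top d^\lambda_S \ge r_{d^\lambda_S}^\top d^\star - \tfrac{\lambda}{2}\norm{d^\star}_2^2$; note the reward vector is identical on both sides, so no boundedness assumption on $r$ is needed. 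Finally, summing the flow constraints defining $\mathcal{C}(d^\lambda_S)$ over all states shows $\norm{d}_1 = \sum_{s,a} d(s,a) = \tfrac{1}{1-\gamma}$ with $d\ge 0$ for any feasible $d$, hence $\norm{d^\star}_2^2 \le \norm{d^\star}_1^2 = \tfrac{1}{(1-\gamma)^2}$. Substituting this and the chosen $\lambda$ gives $r_{d^\lambda_S}^\top d^\lambda_S \ge \max_{d\in\mathcal{C}(d^\lambda_S)} r_{d^\lambda_S}^\top d - \tfrac{\lambda}{2(1-\gamma)^2} = \max_{d\in\mathcal{C}(d^\lambda_S)} r_{d^\lambda_S}^\top d - O\bigl(S^{3/2}(\epsilon_r+S\epsilon_p)/(1-\gamma)^6\bigr)$, which is the claim.

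\textbf{Main obstacle.} There is no real technical difficulty here; the one point requiring care is that $\lambda$ must be simultaneously large enough for the contraction in Theorem~\ref{thm:primal-convergence} and small enough that $\lambda/(1-\gamma)^2$ matches the advertised error rate. These requirements are compatible precisely because the convergence threshold in Theorem~\ref{thm:primal-convergence} is itself $\Theta\bigl(S^{3/2}(\epsilon_r+S\epsilon_p)/(1-\gamma)^4\bigr)$, so multiplying it by $1/(1-\gamma)^2$ lands exactly at the stated rate. If in addition one wants the comparison to the \emph{performatively optimal} policy hinted at in the surrounding text, the extra ingredient is the standard performative-prediction estimate that a stable point and the performative optimum have objective values within $O(\epsilon_r \cdot \mathrm{diam})$ of each other over a feasible set of diameter $O(1/(1-\gamma))$; that estimate is the only mildly delicate component, but it is not required for the inequality as stated.
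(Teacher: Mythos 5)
Your proposal is correct and follows essentially the same route as the paper's own proof: use the definition of the regularized stable point to compare against the unregularized maximizer over $\mathcal{C}(d^\lambda_S)$, drop the nonnegative penalty on the stable point's side, bound the remaining penalty via $\norm{d}_2^2 \le \norm{d}_1^2 = (1-\gamma)^{-2}$ for any occupancy measure, and substitute the $\lambda = \Theta\left(S^{3/2}(\epsilon_r + S\epsilon_p)(1-\gamma)^{-4}\right)$ required by Theorem~\ref{thm:primal-convergence}. The only differences are cosmetic (the paper uses the constant $12$ rather than $24$ in $\lambda$ and bounds $\norm{d}_2^2$ by comparing squares of the normalized measure to the measure itself, which is equivalent to your $\ell_2$-vs-$\ell_1$ argument).
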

Note that as $\epsilon = \max\set{\epsilon_r, \epsilon_p}$ converges to zero, the policy $d^\lambda_S$ also approaches a performatively stable solution with respect to the original unregularized objective.

\begin{theorem}[Informal Statement]\label{thm:apx-perf-optimal}
Let $d_{PO}$ be the performatively optimal solution with respect to the  unregularized objective and let $\eps = \max\set{\epsilon_r, \epsilon_p}$. Then there exists a value of $\lambda$ such that repeatedly optimizing objective ~\eqref{eq:regularized-rl-discounting} converges to a policy ($d^\lambda_S$) with the following guarantee
\begin{align*}
\sum_{s,a} r_{d^\lambda_S}(s,a) d^\lambda_{S}(s,a) \ge \sum_{s,a} r_{d_{PO}}(s,a) d_{PO}(s,a)  - O\left(\max\set{\frac{S^{5/3} A^{1/3} \epsilon^{2/3}}{(1-\gamma)^{14/3}}, \frac{\epsilon S}{(1-\gamma)^4} } \right)
\end{align*}
\end{theorem}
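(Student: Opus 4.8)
The plan is to run repeated optimization of the $\lambda$-regularized problem~\eqref{eq:regularized-rl-discounting} for a suitably chosen $\lambda$, invoke Theorem~\ref{thm:primal-convergence} so that the iterates converge to the regularized performatively stable occupancy measure $d^\lambda_S$, and then compare performative values. Write $g(d;\tilde d):=r_{\tilde d}^\top d-\tfrac{\lambda}{2}\|d\|_2^2$ for the per-round regularized objective induced by $\tilde d$, and $\calC(\tilde d)$ for the corresponding occupancy polytope, so $d^\lambda_S=\argmax_{d\in\calC(d^\lambda_S)}g(d;d^\lambda_S)$; let $d^\lambda_{PO}$ be a performative optimum of the regularized problem, i.e.\ a self-consistent $d$ maximizing $g(d;d)$. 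Because every occupancy measure satisfies $\|d\|_2\le\|d\|_1=1/(1-\gamma)$, and because $d_{PO}$ is itself a self-consistent competitor so that $g(d^\lambda_{PO};d^\lambda_{PO})\ge g(d_{PO};d_{PO})$, we get $r_{d_{PO}}^\top d_{PO}-g(d^\lambda_{PO};d^\lambda_{PO})\le\tfrac{\lambda}{2(1-\gamma)^2}$, while trivially $g(d^\lambda_S;d^\lambda_S)\le r_{d^\lambda_S}^\top d^\lambda_S$. Hence $r_{d_{PO}}^\top d_{PO}-r_{d^\lambda_S}^\top d^\lambda_S\le\tfrac{\lambda}{2(1-\gamma)^2}+\big(g(d^\lambda_{PO};d^\lambda_{PO})-g(d^\lambda_S;d^\lambda_S)\big)$, and the task reduces to bounding the \emph{stability gap} in parentheses.

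For the stability gap I would follow the performatively-stable-versus-optimal comparison of \citet{PZMH20}, adapted to a feasible region that moves with the iterate. Splitting it as $\big(g(d^\lambda_{PO};d^\lambda_{PO})-g(d^\lambda_{PO};d^\lambda_S)\big)+\big(g(d^\lambda_{PO};d^\lambda_S)-g(d^\lambda_S;d^\lambda_S)\big)$, the first term equals $(r_{d^\lambda_{PO}}-r_{d^\lambda_S})^\top d^\lambda_{PO}\le\tfrac{\eps_r}{1-\gamma}\|d^\lambda_{PO}-d^\lambda_S\|_2$ by Assumption~\ref{sensitivity-assumption}. The second term would be nonpositive if $d^\lambda_{PO}\in\calC(d^\lambda_S)$, but the transition map---and hence the polytope---shifts; here I would invoke a polytope-perturbation estimate (the same kind of estimate underlying the dual-contraction proof of Theorem~\ref{thm:primal-convergence}): since $\|P_{d^\lambda_{PO}}-P_{d^\lambda_S}\|_2\le\eps_p\|d^\lambda_{PO}-d^\lambda_S\|_2$, there is $\hat d\in\calC(d^\lambda_S)$ with $\|\hat d-d^\lambda_{PO}\|_2\le\kappa\,\eps_p\,\|d^\lambda_{PO}-d^\lambda_S\|_2$ for an explicit $\kappa=\mathrm{poly}(S,(1-\gamma)^{-1})$, and then the second term is at most $g(d^\lambda_{PO};d^\lambda_S)-g(\hat d;d^\lambda_S)\le\big(\sqrt{SA}+\tfrac{\lambda}{1-\gamma}\big)\kappa\,\eps_p\,\|d^\lambda_{PO}-d^\lambda_S\|_2$, using $\|r\|_\infty\le1$ and $\|\hat d\|_2,\|d^\lambda_{PO}\|_2\le1/(1-\gamma)$. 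Running the same comparison backwards and invoking $\lambda$-strong concavity of $g(\cdot;d^\lambda_S)$ at its maximizer $d^\lambda_S$ then gives---once $\kappa\eps_p$ is small enough, which is where the hypotheses on $\eps_p$ enter---a bound $\|d^\lambda_{PO}-d^\lambda_S\|_2=O\big((\eps_r+\lambda\eps_p)\,\mathrm{poly}(S,A,(1-\gamma)^{-1})/\lambda\big)$, and substituting back bounds the stability gap by $O\big((\eps_r+\lambda\eps_p)^2\,\mathrm{poly}(S,A,(1-\gamma)^{-1})/\lambda\big)$.

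Adding the regularization bias $\tfrac{\lambda}{2(1-\gamma)^2}$ to the stability-gap bound gives, with $\eps=\max\{\eps_r,\eps_p\}$, a total of the form $\Theta\!\big(\lambda/(1-\gamma)^2\big)+O\!\big(\eps^2\,\mathrm{poly}/\lambda\big)+O\!\big(\lambda\,\eps^2\,\mathrm{poly}\big)$, which I would minimize over $\lambda$ subject to $\lambda$ exceeding the convergence threshold $\lambda_0=\Theta\!\big(S^{3/2}(\eps_r+S\eps_p)/(1-\gamma)^4\big)$ required by Theorem~\ref{thm:primal-convergence}. When the unconstrained minimizer is admissible the balance produces the cube-root-type term $S^{5/3}A^{1/3}\eps^{2/3}/(1-\gamma)^{14/3}$, and when $\lambda_0$ is the binding constraint the regularization bias dominates and produces the linear term $\eps S/(1-\gamma)^4$, which accounts for the maximum in the statement. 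The main obstacle, I expect, is the moving feasible region: one must establish the polytope-perturbation lemma with an explicit polynomial $\kappa$, and carry out the stability-gap comparison carefully enough that $\lambda$ does not enter the bound on $\|d^\lambda_{PO}-d^\lambda_S\|_2$ in a way that spoils the $\lambda$-trade-off; the remainder---tracking all the $S$, $A$, and $(1-\gamma)^{-1}$ powers and reconciling them with those in Theorem~\ref{eq:apx-stable-point}---is routine but tedious.
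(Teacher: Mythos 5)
Your overall architecture matches the paper's: bound the loss by a regularization bias of $\lambda/(2(1-\gamma)^2)$ plus the gap between the regularized performative optimum $d^\lambda_{PO}$ and the regularized stable point $d^\lambda_S$, then tune $\lambda$ subject to the convergence threshold of Theorem~\ref{thm:primal-convergence}, with the two branches of the $\max$ coming from whether that threshold binds. The execution diverges where the moving feasible region enters. The paper never compares primal objectives across the two polytopes: it passes to the dual via strong duality, where the feasible set is all of $\R^S$, and combines Lemma~\ref{lem:ubd-model-deviation} (Lipschitzness of the dual in the model, carrying a $1/\lambda$ prefactor) with Lemma~\ref{lem:deviation-between-stability-optimality} ($\norm{d^\lambda_S-d^\lambda_{PO}}_2 = O(\epsilon\cdot\mathrm{poly}/\lambda)$) to obtain a stability gap of order $T_1/\lambda^2$ with $T_1=O(\epsilon^2\cdot\mathrm{poly})$. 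You instead stay in the primal and repair the infeasibility of $d^\lambda_{PO}$ in $\mathcal{C}(d^\lambda_S)$ via a polytope-perturbation estimate that you assert rather than prove (it is plausible --- essentially a simulation lemma --- but it is a missing ingredient).

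The concrete problem is that your final arithmetic does not follow from your own intermediate bounds. Your stability gap is $O\bigl((\epsilon_r+\lambda\epsilon_p)^2\,\mathrm{poly}/\lambda\bigr)$, whose dominant term scales as $\epsilon^2/\lambda$, not $\epsilon^2/\lambda^2$; the extra factor of $\lambda$ relative to the paper comes from the primal gradient $r-\lambda d$ having norm $\Theta(\lambda/(1-\gamma))$, which is exactly the dependence the dual route avoids. Balancing $\lambda/(1-\gamma)^2$ against $C\epsilon^2/\lambda$ yields $\lambda^\star\propto\epsilon$ and a total of $O(\epsilon\cdot\mathrm{poly})$ --- no exponent $2/3$ appears anywhere in that trade-off. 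The term $S^{5/3}A^{1/3}\epsilon^{2/3}/(1-\gamma)^{14/3}$ arises precisely from minimizing $T_1/\lambda^2+\lambda/(2(1-\gamma)^2)$, so your claim that your balance ``produces the cube-root-type term'' is a non sequitur. Either you have in fact derived a bound linear in $\epsilon$ with different (and not obviously dominated) polynomial factors in $S$, $A$, $(1-\gamma)^{-1}$ --- in which case you are proving a formally different statement and must check that it implies the one asked --- or one of your intermediate bounds (most likely the perturbation constant $\kappa$, or the claim that $\lambda$ does not contaminate $\norm{d^\lambda_{PO}-d^\lambda_S}_2$) is too optimistic. As written, the derivation of the stated bound does not close.
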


We again see that as $\epsilon$ converges to zero, $d^\lambda_S$ approaches a performatively optimal solution with respect to the original objective.
The proof of theorem~\eqref{thm:apx-perf-optimal} uses the following bound on the  distance between the performatively stable solution and the optimal solution.
$
\norm{d^\lambda_S - d^\lambda_{PO}}_2 \le O\left( \frac{S^{2} \sqrt{A}}{\lambda (1-\gamma)^4}\left( \epsilon_r \left( 1 + \gamma \sqrt{S}\right) +  \epsilon_p  \frac{\gamma {S} }{(1-\gamma)^2}\right) \right)
$

We believe that the bounds of theorems~\eqref{thm:apx-perf-optimal} and \eqref{eq:apx-stable-point} can be improved with more careful analysis. However, the error bound should grow as $\gamma$ decreases. This is because the diameter (or max $L_2$ norm) of occupancy measure is  most $1/(1-\gamma)^2$ and even in performative prediction such an approximation bound grows with the diameter of the model.\footnote{For example, see proposition E.1 ~\cite{PZMH20}, which is stated for diameter $1$ and convex loss function. }

\section{Experiments}\label{sec.experiments}

\captionsetup[figure]{aboveskip=1pt,belowskip=1pt}
\begin{figure*}[!t]
\captionsetup[subfigure]{aboveskip=-2pt,belowskip=0pt}
\centering
    \begin{minipage}[c]{0.32\textwidth}
     \captionsetup{type=figure}
        \caption{RPO $\lambda=1$}
        \label{fig : betas}
        \includegraphics[width=\linewidth]{./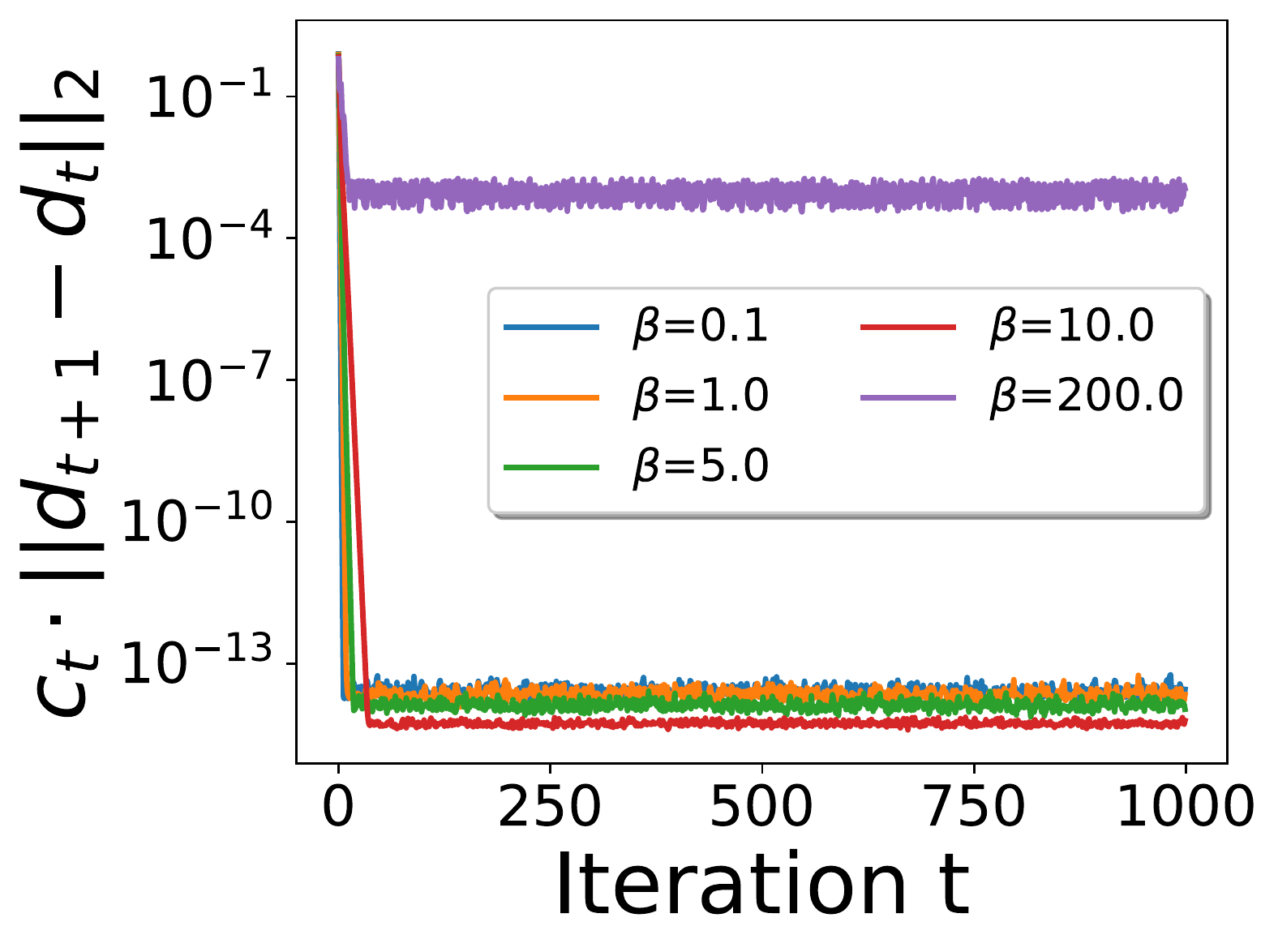}
    \end{minipage}\hfill\hfill%
    \begin{minipage}[c]{0.32\textwidth}
     \captionsetup{type=figure}
        \caption{RPO $\beta=10$}
        \label{fig : lambdas}
        \includegraphics[width=\textwidth]{./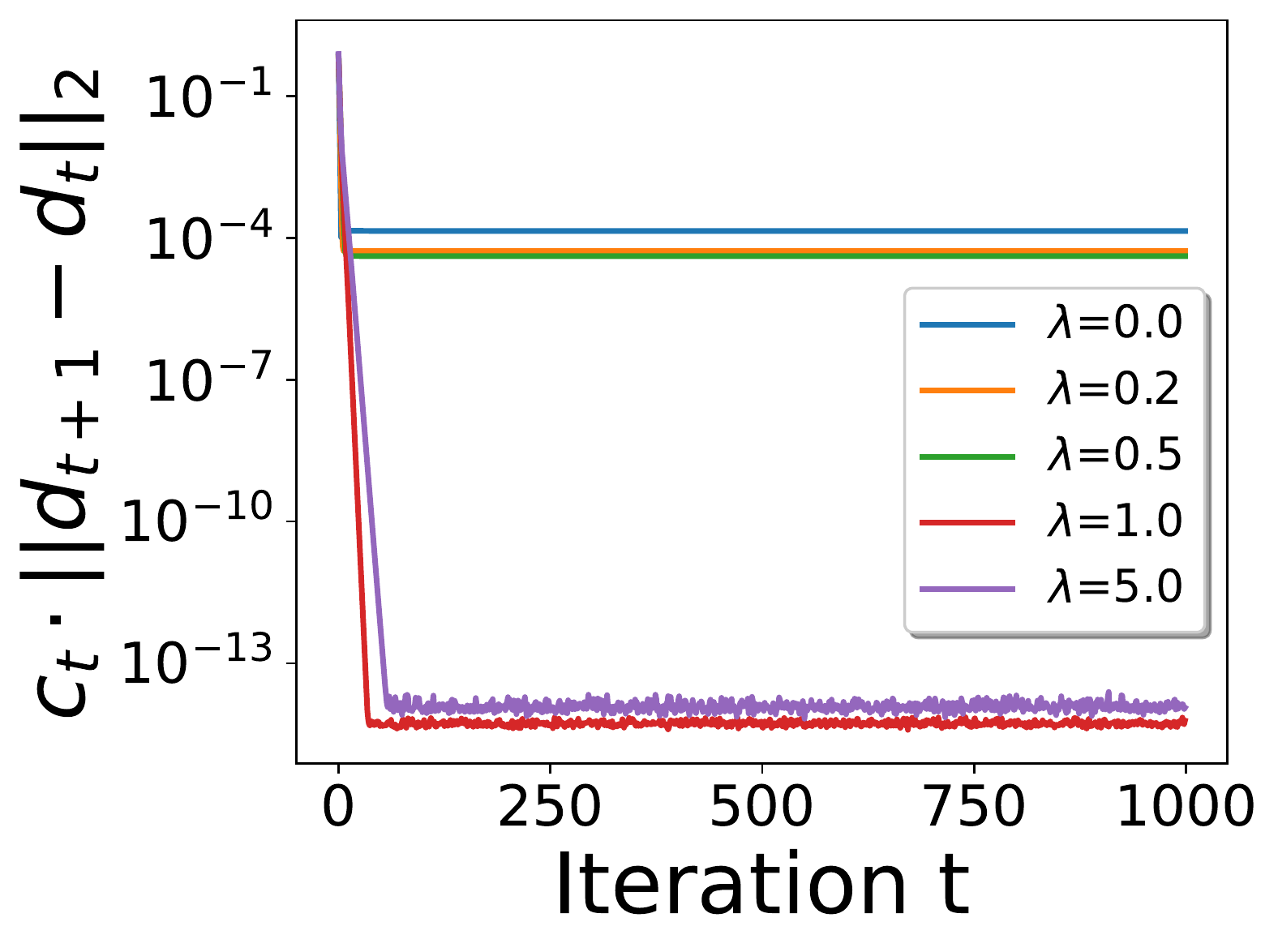}
    \end{minipage}\hfill%
    \begin{minipage}[c]{0.32\textwidth}
     \captionsetup{type=figure}
        \caption{RGA $\lambda=1$, $\beta=5$}
        \label{fig : etas_diff}
        \includegraphics[width=\textwidth]{./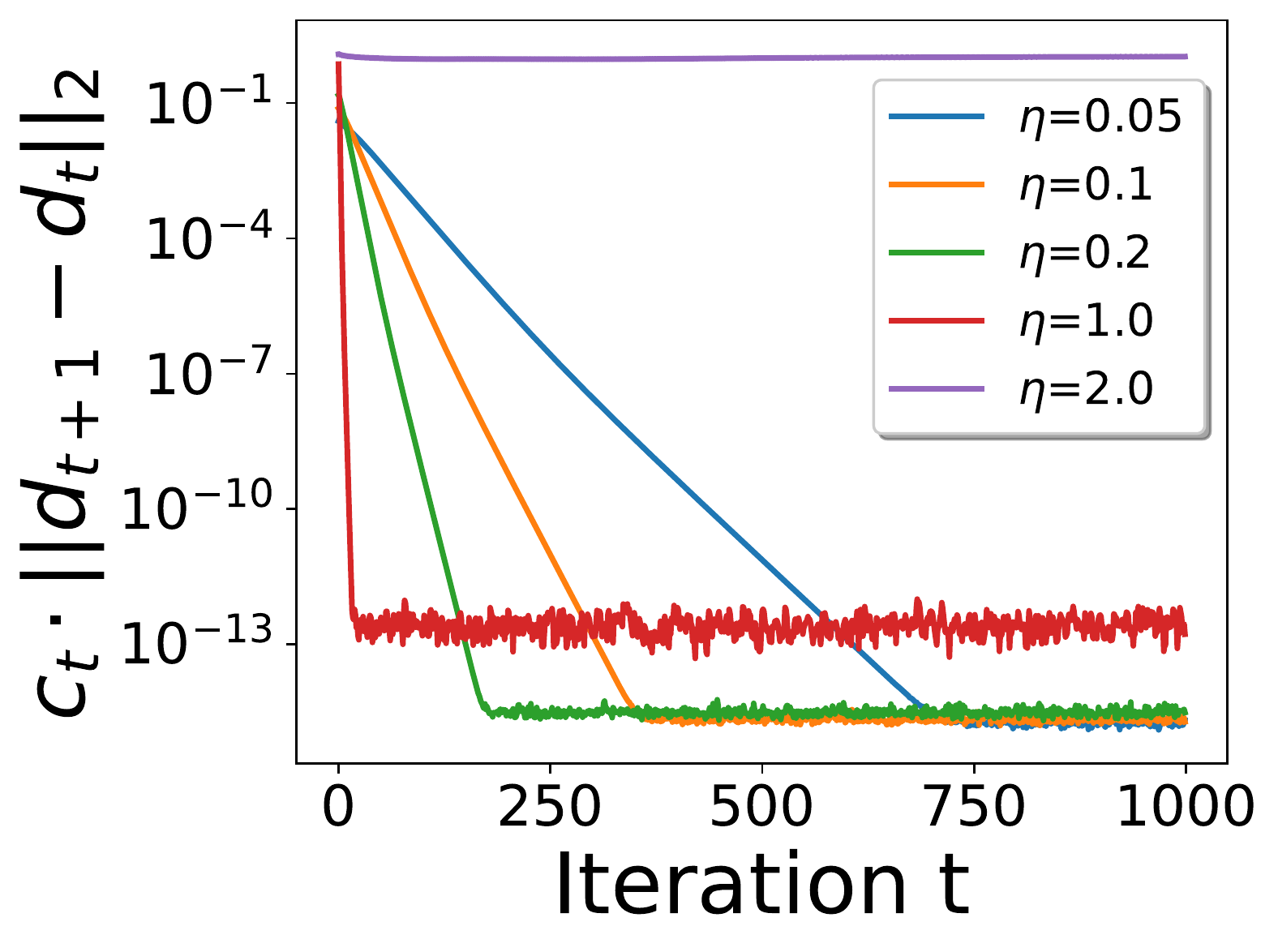}
    \end{minipage}\\
    \begin{minipage}[c]{0.32\textwidth}
      \captionsetup{type=figure}
        \caption{RGA Gap $\lambda=1$, $\beta=5$}
        \label{fig : etas_gap}
        \includegraphics[width=\textwidth]{./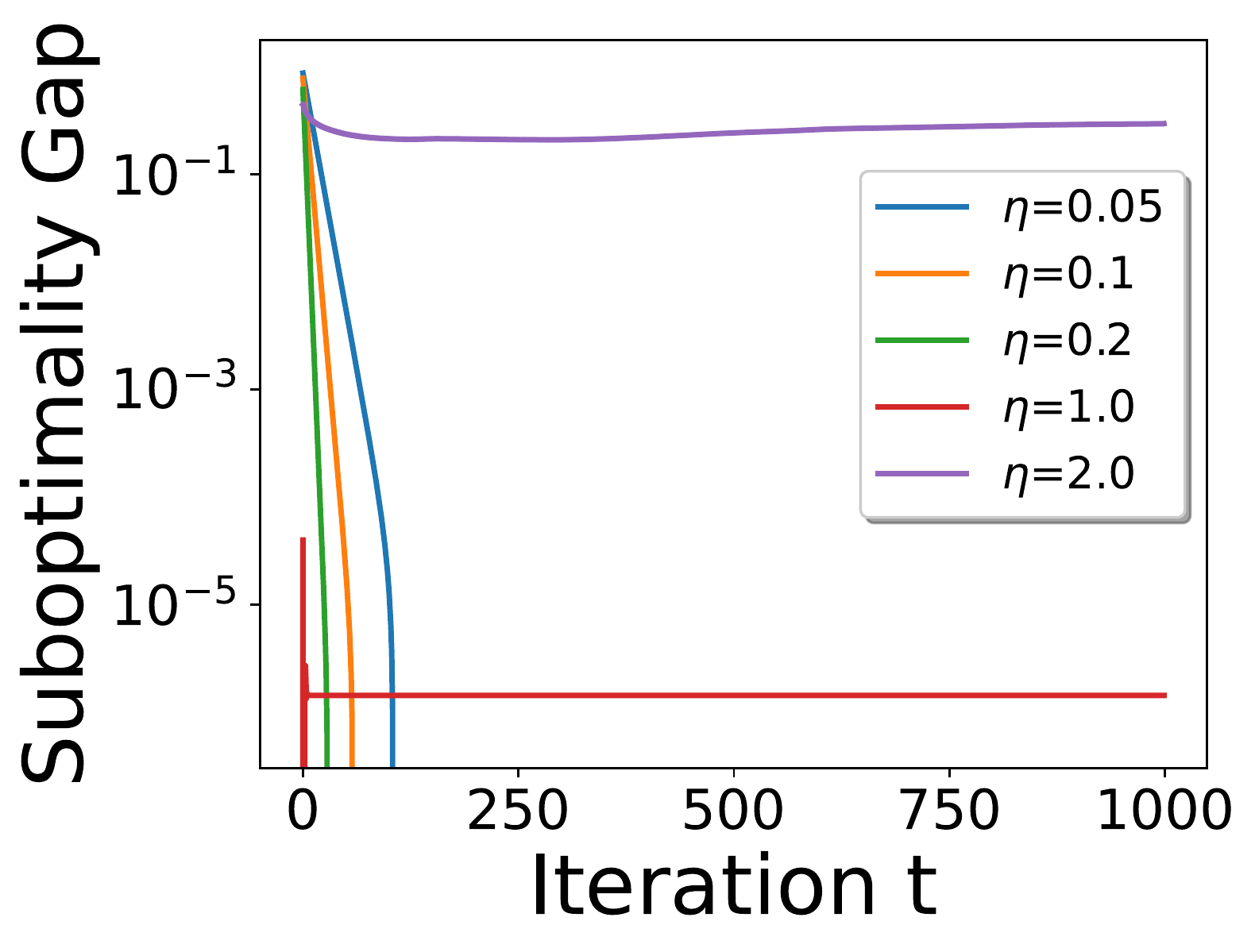}
    \end{minipage}\hfill%
    \begin{minipage}[c]{0.32\textwidth}
    \captionsetup{type=figure}
        \caption{ROL FS $\lambda=1$, $\beta=5$}
        \label{fig : m_rpo}
        \includegraphics[width=\textwidth]{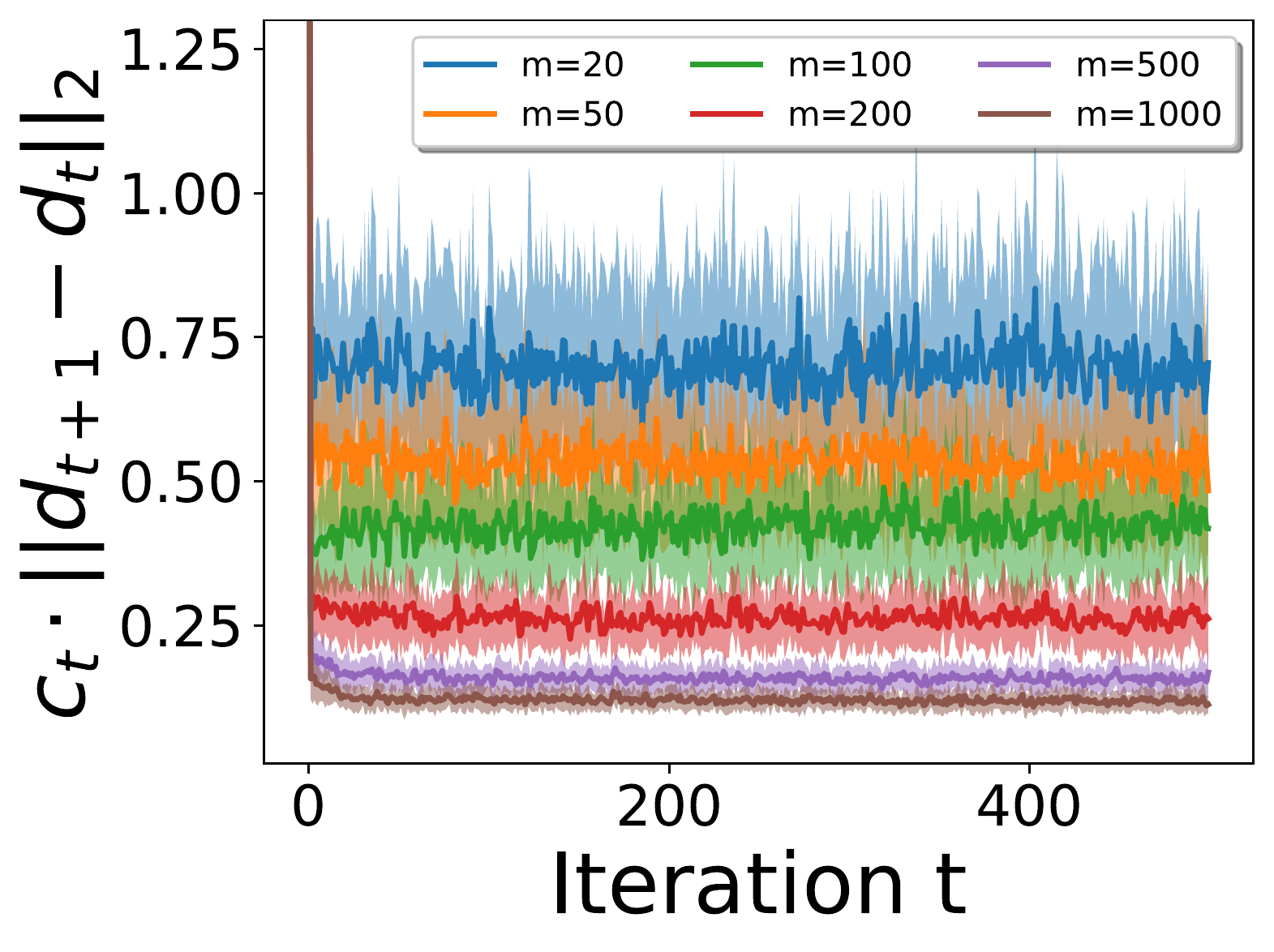}
    \end{minipage}\hfill%
    \begin{minipage}[c]{0.32\textwidth}
     \captionsetup{type=figure}
        \caption{RPO FS $\lambda=1$, $\beta=5$}
        \label{fig : m_ga}
        \includegraphics[width=\textwidth]{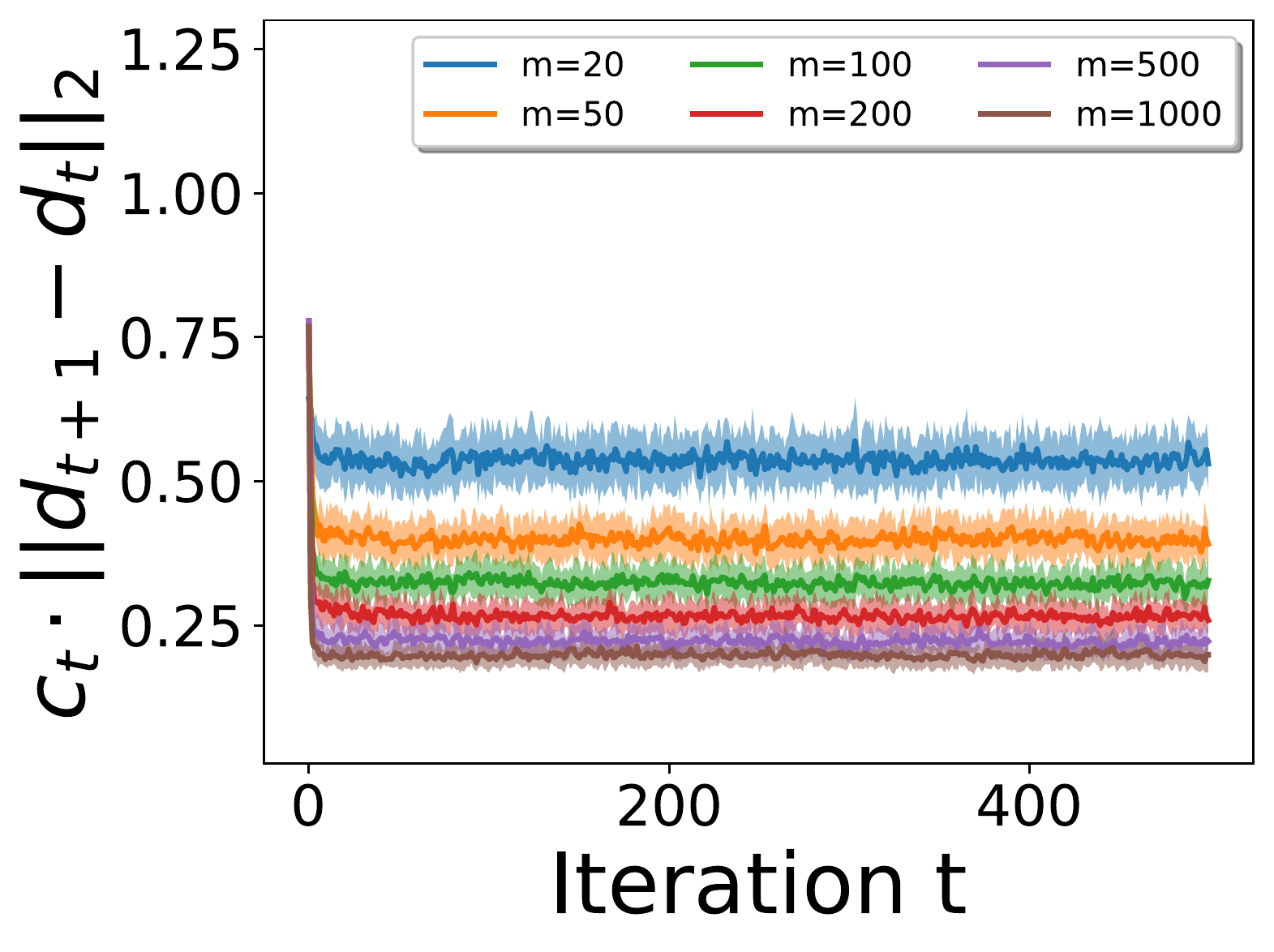}
    \end{minipage}
\caption{
Experimental results for \textit{Gridworld}. All plots were generated with $\gamma=0.9$ and $1000$ iterations.
We normalized the distance between iterations $t$ and $t+1$ with $c_t = \frac{1}{||d_t||_2}$. 
RPO stands for repeated policy optimization, RGA for repeated gradient ascent, ROL for repeatedly solving (empirical) Lagrangian and FS for finite samples. The parameters are $\lambda$ (regularization), $\beta$ (smoothness), $\eta$ (step-size), and $m$ (number of trajectories). 
} 
\label{fig : main_plots}
\end{figure*}

In this section, we experimentally evaluate the performance of various repeated retraining methods, and determine the effects of various parameters 
on convergence. All experiments are conducted on a  grid-world environment proposed by \cite{triantafyllou2021blame}.\footnote{The original single-agent version of this environment can be found in \cite{voloshin2019empirical}.}
We next describe  how this environment is adapted  for simulating performative reinforcement learning.
%


\begin{wrapfigure}{l}{3cm}
\captionsetup{type=figure}
\captionsetup{aboveskip=2pt,belowskip=1pt}
\includegraphics[width=0.2\textwidth]{./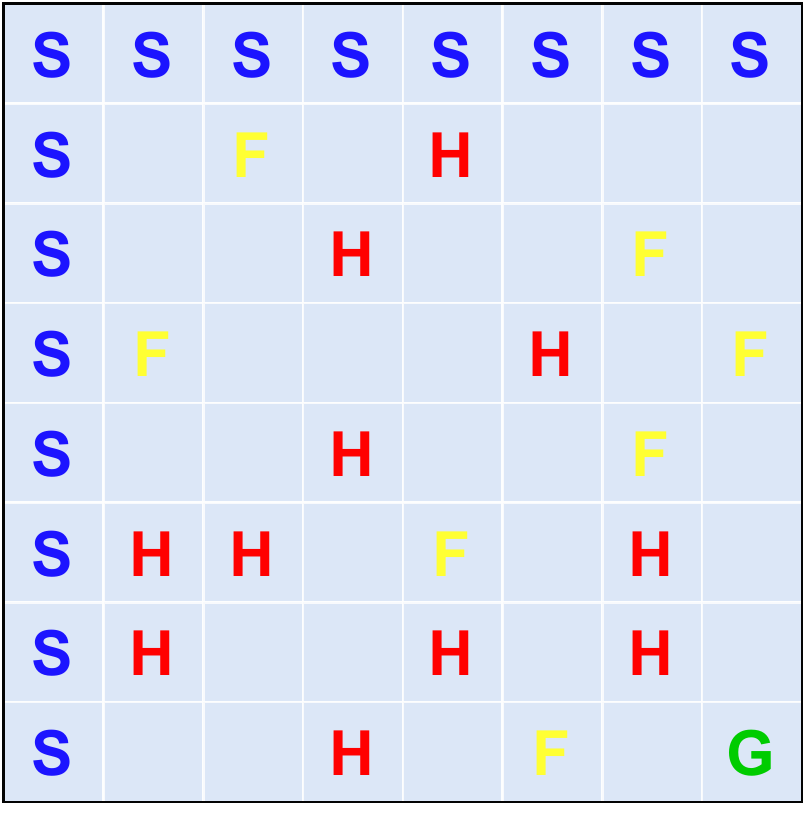}
\captionof{figure}{Gridworld}
\label{fig: gridworld}
\end{wrapfigure}
\textbf{Gridworld:} We consider the grid-world environment shown in Fig.~\ref{fig: gridworld},
in which $n+1$ agents share control over an actor. The agents' objective is to guide the actor from some initial state $S$ to the terminal state $G$, while minimizing their total discounted cost. We will call the first agent the principal, and the other $n$ agents the followers.
In each state, the agents select their actions simultaneously.
The principal agent, \agentone, chooses the direction of the actor's next
move by taking one of four actions (left, right, up, and down). 

Any other agent, $A_j$ decides to either intervene or not in \agentone's action. If the majority of the $n$ follower agents choose not to intervene, then the actor moves one cell towards the direction chosen by \agentone, otherwise it moves one cell towards the new direction chosen by the majority of the followers.
Note that the principal and the followers' policies determine a policy for the actor agent.

%

The cost at each state is defined according to the grid-world shown in Fig. \ref{fig: gridworld}. If the actor visits a blank or an $S$ cell, then all the agents  receive a small negative reward equal to $-0.01$. If an $F$ cell is visited, then a slightly more increased cost equal to $-0.02$ is imposed and for $H$ cells a severe penalty of
$-0.5$ is inflicted. 
Additionally, whenever any $A_j$ decides to intervene, it pays an additional cost of $-0.05$.

\textbf{Response Model:} We implement agent~\agentone~as a learner who performs repeated retraining. The initial policy of agent~\agentone~is an $\epsilon$-optimal single-agent policy ($\epsilon=0.1$) assuming that no other agent intervenes. Subsequently, agent~\agentone~performs one type of repeated retraining (e.g. gradient ascent).The follower agents, on the
other hand, always respond to the policy of \agentone~according to a response model. In particular, given a policy of \agentone~(say $\pi_1$), we first compute an optimal $Q$-value function for agent $A_j$, $Q^{*|\pi_1}_j$. Note that $Q^{*|\pi_1}_j$ is computed w.r.t. a perturbed grid-world, and which was generated by performing a random cell perturbation on the grid-world of Fig. \ref{fig: gridworld}. The perturbed grid-worlds are different for different agents. Then we compute an average $Q$-function defined as $\bar{Q}^{*|\pi_1} = \frac{1}{n} \sum_{j=2}^{n+1} \bar{Q}^{*|\pi_1}_j$.
Then a policy $\pi_2$ adopted by  the Boltzmann softmax operator with parameter $\beta$.
Then a policy $\pi_2$ is determined by the Boltzmann softmax operator with temperature parameter $\beta$,
$\pi_2(a_i|s) = \frac{e^{\beta \cdot \bar{Q}^{*|\pi_1}(s, a_i)}}{\sum_j e^{\beta \cdot \bar{Q}^{*|\pi_1}(s, a_j)}}$.
Note that the new policy $\pi_2$ effectively plays the role of a changing environment by responding to the majority of the $n$ follower agents. Additionally, parameter $\beta$ controls the smoothness of the changing environment, as viewed by \agentone.

\textbf{Repeated Policy Optimization:} We first consider the scenario where agent \agentone~gets complete knowledge of the updated reward and probability transition function at time $t$. In our setting, this means that \agentone~decides on $\pi_1^t$, all the other agents respond according to the softmax operator and jointly determines $\pi_2^t$, and then agent \agentone~observes the new policy $\pi_2^t$. This lets \agentone~to compute new probability transition function $P_t$, and reward function $r_t$, and solve optimization problem~\ref{eq:regularized-rl-discounting}. The solution is the new occupancy measure $d_1^{t+1}$ for \agentone, and \agentone~computes new policy $\pi^{t+1}_1$ for time $t+1$ by normalization using \cref{eq:dtopi}.
Plot \ref{fig : betas} shows the convergence results of the repeated policy optimization  for different values of  $\beta$, with $\lambda$ fixed to $1$. We see that if the response function of the environment (i.e., the policy of agent \agenttwo) is not smooth enough (e.g., for $\beta=200$), the algorithm fails to converge to a stable solution.
In Plot \ref{fig : lambdas} we fix $\beta$ to $10$ and vary the value of parameter $\lambda$ (strength of regularization). We can see that the algorithm converges only for large enough values of the regularization constant $\lambda$. Furthermore, we observe that the convergence is faster as $\lambda$ increases. 

\textbf{Repeated Gradient Ascent:} We now see what happens if agent \agentone~uses repeated gradient ascent instead of fully optimizing the objective each iteration. Here also \agentone~fully observes $\pi^t_2$ (hence $P_t$ and $r_t$) at time $t$. However, instead of full optimization, \agentone~performs a projected gradient ascent step~\eqref{eq:gradient-ascent-pp} to compute the next occupancy measure $d^{t+1}_1$. 
Plot \ref{fig : etas_diff} shows the performance of repeated gradient ascent for different values of the step-size $\eta$. We observe that when $\eta$ is small (e.g,. $\eta \le O(1/\lambda)$), the learner converges to a stable solution. But  the learner diverges for large $\eta$. Additionally, the convergence is faster for step-size closer to $1/\lambda$ (as suggested by Theorem~\ref{thm:rga-convergence}). 
Since, repeated gradient ascent does not fully solve the optimization problem~\ref{eq:regularized-rl-discounting}, we also plot the suboptimality gap of the current solution~\ref{fig : etas_gap}. This is measured as the difference between the objective value for the best feasible solution (w.r.t. $M_t$) and current solution ($d^t_1$), and then normalized by the former. As the step-size $\eta$ is varied, we see a trend  similar to Plot \ref{fig : etas_diff}.

\textbf{Effect of Finite Samples:} Finally, we investigate the scenario where \agentone~does not know $\pi_2^t$ at time $t$, and obtains samples from the new environment $M_t$ by deploying $\pi^t_1$. In our experiments, instead of sampling from the occupancy measure, \agentone~directly samples $m$ trajectories of fixed length (up to $100$) following policy $\pi^t_1$. 
We considered two approaches for obtaining the new policy $\pi^{t+1}_1$. First, \agentone~solves the empirical Lagrangian~\eqref{eq:empirical-Lagrangian} through an iterative method. In particular, we use an alternate optimization based method (algorithm~\eqref{alg:alternate-optimization}) where $h_n$ is updated through follow the regularized leader (FTRL) algorithm and $d_n$ is updated through best response. \footnote{Since the objective~\eqref{eq:empirical-Lagrangian} is linear in $h$ and concave in $d$, following standard arguments~\cite{FS96} it is straightforward to show that algorithm~\eqref{alg:alternate-optimization} finds an $\varepsilon$-approximate saddle point in $O(SAB/(1-\gamma)^2 \varepsilon^2)$ iterations.}

\begin{algorithm}[!h]
\caption{Alternating Optimization for the Empirical Lagrangian\label{alg:alternate-optimization}}
\begin{algorithmic}
    \STATE Set $H = \frac{3S}{(1-\gamma)^2}$, and $\calH = \set{h : \norm{h}_2 \le H}$.
    \FOR{$n=1,2,..,N$}
    {
        \STATE  $h_n = \argmin_{h \in \calH} \sum_{n'=1}^{n-1} \left\langle \nabla_h \hat{\calL}(d_{n'}, h; M_t), h \right \rangle + \beta \norm{h}_2^2$ 
        \STATE  $d_n = \arg\max_{d \ge 0,\ d(s,a) \le B \hat{d}_t(s,a) \forall s,a} \hat{\calL}(d, h_n; M_t)$ 
    }
    \ENDFOR
    \STATE Return $\bar{d} = \frac{1}{N} \sum_{n=1}^N d_n$.
    \end{algorithmic}
\end{algorithm}
%
%

Second,  \agentone~computes  estimates of probability transition function ($\widehat{P}_{t}$), and reward function ($\widehat{r}_t$), and solves \cref{eq:regularized-rl-discounting} with these estimates.
For both versions, we ran our experiments with $20$ different seeds, and  plots \ref{fig : m_rpo} and \ref{fig : m_ga} show the average errors along with the standard errors for the two approaches. For both settings, we observe that as $m$ increases, the algorithms eventually converge to a smaller neighborhood around the stable solution. However, for large number of samples ($m=500$ or $1000$), directly solving the Lagrangian (figure~\ref{fig : m_rpo}) gives improved result.

\section{Conclusion}\label{sec.conclusion}

In this work, we introduce the framework of performative reinforcement learning and study under what conditions repeated retraining methods (e.g., policy optimization, gradient ascent) converges to a stable policy. In the future, it would be interesting to extend our framework to handle high dimensional state-space, and general function approximation. 
The main challenge with general function approximation is that a stable policy might not exist, so the first step would be to characterize under what conditions there is  a fixed point. Moreover, most RL algorithms with function approximation work in the policy space. So, another challenge lies in generalizing optimization problem~\ref{eq:regularized-rl-discounting} to handle representations of states and actions. 

Another interesting question is to resolve the hardness of finding stable policy with respect to the unregularized objective. To the best of our knowledge, this question is unresolved even for performative prediction with just convex loss function. It could be interesting to explore connections between our repeated optimization procedure and standard reinforcement learning methods, e.g., policy gradient methods~\cite{MBMG+16, NJG17}. However, we note that policy gradient methods typically work in the policy space, and might not even converge to a stable point under changing environments. Finally, for the finite samples setting, it would be interesting to use offline reinforcement learning algorithms~\cite{levine2020offline} for improving the speed of convergence to a stable policy.

\printbibliography
\appendix

\section{Additional Information on Experimental Setup and Results}

In this section, we provide additional information on the experimental setup (Section \ref{sec.app_exp_info}), as well as additional experimental results (Section \ref{sec.app_add_res}). We also provide information regarding the total amount of compute time and the type of resources that were used (Section \ref{sec.app_time_res}).

\subsection{Additional Information on Experimental Setup}\label{sec.app_exp_info}

\textbf{Gridworld:} The initial state of the actor in the grid-world is selected uniformly at random from the
cells denoted by $S$. Additionally, the actor remains at its current cell in case it attempts a move that would lead it outside the grid-world. Regarding the reward function, all the  agents receive a positive reward equal to $+1$ whenever the actor reaches the terminal state $G$.

\textbf{Response Model:} The response model of a follower agent is based on a perturbed grid-world  instead of the one in Fig. \ref{fig: gridworld}. In other words, each of the $n$ follower agents  sees different cell costs than the ones \agentone~sees. As a result, they might respond to the policy of \agentone, by adopting a policy that performs unnecessary or even harmful interventions w.r.t. the grid-world of Fig. \ref{fig: gridworld}.
A perturbed grid-world  is generated from the grid-world of Fig. \ref{fig: gridworld} with the following procedure. First,
%
$G$ and $S$ cells stay the same between the two grid-worlds. Then, any blank, $F$ or $H$ cell remains unchanged with probability $0.7$, and with probability $0.3$ we perturb its type to blank , $F$ or $H$ (the perturbation is done uniformly at random).

\subsection{Additional Experimental Results}\label{sec.app_add_res}

In this section, we provide additional insights on the interventional policies of the follower agents. The repeated retraining method we use in these experiments is the repeated policy optimization. More specifically, we present a visual representation of the limiting environment i.e. the majority of the agents' policy in the limit, i.e., after the method has converged to a stable solution. The configurations are set to $\lambda=1$, $\beta=5$, and we vary discount factor $\gamma$.

As mentioned in Section \ref{sec.experiments}, the policy of the follower agents can be thought of as a changing environment that responds to the policy updates of \agentone. To visualize how this environment looks like in the limit, we depict in Figure \ref{fig : limiting_envs} several limiting policies of the follower agents.  
From the Figures \ref{fig : gamma=.5_env},  and \ref{fig : gamma=.9_env} we observe that for smaller discount factor, the majority of the follower agents tend to intervene closer to the goal state. 
\captionsetup[figure]{belowskip=0pt}
\begin{figure}[!h]
\centering
    \begin{minipage}[c]{0.35\textwidth}
        \includegraphics[width=\textwidth]{./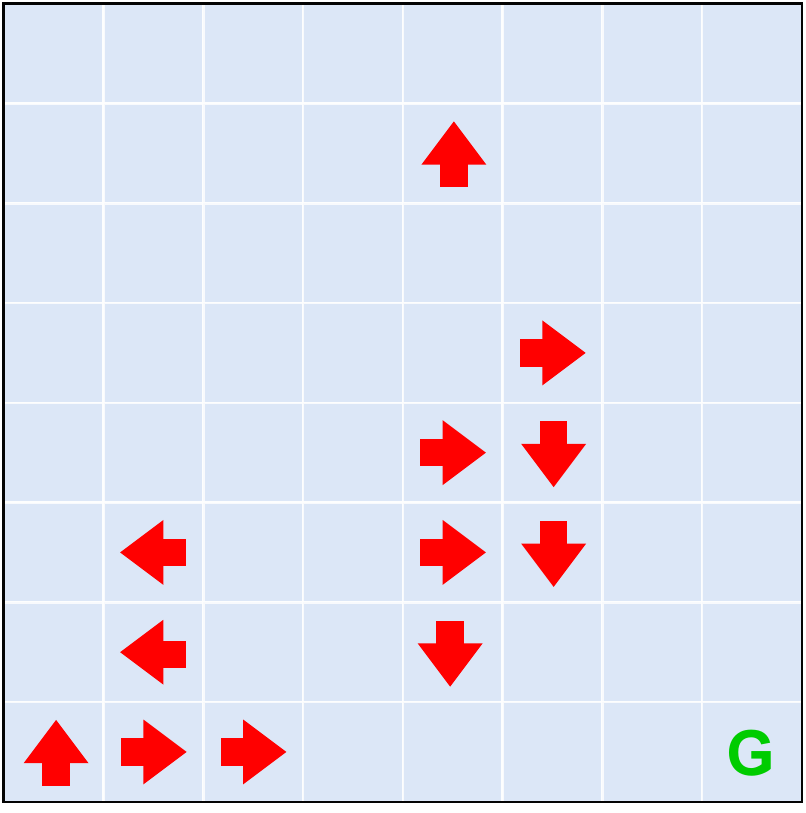}
        \captionsetup{type=figure}
        \caption{$\gamma=0.5$}
        \label{fig : gamma=.5_env}
    \end{minipage}\hspace{1cm}
    \begin{minipage}[c]{0.35\textwidth}
        \includegraphics[width=\textwidth]{./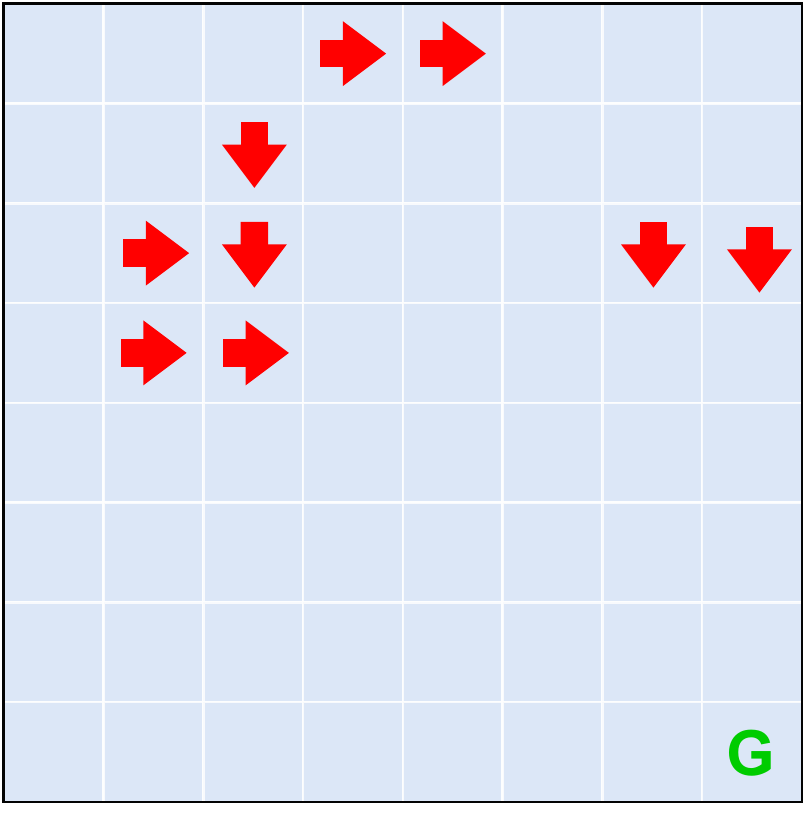}
        \captionsetup{type=figure}
        \caption{$\gamma=0.9$}
        \label{fig : gamma=.9_env}
    \end{minipage}
\caption{
Figures \ref{fig : gamma=.5_env},  and \ref{fig : gamma=.9_env} visualize two instances of the interventional policy of agent \agenttwo~in the \textit{Gridworld} environment. All figures correspond to the majority of the followers' policy   at convergence for various values of the discount factor $\gamma$. Empty cells denote states where majority of the agents' most probable action is to not intervene. For cells with a red arrow the (highest probability) action of the majority of the follower agents is to intervene by forcing the actor to move one cell towards the direction of the arrow.
} 
\label{fig : limiting_envs}
\end{figure}
\subsection{Total Amount of Compute and Type of Resources}\label{sec.app_time_res}

All experiments were conducted on a computer cluster with machines equipped with $2$ Intel Xeon E5-2667 v2 CPUs with 3.3GHz (16 cores) and 50 GB RAM. Table \ref{tab.times} reports the total computation times for our experiments (Section \ref{sec.experiments}). Note that at each iteration of the repeated gradient ascent experiment, apart from the gradient step a full solution of the optimization problem~\ref{eq:regularized-rl-discounting} was also computed, in order to report the suboptimality gap.

\captionsetup[table]{skip=5pt}
\renewcommand{\arraystretch}{2}
\begin{table}[!h]
    \begin{center}
            \begin{tabular}{|c|c|}
                \hline
                Repeated Policy Optimization & $767$ sec\\
                \hline
                Repeated Gradient Ascent & $964$ sec\\
                \hline
                Repeated Policy Optimization with Finite Samples & 33746 sec\\
                \hline
                Repeated Gradient Ascent with Finite Samples & 35396 sec\\
                \hline
            \end{tabular}
    \captionof{table}{Total computation times for the different experiments described in Section \ref{sec.experiments}.
    \label{tab.times}}
    \end{center}
\end{table}
\renewcommand{\arraystretch}{1}

\section{Missing Proofs}

\subsection{Proof of Convergence of Repeated Maximization (Theorem~\ref{thm:primal-convergence})}
\begin{proof}
We first compute the dual of the concave optimization problem~\ref{eq:regularized-rl-discounting}. The Lagrangian is given as
\begin{align*}
\calL(d,h) = d^\top r_t - \frac{\lambda}{2} \norm{d}_2^2 + \sum_s h(s) \left( -\sum_a d(s,a) + \rho(s) + \gamma \cdot \sum_{s',a} d(s',a) P_t(s',a,s) \right)
\end{align*}
At an optimal solution we must have $\nabla_d\calL(d,h)  = 0$, which gives us the following expression for $d$.
\begin{align}
\label{eq:dsa-optimal}
d(s,a) = \frac{r_t(s,a)}{\lambda} - \frac{h(s)}{\lambda} + \frac{\gamma}{\lambda} \sum_{\tilde{s}} h(\tilde{s}) P_t(s,a,\tilde{s})
\end{align}
Substituting the above value of $d$ we get the following dual problem. 
\begin{align}
\begin{split}\label{eq:dual-discounted}
\min_{h \in \R^S} \ &-\frac{1}{\lambda} \sum_{s,a} h(s) r_t(s,a) + \frac{\gamma}{\lambda} \sum_s \sum_{s',a} h(s) r_t(s',a) P_t(s',a,s) + \sum_s h(s) \rho(s)\\
&+ \frac{A}{2\lambda} \sum_s h(s)^2 - \frac{\gamma}{\lambda} \sum_{s,a} h(s) \sum_{\tilde{s} } h(\tilde{s}) P_t(s,a,\tilde{s}) 
+ \frac{\gamma^2}{2\lambda} \sum_{s,a} \sum_{\tilde{s}, \hat{s}} h(\tilde{s}) h(\hat{s}) P_t(s,a,\hat{s}) P_t(s,a,\tilde{s})
\end{split}
\end{align}

Note that the dual objective is parameterized by reward function $r_t$ and probability transition function $P_t$ which are the parameters corresponding to the occupancy measure $d_t$. We will write $\calL(\cdot; M_t)$ to denote this dual objective function.

For a given occupancy measure (i.e. $d_t = d$) we will write $\GD(d)$ to denote the optimal solution to the primal problem~\ref{eq:regularized-rl-discounting}. We first aim to show that the operator $\GD(\cdot)$ is a contraction mapping. Consider two occupancy measures $d$ and $\hat{d}$. Let $r$ (resp. $\hat{r}$) be the reward functions in response to the occupancy measure $d$ (resp. $\hat{d}$). Similarly, let $P$ (resp. $\hat{P}$) be the probability transition function in response to the occupancy measure $d$ (resp. $\hat{d}$).



Let $h$ (resp. $\hat{h}$) be the optimal dual solutions corresponding to the occupancy measures $d$ (resp. $\hat{d}$) i.e. $h \in \argmax_{h'} \calL(h'; M)$ and $\hat{h} \in \argmax_{h'} \calL(h'; \hat{M})$. Lemma \ref{lem:strong-convexity} proves that the objective is $A(1-\gamma)^2 / \lambda$ strongly convex. Therefore, we have the following two inequalities.

\begin{align}
&\calL(h; M) - \calL(\hat{h}; M) \ge \left(h - \hat{h} \right)^\top \nabla \calL(\hat{h}; M) + \frac{A(1-\gamma)^2}{2\lambda} \norm{h - \hat{h}}_2^2 \label{eq:strong-convexity-bound}\\
&\calL(\hat{h}; M) - \calL(h; M) \ge \frac{A(1-\gamma)^2}{2\lambda} \norm{h - \hat{h}}_2^2 \label{eq:optimality-gap-bound}
\end{align}

These two inequalities give us the following bound.
\begin{equation}\label{eq:upper-bound-gradient}
-\frac{A(1-\gamma)^2}{\lambda} \norm{h - \hat{h}}_2^2 \ge (h - \hat{h})^\top \nabla \calL(\hat{h};M)
\end{equation}
We now bound the Lipschitz constant of the term $ (h - \hat{h})^\top \calL_d(\hat{h};M)$ with respect to the MDP $M$. Lemma \ref{lem:lipschitz-wrt-M} gives us the following bound.
$$
\norm{\nabla \calL(\hat{h};M) - \nabla \calL(\hat{h};\hat{M})}_2 \le \frac{4S\sqrt{A}}{\lambda} \norm{r - \hat{r}}_2 + \left( \frac{4\gamma \sqrt{SA}}{\lambda} + \frac{6\gamma \sqrt{A}S}{\lambda} \norm{\hat{h}}_2\right) \norm{ P - \hat{P}}_2 
$$
Now notice that the dual variable $\hat{h}$ is actually an optimal solution and we can use lemma~\ref{lem:bound-optimal-dual} to bound its norm by $\frac{3S}{(1-\gamma)^2}$. Furthermore, under assumption~\ref{sensitivity-assumption}, we have $\norm{r - \hat{r}}_2 \le \epsilon_r \norm{d - \hat{d}}_2$ and $\norm{P - \hat{P}}_2 \le \epsilon_p \norm{d - \hat{d}}_2$. Substituting these bounds we get the following inequality.
\begin{align*}
&\norm{\nabla \calL(\hat{h};M) - \nabla \calL(\hat{h};\hat{M})}_2 \le \frac{4S\sqrt{A}}{\lambda} \epsilon_r \norm{d - \hat{d}}_2 + \left( \frac{4\gamma \sqrt{SA}}{\lambda} + \frac{6\gamma \sqrt{A}S}{\lambda} \frac{3S}{(1-\gamma)^2}\right) \epsilon_p \norm{d - \hat{d}}_2 \\
&\le \left(\frac{4S\sqrt{A}\epsilon_r}{\lambda} + \frac{10 \gamma S^2 \sqrt{A} \epsilon_p}{\lambda(1-\gamma)^2} \right) \norm{d - \hat{d}}_2\\
\end{align*}
We now substitute the above bound in equation \ref{eq:upper-bound-gradient}.
\begin{align*}
&-\frac{A(1-\gamma)^2}{\lambda} \norm{h - \hat{h}}_2^2 \ge (h - \hat{h})^\top \nabla \calL(\hat{h};M) \\
&= (h - \hat{h})^\top \left( \nabla \calL(\hat{h};M) - \nabla \calL(\hat{h}; \hat{M})\right)\ \textrm{[As $\hat{h}$ is optimal for $\calL(\cdot; \hat{M})$]}\\
&\ge - \norm{h - \hat{h}}_2 \norm{\nabla \calL(\hat{h};M) - \nabla \calL(\hat{h};\hat{M})}_2 \\
&\ge - \norm{h - \hat{h}}_2 \left(\frac{4S\sqrt{A}\epsilon_r}{\lambda} + \frac{10 \gamma S^2 \sqrt{A} \epsilon_p}{\lambda(1-\gamma)^2} \right) \norm{d - \hat{d}}_2
\end{align*}
Rearranging we get the following inequality.
\begin{align*}
&\norm{h - \hat{h}}_2 \le  \frac{\lambda}{A(1-\gamma)^2} \left(\frac{4S\sqrt{A}\epsilon_r}{\lambda} + \frac{10 \gamma S^2 \sqrt{A} \epsilon_p}{\lambda(1-\gamma)^2} \right) \norm{d - \hat{d}}_2\\
\end{align*}
Recall that $\GD(d)$ (resp. $\GD(\hat{d})$) are the optimal solution corresponding to the primal problem when the deployed occupancy measure is $d$ (resp. $\hat{d}$). Therefore, we can apply lemma~\ref{lem:deviation-d-to-h} to obtain the following bound.
\begin{align*}
&\norm{\GD(d)-\GD(\hat{d})}_2 \le \left( 1 + \frac{4\epsilon_r + 6\epsilon_p \norm{\hat{h}}_2}{\lambda}\right) \frac{3\sqrt{AS}}{\lambda} \norm{h - \hat{h}}_2\\
&\le \left( 1 + \frac{4\epsilon_r + 6\epsilon_p \cdot 3S/(1-\gamma)^2}{\lambda}\right) \frac{3\sqrt{AS}}{\lambda} \frac{\lambda}{A(1-\gamma)^2} \left(\frac{4S\sqrt{A}\epsilon_r}{\lambda} + \frac{10 \gamma S^2 \sqrt{A} \epsilon_p}{\lambda(1-\gamma)^2} \right) \norm{d - \hat{d}}_2\\
&\le \underbrace{\left( 1 + \frac{4\epsilon_r + 6\epsilon_p \cdot 3S/(1-\gamma)^2}{\lambda}\right) \frac{3\sqrt{S}}{\sqrt{A}(1-\gamma)^2} \left(\frac{4S\sqrt{A}\epsilon_r}{\lambda} + \frac{10 \gamma S^2 \sqrt{A} \epsilon_p}{\lambda(1-\gamma)^2} \right)}_{:=\beta} \norm{d - \hat{d}}_2
\end{align*}

Now it can be easily verified that if $\lambda > 12S^{3/2}(1-\gamma)^{-4}(2\epsilon_r + 5S\epsilon_p)$ then $\beta = \frac{12S^{3/2}(2\epsilon_r + 5S\epsilon_p)}{\lambda (1-\gamma)^4} < 1$. This implies that the operator $\GD(\cdot)$ is a contraction mapping and the sequence of iterates $\{d_t\}_{t \ge 1}$ converges to a fixed point. In order to determine the speed of convergence let us substitute $d=d_t$ and $\hat{d} = d_S$. This gives us
$
\norm{\GD(d_t) - d_S}_2 \le \beta \norm{d_t - d_S}_2
$. As $\GD(d_t) = d_{t+1}$ we have $\norm{d_{t+1} - d_S}_2 \le \beta \norm{d_t - d_S}_2$. After $t$ iterations we have $\norm{d_t - d_S}_2 \le \beta^t \norm{d_0 - d_S}_2$. Therefore, if $t \ge \ln\left(\norm{d_0 - d_S}_2 / \delta \right) / \ln(1/\beta)$ we are guaranteed that $\norm{d_t - d_S}_2 \le {\delta}$. Since $\norm{d_0 - d_S}_2 \le \frac{2}{1-\gamma}$, the 
%
desired upper bound on the number of iterations becomes the following.
\begin{align*}
\frac{\ln\left(\norm{d_0 - d_S}_2 / {\delta}\right) }{\ln(1/\beta)} \le 2 \left( 1-\beta \right)^{-1}  \ln \left( \frac{2}{\delta (1-\gamma)  }\right)
\end{align*}
\end{proof}

\begin{lemma}\label{lem:deviation-d-to-h}
Consider two state-action occupancy measures $d$ and $\hat{d}$. Let $\lambda \ge 2\left(2\epsilon_r + 3\epsilon_p \norm{\hat{h}}_2\right)$. Then we have the following bound.
$$
\norm{d-\hat{d}}_2 \le \left( 1 + \frac{4\epsilon_r + 6\epsilon_p \norm{\hat{h}}_2}{\lambda}\right) \frac{3\sqrt{AS}}{\lambda} \norm{h - \hat{h}}_2
$$
\end{lemma}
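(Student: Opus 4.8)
The plan is to read the primal optima off their dual optima via the stationarity identity~\eqref{eq:dsa-optimal} and then bound the difference term by term; the sensitivity assumption produces a self-referential $\norm{d-\hat d}_2$ on the right-hand side, which the lower bound on $\lambda$ lets us absorb. Concretely, write $(r,P)$ and $(\hat r,\hat P)$ for the two environments, with $d,h$ (resp.\ $\hat d,\hat h$) the primal and dual optima of~\eqref{eq:regularized-rl-discounting} in the first (resp.\ second) one. Applying~\eqref{eq:dsa-optimal} to each and subtracting gives, for every $(s,a)$,
$$
\lambda\bigl(d(s,a)-\hat d(s,a)\bigr) = \bigl(r(s,a)-\hat r(s,a)\bigr) - \bigl(h(s)-\hat h(s)\bigr) + \gamma\sum_{\tilde s}\bigl(h(\tilde s)P(s,a,\tilde s) - \hat h(\tilde s)\hat P(s,a,\tilde s)\bigr).
$$
The key algebraic step is to split the last sum as $\sum_{\tilde s}(h(\tilde s)-\hat h(\tilde s))P(s,a,\tilde s) + \sum_{\tilde s}\hat h(\tilde s)\bigl(P(s,a,\tilde s)-\hat P(s,a,\tilde s)\bigr)$, isolating the dual difference (weighted by a single stochastic kernel) from the kernel difference (weighted by the fixed vector $\hat h$).

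Next I would take $\ell_2$-norms over $(s,a)$ and use the triangle inequality. The reward term contributes $\norm{r-\hat r}_2$. The term $h(s)-\hat h(s)$, being constant across the $A$ actions, contributes $\sqrt A\,\norm{h-\hat h}_2$. For $\sum_{\tilde s}(h(\tilde s)-\hat h(\tilde s))P(s,a,\tilde s)$, Cauchy--Schwarz together with $0\le P(s,a,\tilde s)\le 1$ and $\sum_{\tilde s}P(s,a,\tilde s)=1$ bounds each coordinate by $\norm{h-\hat h}_2$, hence the whole term by $\sqrt{SA}\,\norm{h-\hat h}_2$. For $\sum_{\tilde s}\hat h(\tilde s)(P-\hat P)(s,a,\tilde s)$, Cauchy--Schwarz in $\tilde s$ followed by summing squares over $(s,a)$ gives $\norm{\hat h}_2\,\norm{P-\hat P}_2$. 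Collecting and using $\sqrt A+\gamma\sqrt{SA}\le 2\sqrt{SA}$,
$$
\lambda\,\norm{d-\hat d}_2 \le \norm{r-\hat r}_2 + 2\sqrt{SA}\,\norm{h-\hat h}_2 + \gamma\,\norm{\hat h}_2\,\norm{P-\hat P}_2.
$$

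Finally, I would invoke Assumption~\ref{sensitivity-assumption} to replace $\norm{r-\hat r}_2\le\epsilon_r\norm{d-\hat d}_2$ and $\norm{P-\hat P}_2\le\epsilon_p\norm{d-\hat d}_2$, move the resulting $(\epsilon_r+\gamma\epsilon_p\norm{\hat h}_2)\norm{d-\hat d}_2$ to the left, and divide, obtaining $\norm{d-\hat d}_2 \le \tfrac{2\sqrt{SA}}{\lambda-\epsilon_r-\gamma\epsilon_p\norm{\hat h}_2}\,\norm{h-\hat h}_2$. The hypothesis $\lambda\ge 2(2\epsilon_r+3\epsilon_p\norm{\hat h}_2)$ guarantees $x:=(\epsilon_r+\gamma\epsilon_p\norm{\hat h}_2)/\lambda\le\tfrac12$, so $\tfrac1{1-x}\le 1+2x\le 1+\tfrac{4\epsilon_r+6\epsilon_p\norm{\hat h}_2}{\lambda}$; bounding $2\sqrt{SA}\le 3\sqrt{AS}$ then yields exactly the claimed inequality.

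The argument is mostly mechanical, and the two spots that need care are (i) the splitting of the transition-difference term and the accompanying dimension bookkeeping — using that $P(s,a,\cdot)$ lies on the simplex to get the factor $\sqrt A$ for the ``lift'' term and $\sqrt{SA}$ (rather than $SA$) for the kernel term — and (ii) the final rearrangement, which closes only because $\lambda$ is large enough to absorb the sensitivity-induced $\norm{d-\hat d}_2$ terms. One should also note that~\eqref{eq:dsa-optimal} is the stationarity condition disregarding the constraint $d\ge 0$; as in the derivation of the dual~\eqref{eq:dual-discounted}, one works on the region where this constraint is inactive, equivalently recovering the primal optimum directly from the dual optimum.
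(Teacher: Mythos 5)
Your proposal is correct and follows essentially the same route as the paper's proof: both start from the stationarity identity~\eqref{eq:dsa-optimal}, split the transition term into a dual-difference part and a kernel-difference part, bound each piece via Cauchy--Schwarz, and absorb the sensitivity-induced $\norm{d-\hat d}_2$ terms using the lower bound on $\lambda$. The only cosmetic difference is that you apply the triangle inequality directly to the $L_2$ norms (yielding slightly sharper intermediate constants) whereas the paper squares first and uses Jensen's inequality before taking square roots, but both land on the identical final bound.
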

\begin{proof}
Recall the relationship between the dual and the primal variables.
$$
d(s,a) = \frac{r_t(s,a)}{\lambda} - \frac{h(s)}{\lambda} + \frac{\gamma}{\lambda} \sum_{\tilde{s}} h(\tilde{s}) P(s,a,\tilde{s})
$$
This gives us the following bound on the difference $(d(s,a) - \hat{d}(s,a))^2$.
\begin{align*}
\left(d(s,a) - \hat{d}(s,a) \right)^2 &\le \frac{3}{\lambda^2} \left(r(s,a) - \hat{r}(s,a) \right)^2 + \frac{1}{\lambda^2} \left( h(s) - \hat{h}(s)\right)^2 \\
&+ \frac{3\gamma^2}{\lambda^2} \left( \sum_{s'} h(s') P(s,a,s') - \sum_{s'} \hat{h}(s') \hat{P}(s,a,s') \right)^2 \ \textrm{[By Jensen's inequality]}\\
&= \frac{3}{\lambda^2} \left(r(s,a) - \hat{r}(s,a) \right)^2 + \frac{1}{\lambda^2} \left( h(s) - \hat{h}(s)\right)^2 \\
&+ \frac{3}{\lambda^2}  \left( \sum_{s'} \left(h(s') - \hat{h}(s') \right) P(s,a,s') + \hat{h}(s') \left( P(s,a,s') - \hat{P}(s,a,s')\right) \right)^2\\
&\le \frac{3}{\lambda^2} \left(r(s,a) - \hat{r}(s,a) \right)^2 + \frac{1}{\lambda^2} \left( h(s) - \hat{h}(s)\right)^2\\
&+ \frac{6}{\lambda^2} \left( \sum_{s'} \left(h(s') - \hat{h}(s') \right) P(s,a,s')\right)^2 \\&+ \frac{6}{\lambda^2} \left(\sum_{s'} \hat{h}(s') \left( P(s,a,s') - \hat{P}(s,a,s')\right) \right)^2\ \textrm{[By Jensen's inequality]}\\
&\le \frac{3}{\lambda^2} \left(r(s,a) - \hat{r}(s,a) \right)^2 + \frac{1}{\lambda^2} \left( h(s) - \hat{h}(s)\right)^2\\
&+\frac{6}{\lambda^2} \norm{h - \hat{h}}_2^2 + \frac{6}{\lambda^2} \norm{\hat{h}}_2^2 \sum_{s'} \left( P(s,a,s')  - \hat{P}(s,a,s')\right)^2\ \textrm{[By Cauchy-Schwarz inequality]}
\end{align*}
Now summing over $s$ and $a$ we get the following bound.
$$
\norm{d - \hat{d}}_2^2 \le \frac{3}{\lambda^2} \norm{r - \hat{r}}_2^2 + \frac{7AS}{\lambda^2} \norm{h - \hat{h}}_2^2 + \frac{6}{\lambda^2} \norm{\hat{h}}_2^2 \norm{P - \hat{P}}_2^2
$$
We now use the assumptions $\norm{r - \hat{r}}_2 \le \epsilon_r \norm{d - \hat{d}}_2$ and $\norm{P - \hat{P}}_2 \le \epsilon_p \norm{d - \hat{d}}_2$.
\begin{align*}
\norm{d - \hat{d}}_2 \le \frac{2\epsilon_r}{\lambda} \norm{d - \hat{d}}_2 + \frac{3 \sqrt{AS}}{\lambda} \norm{h - \hat{h}}_2 + \frac{3\epsilon_p}{\lambda} \norm{\hat{h}}_2 \norm{d - \hat{d}}_2
\end{align*}
Rearranging we get the following bound.
\begin{align*}
\norm{d-\hat{d}}_2 \le \left( 1 - \frac{2\epsilon_r + 3\epsilon_p \norm{\hat{h}}_2}{\lambda}\right)^{-1} \frac{3\sqrt{AS}}{\lambda} \norm{h - \hat{h}}_2 \le \left( 1 + \frac{4\epsilon_r + 6\epsilon_p \norm{\hat{h}}_2}{\lambda}\right) \frac{3\sqrt{AS}}{\lambda} \norm{h - \hat{h}}_2
\end{align*}
The last inequality uses the fact that $\lambda \ge 2(2\epsilon_r + 3\epsilon_p \norm{\hat{h}}_2)$.
\end{proof}

\begin{lemma}\label{lem:strong-convexity}
The dual objective $\calL_d$ (as defined in \ref{eq:dual-discounted}) is $\frac{A(1-\gamma)^2}{\lambda}$-strongly convex.
\end{lemma}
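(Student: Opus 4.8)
The plan is to recognize the dual objective $\calL(\cdot\,;M_t)$ of \eqref{eq:dual-discounted} as a convex quadratic in $h$ and simply read off its Hessian. Starting from the Lagrangian together with the stationarity condition \eqref{eq:dsa-optimal}, the dual value can be written compactly as $\calL(h;M_t) = \tfrac{1}{2\lambda}\norm{r_t - Eh}_2^2 + \langle h,\rho\rangle + \mathrm{const}$, where $E\colon\R^S\to\R^{SA}$ is the linear ``Bellman'' operator $(Eh)(s,a) = h(s) - \gamma\sum_{s'}P_t(s,a,s')h(s')$; equivalently $E = R - \gamma P_t$, where $R$ is the $SA\times S$ matrix whose $(s,a)$-th row is the indicator vector of state $s$ and $P_t$ is the $SA\times S$ matrix of transition probabilities. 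Expanding $\norm{r_t - Eh}_2^2$ and discarding the constant $\norm{r_t}_2^2$ reproduces \eqref{eq:dual-discounted} verbatim, which both justifies the identification and shows that $\nabla^2\calL(h;M_t) = \tfrac1\lambda E^\top E$ is independent of $h$. Hence the claimed strong convexity is \emph{equivalent} to the matrix inequality $E^\top E\succeq A(1-\gamma)^2 I$, i.e.\ $\norm{Eh}_2^2\ge A(1-\gamma)^2\norm{h}_2^2$ for all $h\in\R^S$.

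The second step is to establish this inequality. Writing $q_{s,a} := \sum_{s'}P_t(s,a,s')h(s')$, a convex combination of the entries of $h$, I would split $h(s) - \gamma q_{s,a} = (1-\gamma)h(s) + \gamma\bigl(h(s) - q_{s,a}\bigr)$, square, and sum over $(s,a)$ to obtain
\[
\norm{Eh}_2^2 \;=\; A(1-\gamma)^2\norm{h}_2^2 \;+\; 2\gamma(1-\gamma)\,\langle h,(AI - W)h\rangle \;+\; \gamma^2\norm{(R-P_t)h}_2^2,
\]
where $W_{s,s'} := \sum_a P_t(s,a,s')$ is nonnegative with every row summing to $A$, and the last term is a nonnegative remainder. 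So it suffices to control the cross term $\langle h,(AI - W)h\rangle$. The one structural fact available is that each $q_{s,a}$ lies in $[-\norm{h}_\infty,\norm{h}_\infty]$; applying this at a coordinate $s^\star\in\argmax_s|h(s)|$ yields $|(Eh)(s^\star,a)|\ge(1-\gamma)|h(s^\star)|$ for every action $a$, hence $\norm{Eh}_2^2\ge A(1-\gamma)^2\norm{h}_\infty^2\ge \tfrac{A(1-\gamma)^2}{S}\norm{h}_2^2$, which already gives strong convexity (with modulus at least $\tfrac{A(1-\gamma)^2}{S\lambda}$). To keep the sharper constant $\tfrac{A(1-\gamma)^2}{\lambda}$ one instead completes the square directly: since $R^\top R = AI$ and $R^\top P_t = W$, we have $E^\top E = AI - \gamma(W+W^\top) + \gamma^2 P_t^\top P_t$, and the goal is to show $\gamma^2 P_t^\top P_t \succeq \gamma(W+W^\top) - A(2\gamma-\gamma^2)I$.

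The step I expect to be the main obstacle is precisely this cross term / completion of the square: $W$ is not symmetric and its \emph{column} sums may be as large as $SA$, so a naive Gershgorin bound on $\tfrac12(W+W^\top)$ does not give $W\preceq AI$, and the positive semidefinite remainder $\gamma^2 P_t^\top P_t$ has to make up the gap; getting this to close with the constant $A$ (rather than $A/S$) is the delicate part. Everything else — the dual rewriting, the differentiation, and the observation that the Hessian is constant in $h$ — is routine bookkeeping.
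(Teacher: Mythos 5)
Your reduction is exactly the one the paper makes: writing $E = R - \gamma P_t$ for the $SA\times S$ Bellman operator, the dual \eqref{eq:dual-discounted} is $\frac{1}{2\lambda}\norm{r_t - Eh}_2^2 + \langle h,\rho\rangle + \mathrm{const}$, its Hessian is $\frac1\lambda E^\top E = \frac1\lambda\sum_a(\Identity-\gamma M_a)^\top(\Identity-\gamma M_a)$ with $M_a(s,s') = P_t(s,a,s')$, and the claimed modulus is precisely the matrix inequality $\sum_a(\Identity-\gamma M_a)^\top(\Identity-\gamma M_a)\succcurlyeq A(1-\gamma)^2\Identity$, which is the paper's Lemma~\ref{lem:eigenvalue-bound}. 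Up to that point your bookkeeping is correct, and your fallback at $s^\star\in\argmax_s\abs{h(s)}$ does give the weaker modulus $\frac{A(1-\gamma)^2}{S\lambda}$. The gap is that you never establish the inequality with the constant $A(1-\gamma)^2$; you only flag the cross term $\langle h,(A\Identity - W)h\rangle$ as the delicate step.

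Your suspicion is well founded: that step cannot be closed. The paper's own proof of Lemma~\ref{lem:eigenvalue-bound} writes $M_a = U_a\Sigma_a U_a^\top$ and concludes $(\Identity-\gamma M_a)^\top(\Identity-\gamma M_a) = U_a(\Identity-\gamma\Sigma_a)^2U_a^\top\succcurlyeq(1-\gamma)^2\Identity$, but a row-stochastic transition matrix is in general neither symmetric nor normal, so it has no such orthogonal eigendecomposition, and its smallest \emph{singular} value is not controlled by its eigenvalues. Concretely, take $P_t(s,a,1)=1$ for all $s,a$, so that $M_a = \mathbf{1}e_1^\top$, and take $h$ with $h(1)=1$ and $h(s)=\gamma$ for $s\neq 1$; then $(\Identity-\gamma M_a)h = (1-\gamma)e_1$, so $h^\top E^\top E h = A(1-\gamma)^2$ while $\norm{h}_2^2 = 1+(S-1)\gamma^2$, giving a Rayleigh quotient of $\frac{A(1-\gamma)^2}{1+(S-1)\gamma^2}$, far below $A(1-\gamma)^2$. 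Hence the strong-convexity modulus genuinely degrades by a factor of order $S$ (your $\norm{\cdot}_\infty$ bound is essentially tight), and the constants in Theorem~\ref{thm:primal-convergence} that are fed by this lemma inherit that extra factor of $S$. Your proposal is therefore incomplete as a proof of the stated constant, but it correctly isolates --- and in effect exposes --- the flaw in the paper's own argument.
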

\begin{proof}
The derivative of the dual objective $\calL_d$ with respect to $h(s)$ is given as follows.
\begin{align}
\begin{split}\label{eq:derivative-dual}
\frac{\partial \calL_d(h)}{\partial h(s)} &= -\frac{1}{\lambda} \sum_a r_t(s,a) + \frac{\gamma}{\lambda} \sum_{s',a} r_t(s',a) P_t(s',a,s) + \rho(s) + \frac{A}{\lambda } h(s)\\
&- \frac{\gamma}{\lambda} \sum_{\tilde{s}, a} h(\tilde{s}) \left( P_t(s,a,\tilde{s}) + P_t(\tilde{s},a,s)\right) + \frac{\gamma^2}{\lambda} \sum_{s',a,\tilde{s}} h(\tilde{s}) P_t(s',a,\tilde{s}) P_t(s',a,s)
\end{split}
\end{align}
This gives us the following identity.
\begin{align*}
\left(\nabla \calL_d(h) - \nabla \calL_d(\tilh) \right)^\top (h - \tilde{h}) &= \frac{A}{\lambda} \norm{h - \tilh}_2^2
\\
&- \frac{\gamma}{\lambda} \sum_{s,\tilde{s},a} (h(s) - \tilh(s))\left( P_t(s,a,\tilde{s}) + P_t(\tilde{s},a,s) \right) (h(\tilde{s}) - \tilh(\tilde{s}))\\
&+ \frac{\gamma^2}{\lambda} \sum_{s',a} \sum_{s,\tilde{s}} (h(s) - \tilh(\tilde{s})) P_t(s',a,\tilde{s}) P_t(s',a,s) (h(s) - \tilh(s))
\end{align*}
Let us now define the  matrix $M_a \in \R^{S \times S}$ with entries 
 $M_a(s,s') = P_t(s,a,s')$. 
\begin{align*}
&\left(\nabla \calL_d(h) - \nabla \calL_d(\tilh) \right)^\top (h - \tilde{h}) = \frac{A}{\lambda} \norm{h - \tilh}_2^2
\\
&-\frac{\gamma}{\lambda} \sum_a (h-\tilh)^\top \left(M_a + M_a^\top\right) (h - \tilh) + \frac{\gamma^2}{\lambda} \sum_a (h-\tilh)^\top M_a^\top M_a (h - \tilh)\\
&= \frac{1}{\lambda}\sum_a (h -\tilh)^\top \left( \textrm{Id} - \gamma M_a - \gamma M_a^\top + \gamma^2 M_a^\top M_a\right) (h - \tilh)\\
&\ge   \frac{A(1-\gamma)^2}{\lambda} \norm{h - \tilh}_2^2
\end{align*}
The last inequality uses lemma~\ref{lem:eigenvalue-bound}.
\end{proof}

 \if 0
\begin{lemma}
The dual function $\calL_d$ (as defined in \ref{eq:dual-discounted}) is Lipschitz with respect to $h$. In particular, for any $h, \hat{h}$ we have
$$
\norm{\nabla \calL_d(h;P_t,r_t) - \nabla \calL_d(\hat{h};P_t,r_t)}_2 \le \frac{4SA\gamma}{\lambda} \norm{h - \hat{h}}_2
$$
\end{lemma}
\begin{proof}
We will omit the subscript $t$ from $P_t$ and $r_t$ in this proof. From the expression of the derivative of $\calL_d$ with respect to $h$ (\cref{eq:derivative-dual}) we get the following bound.
\begin{align*}
&\norm{\nabla \calL_d(h) - \nabla \calL_d(\hat{h})}_2^2 = \sum_s \left\{ \frac{A}{\lambda}(h(s) - \hat{h}(s)) - \frac{\gamma}{\lambda} \sum_a (h(s) - \hat{h}(s)) P(s,a,s) \right.\\ &\left. - \frac{\gamma}{\lambda}\sum_{\tilde{s},a} (h(\tilde{s}) - \hat{h}(\tilde{s})) P(s,a,\tilde{s}) + \frac{\gamma^2}{\lambda} \sum_{s',a,\tilde{s}} (h(\tilde{s}) - \hat{h}(\tilde{s})) P(s',a,\tilde{s}) P(s',a,s) \right\}^2\\
&\le \frac{4A^2}{\lambda^2} \sum_s (h(s) - \hat{h}(s))^2 + \frac{4\gamma^2}{\lambda^2} \sum_s \left( \sum_a \abs{h(s) - \hat{h}(s)}\right)^2\\
&+ \frac{4\gamma^2}{\lambda^2} \sum_s \left( \sum_{\tilde{s},a} \abs{h(s) - \hat{h}(s)} P(s,a,\tilde{s})\right)^2  + \frac{4\gamma^4}{\lambda^2} \sum_s \left(\sum_{s',a,\tilde{s}} \abs{h(\tilde{s}) - \hat{h}(\tilde{s})} P(s',a,\tilde{s}) P(s',a,s) \right)^2
\end{align*}
We now use three bounds to complete the proof. The following bounds use Cauchy-Schwarz and Jensen's inequality.
\begin{align*}
\textrm{Bound 1}: \ &\sum_s \left( \sum_a \abs{h(s) - \hat{h}(s)}\right)^2 \le A \sum_s \abs{h(s) - \hat{h}(s)}^2 = A \norm{h - \hat{h}}_2^2\\
\textrm{Bound 2}: \  &\sum_{\tilde{s},a} \abs{h(s) - \hat{h}(s)} P(s,a,\tilde{s}) \le \sqrt{\sum_{\tilde{s}} \abs{h(\tilde{s}) - \hat{h}(\tilde{s}) }^2} \sqrt{\sum_{\tilde{s}} \left( \sum_a P(s,a,\tilde{s}) \right)^2} \\&\le \norm{h - \hat{h}}_2 \sqrt{A \sum_a \sum_{\tilde{s}} P(s,a,\tilde{s}) } = A \norm{h - \hat{h}}_2\\
\textrm{Bound 3}: &\sum_{s',a,\tilde{s}} \abs{h(\tilde{s}) - \hat{h}(\tilde{s})} P(s',a,\tilde{s}) P(s',a,s) \le \sqrt{\sum_{\tilde{s}} \abs{h(\tilde{s}) - \hat{h}(\tilde{s})}^2 } \sqrt{\sum_{\tilde{s}} \left( \sum_{s',a} P(s',a,\tilde{s}) P(s',a,s) \right)^2}\\
&\le \norm{h - \hat{h}}_2^2 \sqrt{\sum_{\tilde{s}} SA \sum_{s',a} P(s',a,\tilde{s}) P(s',a,s)  } = \norm{h - \hat{h}}_2 \sqrt{SA \sum_{s',a} P(s',a,s) }
\end{align*}
Using the three bounds above we can complete the proof.
\begin{align*}
\norm{\nabla \calL_d(h) - \nabla \calL_d(\hat{h})}_2^2 &\le \frac{4A\gamma^2}{\lambda^2} \norm{h - \hat{h}}_2^2 + \frac{4\gamma^2 A^2 S}{\lambda^2} \norm{h - \hat{h}}_2^2 
+ \frac{4\gamma^4}{\lambda^2}\norm{h - \hat{h}}_2^2 SA \sum_{s',a} \sum_{\tilde{s}} P(s',a,s) \\
&\le \left(\frac{4A\gamma^2}{\lambda^2} + \frac{4\gamma^2 A^2 S}{\lambda^2} + \frac{4\gamma^4 A^2 S^2}{\lambda^2} \right) \norm{h - \hat{h}}_2^2
\end{align*}
\end{proof}
\fi

\begin{lemma}\label{lem:lipschitz-wrt-M}
The dual function $\calL_d$ (as defined in \cref{eq:dual-discounted}) satisfies the following bound for any $h$ and MDP $M, \widehat{M}$.
$$
\norm{\nabla \calL_d (h,M) - \nabla\calL_d (h,\widehat{M})}_2 \le \frac{4 S \sqrt{A}}{\lambda} \norm{r - \hat{r}}_2 + \left(\frac{4\gamma \sqrt{SA}}{\lambda} + \frac{6\gamma S \sqrt{A}}{\lambda} \norm{h}_2 \right) \norm{P - \hat{P}}_2
$$
\end{lemma}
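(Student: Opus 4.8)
\textbf{The plan is} to work directly from the explicit form of the gradient $\nabla \calL_d(h,M)$ recorded in \eqref{eq:derivative-dual}. Writing $\hat r,\hat P$ for the reward and transition of $\widehat M$, I would subtract the two gradients coordinate by coordinate: the terms $\rho(s)$ and $\frac{A}{\lambda}h(s)$ are independent of the MDP and cancel, so what survives in the $s$-th coordinate of $\nabla\calL_d(h,M)-\nabla\calL_d(h,\widehat M)$ falls into four groups — (i) a piece linear in $r-\hat r$; (ii) the bilinear piece $\frac{\gamma}{\lambda}\sum_{s',a}\left(rP-\hat r\hat P\right)(s',a,s)$; (iii) the $h$-weighted transition-linear piece $-\frac{\gamma}{\lambda}\sum_{\tilde s,a}h(\tilde s)\left((P-\hat P)(s,a,\tilde s)+(P-\hat P)(\tilde s,a,s)\right)$; and (iv) the $h$-weighted quadratic-in-$P$ piece $\frac{\gamma^2}{\lambda}\sum_{s',a,\tilde s}h(\tilde s)\left(P(s',a,\tilde s)P(s',a,s)-\hat P(s',a,\tilde s)\hat P(s',a,s)\right)$. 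For groups (ii) and (iv) I would first apply the telescoping identity $xy-\hat x\hat y=x(y-\hat y)+(x-\hat x)\hat y$, so that every surviving summand becomes linear in a single difference $r-\hat r$ or $P-\hat P$. Then I would bound the $\ell_2$-norm over $s$ of each of the resulting pieces and finish with the triangle inequality.

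\textbf{The per-piece bounds} all come from three elementary facts: Jensen's inequality, Cauchy–Schwarz, and row-stochasticity $\sum_{s'}P(s,a,s')=1$ (so that $\sum_{s}\sum_{s',a}P(s',a,s)^2\le\sum_s\sum_{s',a}P(s',a,s)=SA$). Concretely: the $r-\hat r$ pieces contribute coefficients of order $\frac{\sqrt A}{\lambda}$ and $\frac{\gamma\sqrt{SA}}{\lambda}$, whose sum is at most $\frac{4S\sqrt A}{\lambda}$ since $\gamma<1$ and $S\ge1$; the $h$-free $P-\hat P$ piece, coming from $r(P-\hat P)$ in (ii) and using $\abs r\le1$, contributes $\frac{\gamma\sqrt{SA}}{\lambda}\le\frac{4\gamma\sqrt{SA}}{\lambda}$; the two $h$-weighted pieces from (iii) contribute $\frac{\gamma\sqrt A}{\lambda}\norm h_2$ each, using $\sum_{\tilde s,a}h(\tilde s)^2=A\norm h_2^2$; and the pieces from (iv) contribute $\frac{\gamma^2\sqrt{SA}}{\lambda}\norm h_2$ each — here one repeatedly uses that $w(s',a):=\sum_{\tilde s}h(\tilde s)P(s',a,\tilde s)$ is an average of the $h(\tilde s)$'s, so $w(s',a)^2\le\norm h_2^2$, and that $\sum_{s',a}\left(\sum_{\tilde s}h(\tilde s)(P-\hat P)(s',a,\tilde s)\right)^2\le\norm h_2^2\norm{P-\hat P}_2^2$. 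Summing the $h$-weighted coefficients and using $\gamma^2\le\gamma$, $\sqrt S\le S$ gives at most $\frac{6\gamma S\sqrt A}{\lambda}\norm h_2$. Assembling the three coefficients via the triangle inequality then yields exactly the claimed bound.

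\textbf{The hard part} is the quadratic-in-$P$ term (iv) of \eqref{eq:derivative-dual}: after the add-and-subtract split one must apply two nested Cauchy–Schwarz estimates while keeping careful track of which index is the free output coordinate $s$ versus which is summed, and of which factor carries the $\sqrt A$ — a careless treatment there is precisely what would produce a wrong power of $S$ or a dropped factor of $\gamma$. Everything else is routine, and the constants $4,4,6$ in the statement are deliberately loose, leaving slack to absorb the $\gamma<1$ and $\sqrt S\le S$ reductions.
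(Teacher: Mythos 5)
Your proposal is correct and follows essentially the same route as the paper's proof: both start from the explicit gradient \eqref{eq:derivative-dual}, cancel the MDP-independent terms, telescope the bilinear and quadratic-in-$P$ products, and control each piece via Jensen, Cauchy--Schwarz, and row-stochasticity (including the same implicit assumption $\abs{r}\le 1$ when bounding the $\hat r\,(P-\hat P)$ term). The only cosmetic difference is bookkeeping — you sum per-piece $\ell_2$ coefficients by the triangle inequality, while the paper squares the five-term sum with a Jensen factor of $5$ and takes a square root at the end; your handling of the quadratic piece is in fact slightly tighter ($\sqrt{SA}\norm{h}_2$ versus the paper's $2S\sqrt{A}\norm{h}_2$), and both land within the stated constants.
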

\begin{proof}
From the expression of the derivative of $\calL_d$ with respect to $h$ (\cref{eq:derivative-dual}) we get the following bound.
\begin{align*}
&\norm{\nabla \calL_d (h,M) - \nabla\calL_d (h,\widehat{M})}_2^2 = \sum_s \left\{ -\frac{1}{\lambda} \sum_a (r(s,a) - \hat{r}(s,a))\right. \\
& + \frac{\gamma}{\lambda} \sum_{s',a} \left(r(s',a) P(s',a,s) - \hat{r}(s',a) \hat{P}(s',a,s) \right) -\frac{\gamma}{\lambda} \sum_{\tilde{s},a} h(\tilde{s}) (P(\tilde{s},a,s) - \hat{P}(\tilde{s},a,s))\\
&-\left.\frac{\gamma}{\lambda} \sum_{\tilde{s},a} h(\tilde{s}) (P(s,a,\tilde{s}) - \hat{P}(s,a,\tilde{s}) ) + \frac{\gamma^2}{\lambda} \sum_{s',a,\tilde{s}} h(\tilde{s}) \left(P(s',a,\tilde{s}) P(s',a,s) - \hat{P}(s',a,\tilde{s}) \hat{P}(s',a,s) \right) \right\}\\
&\le \frac{5A}{\lambda^2} \norm{r-\hat{r}}_2^2 + \frac{5\gamma^2}{\lambda^2} \sum_s \left( \sum_{s',a} \left(r(s',a) P(s',a,s) - \hat{r}(s',a) \hat{P}(s',a,s) \right) \right)^2\\
&+ \frac{5 \gamma^2}{\lambda^2} \sum_s \left(\sum_{\tilde{s},a} h(s) (P(\tilde{s},a,s) - \hat{P}(\tilde{s},a,s)) \right)^2 + \frac{5 \gamma^2}{\lambda^2} \sum_s \left(\sum_{\tilde{s},a} h(\tilde{s}) (P(s,a,\tilde{s}) - \hat{P}(s,a,\tilde{s})) \right)^2 \\
&+ \frac{5 \gamma^4}{\lambda^2}\sum_s \left(\sum_{s',a,\tilde{s}} h(\tilde{s}) \left(P(s',a,\tilde{s}) P(s',a,s) - \hat{P}(s',a,\tilde{s}) \hat{P}(s',a,s) \right) \right)^2
\end{align*}
We now use four bounds to complete the proof. The following bounds use Jensen's inequality and Cauchy-Schwarz inequality.
\begin{align*}
&\textrm{Bound 1}: \sum_{s',a} \left(r(s',a) P(s',a,s) - \hat{r}(s',a) \hat{P}(s',a,s) \right) \\&
\le \sum_{s',a} \abs{r(s',a) - \hat{r}(s',a)} P(s',a,s) + \hat{r}(s',a)\abs{P(s',a,s) - \hat{P}(s',a,s)} \\
&\le \norm{r - \hat{r}}_1 + \sum_{s',a} \abs{P(s',a,s) - \hat{P}(s',a,s)} \\
\end{align*}
\begin{align*}
&\textrm{Bound 2}: \sum_s \left(\sum_{\tilde{s},a} h(\tilde{s}) (P(\tilde{s},a,s) - \hat{P}(\tilde{s},a,s)) \right)^2 \le A \sum_s \sum_{\tilde{s}} h(\tilde{s})^2 \sum_{\tilde{s},a} \left(P(\tilde{s},a,s) - \hat{P}(\tilde{s},a,s) \right)^2\\
&\le A \norm{h}_2^2 \norm{P-\hat{P}}_2^2\\
&\textrm{Bound 3}: \sum_s \left(\sum_{\tilde{s},a} h(\tilde{s}) (P(s,a,\tilde{s}) - \hat{P}(s,a,\tilde{s})) \right)^2 \le A \sum_s \sum_{\tilde{s}} h(\tilde{s})^2 \sum_{\tilde{s},a} \left(P(s,a,\tilde{s}) - \hat{P}(s,a,\tilde{s}) \right)^2\\
&\le A \norm{h}_2^2 \norm{P-\hat{P}}_2^2\\
\end{align*}

\begin{align*}
&\textrm{Bound 4}: \sum_s \left(\sum_{s',a,\tilde{s}} h(\tilde{s}) \left(P(s',a,\tilde{s}) P(s',a,s) - \hat{P}(s',a,\tilde{s}) \hat{P}(s',a,s) \right) \right)^2 \\
&\le \sum_s \sum_{\tilde{s}}h(\tilde{s})^2 \sum_{\tilde{s}}\left(\sum_{s',a} \left(P(s',a,\tilde{s}) P(s',a,s) - \hat{P}(s',a,\tilde{s}) \hat{P}(s',a,s) \right) \right)^2\\
&\le SA \norm{h}_2^2 \sum_{s,\tilde{s}} \sum_{s',a} \left(P(s',a,\tilde{s}) P(s',a,s) - \hat{P}(s',a,\tilde{s}) \hat{P}(s',a,s) \right)^2\\
&\le SA \norm{h}_2^2 \sum_{s,\tilde{s}} \sum_{s',a} \left(P(s',a,\tilde{s})(P(s',a,s) - \hat{P}(s',a,s)) + \hat{P}(s',a,s) (P(s',a,\tilde{s}) - \hat{P}(s',a,\tilde{s})) \right)\\
&\le 2SA \norm{h}_2^2 \sum_{s,\tilde{s}} \sum_{s',a} \abs{P(s',a,s) - \hat{P}(s',a,s)}^2 + \abs{P(s',a,\tilde{s}) - \hat{P}(s',a,\tilde{s})}^2\\
&\le 4S^2A \norm{h}_2^2 \norm{P-\hat{P}}_2^2
\end{align*}
Using the four upper bounds shown above, we can complete the proof.
\begin{align*}
&\norm{\nabla \calL_d (h,M) - \nabla\calL_d (h,\widehat{M})}_2^2 \le \frac{5A}{\lambda^2} \norm{r - \hat{r}}_2^2 + \frac{5\gamma^2}{\lambda^2}\sum_s \left(\norm{r - \hat{r}}_1 + \norm{P(\cdot,\cdot,s) - \hat{P}(\cdot,\cdot,s)}_1 \right)^2\\
&+ \frac{10 A\gamma^2}{\lambda^2} \norm{h}_2^2 \norm{P - \hat{P}}_2^2 + \frac{20S^2 A \gamma^4}{\lambda^2} \norm{h}_2^2 \norm{P - \hat{P}}_2^2\\
&\le \left( \frac{5A}{\lambda^2} + \frac{10 S^2 A \gamma^2}{\lambda^2}\right) \norm{r - \hat{r}}_2^2 + \left( \frac{10 \gamma^2 SA}{\lambda^2} + \frac{10 A\gamma^2}{\lambda^2} \norm{h}_2^2 +\frac{20S^2 A \gamma^4}{\lambda^2} \norm{h}_2^2 \right) \norm{P - \hat{P}}_2^2
\end{align*}
\end{proof}

\begin{lemma}\label{lem:bound-optimal-dual}
The norm of the optimal solution to the dual problem (defined in \ref{eq:dual-discounted}) is bounded by $\frac{3S}{(1-\gamma)^2}$ for any choice of MDP $M$.
\end{lemma}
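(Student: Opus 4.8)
By Lemma~\ref{lem:strong-convexity} the dual objective $\calL_d(\,\cdot\,;M)$ is a strongly convex quadratic (parameter $\tfrac{A(1-\gamma)^2}{\lambda}$), so it has a unique minimizer $h^\star$ with $\nabla\calL_d(h^\star;M)=0$, and the plan is to locate $h^\star$ by comparing $\calL_d$ at $h^\star$ with its value at the trivial point $h=0$. Strong convexity around $h^\star$ gives, using $\nabla\calL_d(h^\star;M)=0$,
\[
\tfrac{A(1-\gamma)^2}{2\lambda}\,\norm{h^\star}_2^2 \;\le\; \calL_d(0;M)-\calL_d(h^\star;M),
\]
so it suffices to upper bound $\calL_d(0;M)$ and lower bound $\calL_d(h^\star;M)$.

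For the first, inspecting \eqref{eq:dual-discounted} shows that every term of the dual objective carries at least one factor $h(s)$, hence $\calL_d(0;M)=0$. For the second, I would use weak duality: \eqref{eq:dual-discounted} is exactly $\calL_d(h;M)=\max_{d\in\R^{SA}}\calL(d,h)$ (the unconstrained maximum over $d$ of the concave Lagrangian, obtained via \eqref{eq:dsa-optimal}), so
\[
\calL_d(h^\star;M)=\min_{h}\max_{d\in\R^{SA}}\calL(d,h)\;\ge\;\max_{d}\min_{h}\calL(d,h)\;\ge\; r_t^\top\bar d-\tfrac{\lambda}{2}\norm{\bar d}_2^2
\]
for any $\bar d$ feasible for the equality constraints of \eqref{eq:regularized-rl-discounting} (for such $\bar d$ the $h$-terms of $\calL(d,h)$ vanish). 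Taking $\bar d$ to be the occupancy measure of some policy (e.g.\ the uniform policy), which additionally satisfies $\bar d\ge 0$ and $\norm{\bar d}_1=\tfrac1{1-\gamma}$, hence $\norm{\bar d}_2\le\tfrac1{1-\gamma}$, and using boundedness of $r_t$ (say $r_t\ge 0$), one gets $\calL_d(h^\star;M)\ge-\tfrac{\lambda}{2(1-\gamma)^2}$. Combining the two bounds yields $\norm{h^\star}_2^2\le\tfrac{\lambda^2}{A(1-\gamma)^4}$, i.e.\ $\norm{h^\star}_2\le\tfrac{\lambda}{\sqrt{A}\,(1-\gamma)^2}$, which is at most $\tfrac{3S}{(1-\gamma)^2}$ in the parameter regime relevant here (it suffices that $\lambda\le 3S\sqrt{A}$, which the hypotheses of Theorem~\ref{thm:primal-convergence} guarantee when the sensitivities $\epsilon_r,\epsilon_p$ are small).

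The step I expect to be the bottleneck is getting the primal optimal value bounded correctly, i.e.\ ensuring the regularization term $\tfrac{\lambda}{2}\norm{d^\star}_2^2$ cannot overwhelm the reward term; this is where the identity $\norm{d^\star}_1=\tfrac1{1-\gamma}$ for a feasible occupancy measure is doing the work. One should \emph{not} instead try to bound $h^\star$ directly from the stationarity condition: rewriting it as a regularized Bellman equation $h^\star=r_t^{\pi^\star}+\gamma P_t^{\pi^\star}h^\star-\lambda e$ for the induced policy $\pi^\star=\pi^{d^\star}$ (with $e(s)=\sum_a\pi^\star(a|s)d^\star(s,a)$) is circular, since feasibility forces $\lambda e=r_t^{\pi^\star}-h^\star+\gamma P_t^{\pi^\star}h^\star$ and the equation collapses to $h^\star=h^\star$; it is the duality/strong-convexity route, rather than a fixed-point-of-Bellman route, that actually pins down the magnitude of $h^\star$.
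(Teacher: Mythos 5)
Your route is genuinely different from the paper's: you bound $\norm{h^\star}_2$ through the optimality gap $\calL_d(0;M)-\calL_d(h^\star;M)$ combined with strong convexity and weak duality, whereas the paper writes the stationarity condition $\nabla\calL_d(h^\star;M)=0$ as a linear system $Bh^\star=b$, shows $B\succcurlyeq \frac{A(1-\gamma)^2}{\lambda}\Identity$ via Lemma~\ref{lem:eigenvalue-bound}, bounds $\norm{b}_2\le 3SA/\lambda$, and concludes $\norm{h^\star}_2\le\norm{b}_2/\lambda_{\min}(B)$. The crucial structural difference is that in the paper's argument both $B$ and $b$ carry a factor $1/\lambda$ that cancels in the ratio, so the resulting bound is $\lambda$-free; your argument pairs a strong-convexity modulus of order $1/\lambda$ with a value gap of order $\lambda$ (coming from the $-\tfrac{\lambda}{2}\norm{d}_2^2$ regularizer in the primal value), and the two do not cancel. (Your side remark that the stationarity route is ``circular'' is therefore mistaken --- the paper does exactly that, and it is the route that makes the $\lambda$'s cancel.)

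The genuine gap is in your last step. Your bound is $\norm{h^\star}_2\le \lambda/(\sqrt{A}(1-\gamma)^2)$, and you discharge the lemma by asserting that $\lambda\le 3S\sqrt{A}$ ``is guaranteed by the hypotheses of Theorem~\ref{thm:primal-convergence}.'' It is not: the theorem imposes only a \emph{lower} bound $\lambda>\frac{12S^{3/2}(2\epsilon_r+5S\epsilon_p)}{(1-\gamma)^4}$, the lemma itself is stated for an arbitrary MDP with no restriction on $\lambda$, and the prescribed $\lambda$ in the downstream results (e.g.\ $\lambda=\Theta(S^{3/2}(\epsilon_r+S\epsilon_p)(1-\gamma)^{-4})$) can easily exceed $3S\sqrt{A}$ as $\gamma\to 1$. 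So the proposal as written does not prove the stated claim. (There is also a minor slip: \eqref{eq:dual-discounted} omits the constant $\tfrac{1}{2\lambda}\norm{r_t}_2^2$ relative to $\max_d\calL(d,h)$, so your chain $\calL_d(h^\star)\ge r_t^\top\bar d-\tfrac{\lambda}{2}\norm{\bar d}_2^2$ is off by that constant; this only perturbs the subdominant term.) It is worth noting that the $\lambda$-dependence your method surfaces is real --- in a one-state, one-action MDP with a self-loop one computes $h^\star=\tfrac{1}{1-\gamma}-\tfrac{\lambda}{(1-\gamma)^2}$, which violates the $\lambda$-free bound once $\lambda>3+(1-\gamma)$; the paper's own proof absorbs the offending $3\norm{\rho}_2^2$ term into $9S^2A^2/\lambda^2$, which itself implicitly requires $\lambda=O(SA)$. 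But diagnosing that tension in the paper is not the same as proving the lemma, and your final inference does not go through.
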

\begin{proof}
The dual objective $\calL_d$ is strongly-convex and has a unique solution. The optimal solution can be obtained by setting the derivative with respect to $h$ to zero. Rearranging the derivative of the dual objective (\ref{eq:derivative-dual}) we get the following systems of equations.
\begin{align*}
&h(s) \left(\frac{A}{\lambda} - \frac{2 \gamma}{\lambda} \sum_a P_t(s,a,s) + \frac{\gamma^2}{\lambda} \sum_{s',a} P_t(s',a,s)^2 \right)\\
&+ \sum_{\hat{s} \neq s} h(\hat{s}) \left(-\frac{\gamma}{\lambda} \sum_a P_t(s,a,\hat{s}) -\frac{\gamma}{\lambda} \sum_a P_t(\hat{s},a,{s}) + \frac{\gamma^2}{\lambda} \sum_{s',a}  P_t(s',a,\hat{s}) P_t(s',a,s) \right) \\
&= \frac{1}{\lambda} \sum_a r_t(s,a) - \rho(s) - \frac{\gamma}{\lambda} \sum_{s',a} r_t(s',a) P_t(s',a,s)
\end{align*}
Therefore let us define a matrix $B \in \R^{S\times S}$ and a vector $b \in \R^S$ with the following entries.
\begin{align*}
B(s,\hat{s}) = \left\{ \begin{array}{cc}
\frac{A}{\lambda} - \frac{2 \gamma}{\lambda} \sum_a P_t(s,a,s) + \frac{\gamma^2}{\lambda} \sum_{s',a} P_t(s',a,s)^2  & \textrm{ if } s = \hat{s}\\
-\frac{\gamma}{\lambda} \sum_a P_t(s,a,\hat{s}) -\frac{\gamma}{\lambda} \sum_a P_t(\hat{s},a,{s}) + \frac{\gamma^2}{\lambda} \sum_{s',a}  P_t(s',a,\hat{s}) P_t(s',a,s) & \textrm{ o.w.} 
\end{array}\right.
\end{align*}
\begin{align*}
b(s) = \frac{1}{\lambda} \sum_a r_t(s,a) - \rho(s) - \frac{\gamma}{\lambda} \sum_{s',a} r_t(s',a) P_t(s',a,s)
\end{align*}
Then the optimal solution is the solution of the system of equations $Bh = b$. We now provide a bound on the $L_2$-norm of such a solution. For each $a$, we define matrix $M_a \in R^{S \times S}$ with entries $M_a(s,\hat{s}) = P_t(s,a,\hat{s})$. Then the matrix $B$ can be expressed as follows.
\begin{align*}
    B = \frac{A}{\lambda} \Identity - \frac{\gamma}{\lambda} \sum_a \left( M_a + M_a^\top\right) + \frac{\gamma^2}{\lambda} \sum_a M_a^\top M_a \succcurlyeq \frac{A(1-\gamma)^2}{\lambda} \Identity
\end{align*}
The last inequality uses lemma~\ref{lem:eigenvalue-bound}.
Notice that for $\gamma < 1$ this also shows that the matrix is invertible. We can also bound the norm of the vector $b$.
\begin{align*}
\norm{b}_2^2 &\le \sum_s \left( \frac{A}{\lambda} + \rho(s)  + \frac{\gamma}{\lambda}\sum_{s',a} P_t(s',a,s) \right)^2\\
&\le 3\frac{A^2 S}{\lambda^2} + 3\norm{\rho}_2^2 + 3\frac{\gamma^2}{\lambda^2} \sum_s \left( \sum_{s',a} P_t(s',a,s) \right)^2\\
&\le 3\frac{A^2 S}{\lambda^2} + 3 + \frac{3SA\gamma^2}{\lambda^2} \sum_s \sum_{s',a} P_t(s',a,s) \le \frac{9S^2 A^2}{\lambda^2}
\end{align*}
Therefore, we have the following bound on the optimal value.
\begin{align*}
\norm{A^{-1} b}_2 \le \frac{\norm{b}_2}{\lambda_{\min}(A)} \le \frac{3S}{(1-\gamma)^2}
\end{align*}
\end{proof}

\begin{lemma}\label{lem:eigenvalue-bound}
For each $a$, let the matrix $M_a \in \R^{S\times S}$ be defined so that $M_a(s,s') = P(s,a,s')$.
$$\lambda_{\min}\left(\sum_a \Identity - \gamma(M_a + M_a^\top) + \gamma^2 M_a^\top M_a \right) \ge A(1-\gamma)^2$$
\end{lemma}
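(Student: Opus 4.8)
The starting point is the algebraic identity $\mathrm{Id} - \gamma M_a - \gamma M_a^\top + \gamma^2 M_a^\top M_a = (\mathrm{Id} - \gamma M_a)^\top(\mathrm{Id} - \gamma M_a)$, which exhibits the matrix in question as a sum of Gram matrices $\sum_a N_a$ with $N_a := (\mathrm{Id}-\gamma M_a)^\top(\mathrm{Id}-\gamma M_a) \succeq 0$, so that $x^\top\big(\sum_a N_a\big)x = \sum_a \norm{(\mathrm{Id}-\gamma M_a)x}_2^2$ for every $x$. Since $\lambda_{\min}$ is superadditive on symmetric matrices ($\lambda_{\min}(A+B)\ge\lambda_{\min}(A)+\lambda_{\min}(B)$), it then suffices to establish the single-action bound $\lambda_{\min}(N_a)\ge(1-\gamma)^2$, i.e.\ $\sigma_{\min}(\mathrm{Id}-\gamma M_a)\ge 1-\gamma$, i.e.\ $\norm{(\mathrm{Id}-\gamma M_a)^{-1}}_2\le 1/(1-\gamma)$.

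Next I would verify invertibility and control the norm of the inverse. Because $M_a$ is row-stochastic (nonnegative entries, $M_a\mathbf 1=\mathbf 1$), its spectral radius equals $1$, so $\gamma M_a$ has spectral radius $\gamma<1$ and $\mathrm{Id}-\gamma M_a$ is invertible with the convergent Neumann expansion $(\mathrm{Id}-\gamma M_a)^{-1}=\sum_{k\ge 0}\gamma^k M_a^k$. Each $M_a^k$ is again row-stochastic, hence $\norm{M_a^k}_\infty=1$; combined with $\norm{M_a^k}_1\le S$ (the $S$ column sums of $M_a^k$ add up to $S$) this gives $\norm{M_a^k}_2\le\sqrt{\norm{M_a^k}_1\norm{M_a^k}_\infty}\le\sqrt S$, and therefore $\norm{(\mathrm{Id}-\gamma M_a)^{-1}}_2\le\sum_{k\ge 0}\gamma^k\sqrt S=\sqrt S/(1-\gamma)$.

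The delicate point — where I expect the real difficulty to sit — is bridging the gap between this elementary estimate and the stated constant: the argument above only delivers $\lambda_{\min}(N_a)\ge(1-\gamma)^2/S$, hence $\lambda_{\min}\big(\sum_a N_a\big)\ge A(1-\gamma)^2/S$, which is a factor $S$ short of the claim. The clean inequality $\sigma_{\min}(\mathrm{Id}-\gamma M_a)\ge 1-\gamma$ would be immediate from $\norm{M_a}_2\le 1$, but row-stochasticity only yields $\norm{M_a}_\infty\le 1$, and in the spectral norm a stochastic matrix can be as large as $\sqrt S$ (for instance when every row equals $e_1^\top$), in which case $\lambda_{\min}(N_a)$ is genuinely of order $(1-\gamma)^2/S$. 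So the crux is whether additional structure of MDP transition matrices can be exploited — double stochasticity would suffice, since then $\norm{M_a}_2\le 1$, but it fails for general MDPs — and, absent that, the bound provable along these lines is $A(1-\gamma)^2/S$, which would in turn rescale the strong-convexity constant in Lemma~\ref{lem:strong-convexity} and the dual-norm bound in Lemma~\ref{lem:bound-optimal-dual}.
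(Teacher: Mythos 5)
Your analysis is correct, and in fact it is more careful than the paper's own argument. The paper takes the same first step you do --- rewriting the matrix as the sum of Gram matrices $\sum_a (\Identity-\gamma M_a)^\top(\Identity-\gamma M_a)$ --- but then claims the single-action bound $\lambda_{\min}\bigl((\Identity-\gamma M_a)^\top(\Identity-\gamma M_a)\bigr)\ge(1-\gamma)^2$ by writing an ``eigen-decomposition'' $M_a=U_a\Sigma_a U_a^\top$ and collapsing the Gram product to $U_a(\Identity-\gamma\Sigma_a)^2U_a^\top$. That step silently assumes $M_a$ is orthogonally diagonalizable, i.e.\ symmetric (or at least normal), which a row-stochastic transition matrix generally is not; for non-normal $M_a$ one only has an SVD $U_a\Sigma_a V_a^\top$ with $U_a\neq V_a$, and the cancellation fails. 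So the ``delicate point'' you flag is exactly where the paper's proof breaks, and the clean inequality $\sigma_{\min}(\Identity-\gamma M_a)\ge 1-\gamma$ that would be needed is, as you suspect, not available from row-stochasticity alone.

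Your proposed counterexample is in fact a genuine counterexample to the lemma as stated, not merely to this proof strategy: take $S=2$, a single action, $\gamma=0.9$, and $M=\bigl(\begin{smallmatrix}1&0\\1&0\end{smallmatrix}\bigr)$. Then
\begin{align*}
\Identity-\gamma(M+M^\top)+\gamma^2M^\top M=\begin{pmatrix}0.82&-0.9\\-0.9&1\end{pmatrix},
\end{align*}
whose determinant is $0.01=(1-\gamma)^2$ and whose trace is $1.82$, giving $\lambda_{\min}\approx 0.0055<(1-\gamma)^2=0.01$ (while still exceeding your bound $(1-\gamma)^2/S=0.005$). So the statement needs the extra factor of $1/S$ you derive via $\norm{M_a^k}_2\le\sqrt{S}$ and the Neumann series, i.e.\ the provable conclusion is $\lambda_{\min}\ge A(1-\gamma)^2/S$. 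You are also right about the downstream effect: the strong-convexity constant in Lemma~\ref{lem:strong-convexity} becomes $A(1-\gamma)^2/(\lambda S)$, the lower bound $B\succcurlyeq A(1-\gamma)^2\lambda^{-1}\Identity$ in Lemma~\ref{lem:bound-optimal-dual} weakens accordingly so the dual-norm bound picks up a factor of $S$, and these constants propagate into the contraction conditions of Theorems~\ref{thm:primal-convergence} and~\ref{thm:rga-convergence}. The qualitative conclusions survive (one simply needs a larger $\lambda$), but the quantitative constants in the paper should be revised.
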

\begin{proof}
Let $M_a = U_a \Sigma_a U_a^\top$ be the Eigen-decomposition of the matrix $M_a$. Then we have
\begin{align*}
    &\sum_a \Identity - \gamma(M_a + M_a^\top) + \gamma^2 M_a^\top M_a = \sum_a \left(\Identity - \gamma M_a \right)^\top \left(\Identity - \gamma M_a \right) \\
    = &\sum_a \left(U_a(\Identity - \gamma \Sigma_a) U_a^\top \right)^\top \left( U_a(\Identity - \gamma \Sigma_a) U_a^\top\right) = \sum_a U_a \left(\Identity - \gamma \Sigma_a \right)^2 U_a^\top\\
    \succcurlyeq &\sum_a \Identity (1-\gamma)^2 = A (1-\gamma)^2 \Identity
\end{align*}
The last line follows the largest eigenvalue of $M_a$ is $1$, and therefore the smallest diagonal entry of the matrix $(\Identity - \gamma \Sigma_a)^2$ is at least $(1-\gamma)^2$.
\end{proof}

\subsection{Proof of Convergence of Repeated Gradient Ascent (Theorem~\ref{thm:rga-convergence}) }
 \begin{proof}
The dual of the  optimization problem~\ref{eq:primal-projection} is given as follows.
\begin{align}
    \begin{split}
    \max_{h \in \R^S}\ &\sum_{s,a} h(s) \left((1-\eta \lambda) d_t(s,a) + \eta r_t(s,a) \right) - \sum_s h(s) \rho(s)\\
    &-\gamma \cdot \sum_s h(s) \sum_{s',a} P_t(s',a,s) \left((1-\eta \lambda) d_t(s',a) + \eta r_t(s',a) \right)- \frac{A}{2} \sum_s h(s)^2\\
    & + \gamma \cdot \sum_s h(s) \sum_{s',a} h(s') P_t(s',a,s)
    -\frac{\gamma^2}{2} \sum_{s',s''} h(s') h(s'') \sum_{s,a} P_t(s,a,s') P_t(s,a,s'')
    \end{split}
\end{align}
We will consider the equivalent minimization problem.
\begin{align}
    \begin{split}\label{eq:dual-projection}
    \min_{h \in \R^S}\ &-\sum_{s,a} h(s) \left((1-\eta \lambda) d_t(s,a) + \eta r_t(s,a) \right) + \sum_s h(s) \rho(s)\\
    &+\gamma \cdot \sum_s h(s) \sum_{s',a} P_t(s',a,s) \left((1-\eta \lambda) d_t(s',a) + \eta r_t(s',a) \right)+ \frac{A}{2} \sum_s h(s)^2 \\
    &- \gamma \cdot \sum_s h(s) \sum_{s',a} h(s') P_t(s',a,s)
    +\frac{\gamma^2}{2} \sum_{s',s''} h(s') h(s'') \sum_{s,a} P_t(s,a,s') P_t(s,a,s'')
    \end{split}
\end{align}
Let us call the above objective function $\calP(\cdot; M)$ for a given MDP $M$. Consider two occupancy measures $d$ and $\hat{d}$. Let $r$ (resp. $\hat{r}$)  be the reward functions in response to the occupancy measure $d$ (resp. $\hat{d}$) i.e. $r = \calR(d)$ and $\hat{r} = \calR(\hat{d})$. Similarly let $P$ (resp. $\hat{P}$) be the probability transition functions in response to the occupancy measures $d$ (resp. $\hat{d}$). 

We will write $GD(\cdot)$ to denote the projected gradient ascent step defined in \cref{eq:gradient-ascent-pp}. In particular, if we write $\calC$ to define the set of occupancy measures feasible with respect to $P$, then we have
\begin{equation}\label{eq:defn-GD-operator}
GD(d) = \proj_{\calC}\left( (1-\eta \lambda) d + \eta r\right) 
\end{equation}

Note that $GD_\eta(d)$ is the optimal solution to the primal problem~\ref{eq:primal-projection} with $d_t = d$. Let $h$ be the corresponding dual optimal solution. Similarly let $\hat{h}$ be the optimal dual solution corresponding to the occupancy measure $\hat{d}$. Since $h$ is the unique minimizer of $\calP(\cdot; M)$ and $\calP(\cdot; M)$ is $A(1-2\gamma)$-strongly convex for any $M$ (lemma~\ref{lem:strong-convexity-projection}) we have the following set of inequalities.
\begin{align*}
\calP(h; M, d) - \calP(\hat{h}; M, d) &\ge (h - \hat{h})^\top \nabla \calP(\hat{h}; M, d) + A(1-\gamma)^2/2 \norm{h - \hat{h}}_2^2 \\
\calP(\hat{h}; M, d) - \calP(h; M, d) &\ge A(1-\gamma)^2/2 \norm{h - \hat{h}}_2^2
\end{align*}
These two inequalities give us the following bound.
\begin{align}\label{eq:bound-inter-projection}
    -A(1-\gamma)^2 \norm{h - \hat{h}}_2^2 \ge (h - \hat{h})^\top \nabla \calP(\hat{h}; M, d)
\end{align}
We now apply lemma~\ref{lem:dual-smoothness-projection} to bound the Lipschitz constant of the term $(h - \hat{h})^\top \nabla \calP(\hat{h}; M)$.
\begin{align*}
    &\norm{\nabla \calP(\hat{h}; M, d) - \nabla \calP(\hat{h}; \hat{M}, \hat{d})}_2^2 \le 5A(1-\eta \lambda)^2 (1+2\gamma^2 S^2) \norm{d - \hat{d}}_2^2 + 5\eta^2 A \left( (1-\eta \lambda)^2 + 2\gamma^2 S^2 \right) \norm{r - \hat{r}}_2^2\\
&+ 5 \gamma^2 SA \left( \frac{2(1-\eta \lambda)^2}{(1-\gamma)^2} + 2\eta^2 + 6 \gamma^2 \norm{\hat{h}}_2^2 \right) \norm{P - \hat{P}}_2^2\\
&\le \left( 5A(1-\eta \lambda)^2 (1 + \eta^2\epsilon_r^2 + 2\gamma^2 S^2) + 10 \eta^2 \gamma^2 A S^2 \epsilon_r^2 \right) \norm{d - \hat{d}}_2^2\\
&+ 5 \gamma^2 SA \left( \frac{2(1-\eta \lambda)^2}{(1-\gamma)^2} + 2\eta^2 + 12 \gamma^2 \left( \frac{(1+2\eta S)^2}{(1-\gamma)^4} + 4 (1-\eta \lambda)^2 \frac{S^2}{A(1-\gamma)^6}\right) \right) \epsilon_p^2 \norm{d - \hat{d}}_2^2\\
&\le \left( 5A(1-\eta \lambda)^2 \left( 1 + \eta^2 \epsilon_r^2 + 2\gamma^2 S^2 +\frac{2\gamma^2 S}{(1-\gamma)^2}\epsilon_p^2 + \frac{48 \gamma^4 S^3}{A(1-\gamma)^6}\epsilon_p^2\right)  \right.\\
&+ \left. 10 \eta^2 \gamma^2 A S^2 \epsilon_r^2 + 10 \eta^2 \gamma^2 SA \epsilon_p^2 + 60 \gamma^4 SA \epsilon_p^2 \frac{(1+2\eta S)^2}{(1-\gamma)^4}\right) \norm{d - \hat{d}}_2^2\\
&\le \underbrace{ \left( 5A(1-\eta \lambda)^2 \left( 1 + \eta^2 \epsilon_r^2 + 2\gamma^2 S^2 + \frac{50 \gamma^2 S^3}{(1-\gamma)^6}\epsilon_p^2\right)  + 10 \eta^2 \gamma^2 S^2A (\epsilon_p^2 +\epsilon_r^2) + 60 \gamma^4 SA \epsilon_p^2 \frac{(1+2\eta S)^2}{(1-\gamma)^4}\right)}_{:=\Delta^2} \norm{d - \hat{d}}_2^2
\end{align*}
The last inequality uses lemma~\ref{lem:bound-optimal-dual-projection} and assumption~\ref{sensitivity-assumption}. Substituting this bound in equation~\ref{eq:bound-inter-projection} we get the following inequality.
\begin{align*}
    &-A(1-\gamma)^2 \norm{h - \hat{h}}_2^2 \ge (h - \hat{h})^\top \nabla \calP(\hat{h}; M, d) \\
    &= (h - \hat{h})^\top \nabla \calP(\hat{h}; M, d) - (h - \hat{h})^\top \nabla \calP(\hat{h}; \hat{M}, \hat{d})\\
    &\ge - \norm{h - \hat{h}}_2 \norm{\nabla \calP(\hat{h}; M, d) - \nabla \calP(\hat{h}; \hat{M}, \hat{d}) }_2 \ge - \Delta \norm{h - \hat{h}}_2 \norm{d - \hat{d}}_2
\end{align*}
Rearranging we get the following inequality.
\begin{align*}
    &\norm{h - \hat{h}}_2 \le \frac{\Delta}{A(1-\gamma)^2} \norm{d - \hat{d}}_2\\
    \Rightarrow & \norm{h - \hat{h}}_2^2 \le \frac{\Delta^2}{A^2(1-\gamma)^4} \norm{d - \hat{d}}_2^2\\
    \Rightarrow &\frac{\norm{\GD_\eta(d) - \GD_\eta(\hat{d})}_2^2}{8 \gamma^2 SA} - \frac{4(1-\eta \lambda)^2 + 4\eta^2 \epsilon_r^2 + 8\gamma^2 \norm{\hat{h}}_2^2 \epsilon_p^2 }{8 \gamma^2 SA} \norm{d - \hat{d}}_2^2 \le  \frac{\Delta^2}{A^2(1-\gamma)^4} \norm{d - \hat{d}}_2^2
\end{align*}
The last line uses lemma~\ref{lem:deviation-d-to-h-projection}. After rearranging we get the following inequality.
\begin{align*}
    &\norm{\GD_\eta(d) - \GD_\eta(\hat{d})}_2^2 \le \left(4(1-\eta \lambda)^2 + 4\eta^2 \epsilon_r^2 + 8\gamma^2 \norm{\hat{h}}_2^2 \epsilon_p^2 + \frac{8 \gamma^2 \Delta^2 S}{A(1-\gamma)^4} \right) \norm{d - \hat{d}}_2^2\\
    &\le \left(4(1-\eta \lambda)^2 + 4\eta^2 \epsilon_r^2 + \frac{16\gamma^2 \epsilon_p^2 (1+2\eta S)^2}{(1-\gamma)^4} + 32 (1-\eta \lambda)^2 \frac{\gamma^2 \epsilon_p^2 S^2}{A (1-\gamma)^6} + \frac{8 \gamma^2 \Delta^2 S}{A(1-\gamma)^4} \right) \norm{d - \hat{d}}_2^2
\end{align*}
For $\eta = 1/\lambda$ we get the following bound.
\begin{align*}
&\norm{\GD_\eta(d) - \GD_\eta(\hat{d})}_2^2 \le \left( \frac{4\epsilon_r^2}{\lambda^2} + \frac{16\gamma^2 \epsilon_p^2 (1+2S/\lambda)^2}{(1-\gamma)^4} + \frac{80 \gamma^4 S^3 (\epsilon_r^2 + \epsilon_p^2)}{\lambda^2 (1-\gamma)^4} + \frac{480 \gamma^6 S^2\epsilon_p^2 (1+2S/\lambda)^2}{(1-\gamma)^8 }\right) \norm{d - \hat{d}}_2^2
\end{align*}
If we choose $\lambda \ge \max\set{4\epsilon_r, 2S, \frac{20 \gamma^2 S^{1.5}(\epsilon_r + \epsilon_p)}{(1-\gamma)^2} }$ we get the following condition.
\begin{align*}
&\norm{\GD_\eta(d) - \GD_\eta(\hat{d})}_2^2 \le \left( \frac{1}{2} +  \frac{64\gamma^2 \epsilon_p^2}{(1-\gamma)^4} + \frac{1920 \gamma^6 S^2\epsilon_p^2 }{(1-\gamma)^8 }\right) \norm{d - \hat{d}}_2^2
\end{align*}
For a contraction mapping we need the following condition.
$$
\frac{64\gamma^2 \epsilon_p^2}{(1-\gamma)^4}\left( 1 + \frac{30 \gamma^4 S^2}{(1-\gamma)^4}\right) < \frac{1}{2}
$$
We consider two cases. First, if $\frac{30\gamma^4 S^2}{(1-\gamma)^4} < 1$ then one can show that a sufficient condition is $\epsilon_p < \gamma S / 3$. On the other hand, if $\frac{30\gamma^4 S^2}{(1-\gamma)^4} \ge 1$ then we need $\epsilon_p < \frac{(1-\gamma)^4}{100\gamma^3 S}$. Combining the two conditions above a sufficient condition for contraction is the following.
$$
\epsilon_p < \min \set{\frac{\gamma S}{3},\frac{(1-\gamma)^4}{100 \gamma^3 S}}
$$
Now if we set $\mu = \sqrt{ \frac{1}{2} +  \frac{64\gamma^2 \epsilon_p^2}{(1-\gamma)^4} + \frac{1920 \gamma^6 S^2\epsilon_p^2 }{(1-\gamma)^8 }}$ we get the contraction mapping: $\norm{\GD_\eta(d) - \GD_\eta(\hat{d})}_2 \le \mu \norm{d - \hat{d}}_2$. Let $d^\star$ be the fixed point of this contraction mapping. Using $d = d_{t}$ and $\hat{d} = d^\star$ we get the following sequence of inequalities.
$$
\norm{d_{t+1} - d^\star}_2 \le \mu \norm{d_t - d^\star}_2 \le \ldots \le \mu^t \norm{d_1 - d^\star}_2 \le \mu^t \frac{2}{1-\gamma}
$$
The last inequality uses the fact that for any occupancy measure $d$ we have $\norm{d}_2 \le \norm{d}_1 \le \frac{1}{1-\gamma}$. Rearranging we get that as long as $t \ge \ln\left( \frac{2}{\delta(1-\gamma)}\right)/\ln(1/\mu)$ we have $\norm{d_{t} - d^\star}_2 \le \delta$.

We now show that the fixed point $d^\star$ is a stable point. In response to $d^\star$, let the probability transition function (resp. reward function) be $P^\star$ (resp. $d^\star$). Let $\mathcal{C}^\star$ be the set of occupancy measures corresponding to $d^\star$. Note that $\mathcal{C}^\star$ is a convex set. We consider two cases. First, $(1-\eta \lambda) d^\star + \eta r^\star \in \mathcal{C}^\star$. Then $d^\star = \GD_\eta(d^\star) = (1-\eta \lambda) d^\star + \eta r^\star$ and $r^\star - \lambda d^\star = \nabla \calP(d^\star; P^\star, r^\star) = 0$. Since $ \calP(\cdot; P^\star, r^\star)$ is a concave function the occupancy measure $d^\star$ is the optimal point and is a stable point.

Second, we consider the case when $(1-\eta \lambda) d^\star + \eta r^\star \notin \mathcal{C}^\star$. Since $d^\star = \proj_{\mathcal{C}^\star}\left( (1-\eta \lambda)d^\star + \eta r^\star \right)$. Since $\mathcal{C}^\star$ is a convex set, by the  projection theorem (see e.g. \cite{Bertie09}) we have the following inequality for any $d \in \mathcal{C}^\star$.
\begin{align*}
&\left( (1-\eta \lambda)d^\star + \eta r^\star  - d^\star\right)^\top (d - d^\star) \le 0\\
\Rightarrow\ &\eta (\lambda d^\star - r^\star)^\top (d - d^\star) \le 0\\
\Rightarrow\ &\nabla \calP(d^\star; P^\star, r^\star)^\top (d - d^\star) \ge 0
\end{align*}
This implies that $d^\star$ maximizes the function $\calP(\cdot; P^\star, r^\star)$ over the set $\mathcal{C}^\star$ and is a stable point.
\end{proof}

\begin{lemma}\label{lem:deviation-d-to-h-projection}
Consider two state-action occupancy measures $d$ and $\hat{d}$.  Let $h$ (resp. $\hat{h}$) be the dual optimal solutions to the projection (\cref{eq:dual-projection}) corresponding to occupancy measure $d_t = d$ (resp. $\hat{d}$). Then we have the following inequality.
$$
\norm{\GD_\eta(d) - \GD_\eta(\hat{d})}_2^2 \le \left( 4(1-\eta \lambda)^2 + 4 \eta^2 \epsilon_r^2 + 8 \gamma^2 \norm{\hat{h}}_2^2 \epsilon_p^2 \right) \norm{d - \hat{d}}_2^2 + 8 \gamma^2 SA \norm{h - \hat{h}}_2^2
$$
\end{lemma}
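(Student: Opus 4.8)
The plan is to extract a closed form for $\GD_\eta(d)$ from the Karush--Kuhn--Tucker conditions of the projection program~\eqref{eq:primal-projection}. Writing $v_d(s,a)=(1-\eta\lambda)d(s,a)+\eta\,\calR(d)(s,a)$ and letting $h$ denote the optimal dual variable of~\eqref{eq:dual-projection} associated with the occupancy measure $d$, stationarity of the Lagrangian in the primal variable together with the constraint $d'\ge 0$ yields the coordinatewise identity
\[
\GD_\eta(d)(s,a)=\pos{\,v_d(s,a)-h(s)+\gamma\textstyle\sum_{s'}h(s')\,\calP(d)(s,a,s')\,},
\]
and the analogous identity for $\hat d$ with dual optimum $\hat h$ and $v_{\hat d}(s,a)=(1-\eta\lambda)\hat d(s,a)+\eta\,\calR(\hat d)(s,a)$. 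Because $t\mapsto\pos{t}$ is $1$-Lipschitz, $\bigl|\GD_\eta(d)(s,a)-\GD_\eta(\hat d)(s,a)\bigr|$ is at most the absolute value of the difference of the two bracketed expressions, so it suffices to control the $\ell_2$-norm of that difference over all $(s,a)$.

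I would then split that difference into four pieces: the drift term $(1-\eta\lambda)\bigl(d(s,a)-\hat d(s,a)\bigr)$, the reward term $\eta\bigl(\calR(d)-\calR(\hat d)\bigr)(s,a)$, the dual term $-\bigl(h(s)-\hat h(s)\bigr)$, and the transition term $\gamma\bigl(\sum_{s'}h(s')\calP(d)(s,a,s')-\sum_{s'}\hat h(s')\calP(\hat d)(s,a,s')\bigr)$, apply the elementary inequality $(x_1+x_2+x_3+x_4)^2\le 4\sum_i x_i^2$, and sum over $(s,a)$. The drift and reward pieces immediately give $4(1-\eta\lambda)^2\norm{d-\hat d}_2^2$ and, by Assumption~\ref{sensitivity-assumption}, $4\eta^2\norm{\calR(d)-\calR(\hat d)}_2^2\le 4\eta^2\epsilon_r^2\norm{d-\hat d}_2^2$; the dual piece gives $4A\norm{h-\hat h}_2^2$, which will be merged (using $A\le SA$) into the $\gamma^2SA\norm{h-\hat h}_2^2$ term below.

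The transition term is the one that needs care. I would split it once more as $\sum_{s'}\bigl(h(s')-\hat h(s')\bigr)\calP(d)(s,a,s')+\sum_{s'}\hat h(s')\bigl(\calP(d)(s,a,s')-\calP(\hat d)(s,a,s')\bigr)$ and bound its square by twice the sum of the two squares. For the first summand, Jensen's inequality against the probability vector $\calP(d)(s,a,\cdot)$ bounds the square by $\sum_{s'}\bigl(h(s')-\hat h(s')\bigr)^2\calP(d)(s,a,s')$, which upon summing over $(s,a)$ and using $\sum_{s,a}\calP(d)(s,a,s')\le SA$ contributes $SA\norm{h-\hat h}_2^2$. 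For the second summand, Cauchy--Schwarz in $s'$ bounds the square by $\norm{\hat h}_2^2\sum_{s'}\bigl(\calP(d)(s,a,s')-\calP(\hat d)(s,a,s')\bigr)^2$, which upon summing over $(s,a)$ and invoking Assumption~\ref{sensitivity-assumption} contributes $\norm{\hat h}_2^2\,\epsilon_p^2\norm{d-\hat d}_2^2$. Assembling the four pieces and their constants gives the claimed inequality. The main obstacle is precisely this last term: one must route the perturbation in $h$ through a probability distribution so Jensen applies, let $\norm{\hat h}_2^2$ absorb the perturbation in $P$, and then keep the combinatorial factors $S$, $A$ and the numerical constants in line with those fixed by the downstream contraction argument in the proof of Theorem~\ref{thm:rga-convergence}; the individual estimates are of the same flavor as those in the proof of Lemma~\ref{lem:deviation-d-to-h}.
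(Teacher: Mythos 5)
Your proposal follows essentially the same route as the paper's proof: write $\GD_\eta(d)(s,a)$ in closed form from the stationarity condition of~\eqref{eq:primal-projection}, split the difference into the drift, reward, dual, and transition pieces via $(x_1+x_2+x_3+x_4)^2\le 4\sum_i x_i^2$, further split the transition piece into the $h$-perturbation and $P$-perturbation parts, and invoke Assumption~\ref{sensitivity-assumption}; the only cosmetic differences are your explicit $\pos{\cdot}$ with the $1$-Lipschitz remark (the paper just uses the unclipped identity) and Jensen against $P(s,a,\cdot)$ where the paper uses Cauchy--Schwarz, both yielding the same $SA$ factor. Note that both your writeup and the paper's handle the standalone $4A\norm{h-\hat h}_2^2$ term loosely (the paper silently drops it, and your ``merge via $A\le SA$'' does not literally dominate it by $8\gamma^2 SA\norm{h-\hat h}_2^2$ when $\gamma$ is small), so you inherit rather than introduce that constant-bookkeeping slip.
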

\begin{proof}
Recall the relationship between the dual and the primal variables.
$$
\GD_\eta(d)(s,a) = (1-\eta \lambda) d(s,a) + \eta r(s,a) - h(s) + \gamma  \sum_{\tilde{s}} h(\tilde{s}) P(s,a,\tilde{s})
$$
This gives us the following bound on the difference $(\GD_\eta(d)(s,a) - \GD_\eta(\hat{d})(s,a))^2$.
\begin{align*}
&\left(\GD_\eta(d)(s,a) - \GD_\eta(\hat{d})(s,a) \right)^2 \le 4(1-\eta \lambda)^2  \left(d(s,a) - \hat{d}(s,a) \right)^2 + 4\eta^2 \left(r(s,a) - \hat{r}(s,a) \right)^2\\
&+  4\left( h(s) - \hat{h}(s)\right)^2 + 4\gamma^2 \left( \sum_{s'} h(s') P(s,a,s') - \sum_{s'} \hat{h}(s') \hat{P}(s,a,s') \right)^2\\
&\le 4(1-\eta \lambda)^2  \left(d(s,a) - \hat{d}(s,a) \right)^2 + 4\eta^2 \left(r(s,a) - \hat{r}(s,a) \right)^2\\
&+ 4\left( h(s) - \hat{h}(s)\right)^2 + 4\gamma^2 \left(  \sum_{s'} \left(h(s') - \hat{h}(s') \right) P(s,a,s') + \hat{h}(s') \left( P(s,a,s') - \hat{P}(s,a,s')\right)\right)^2\\
&\le 4(1-\eta \lambda)^2  \left(d(s,a) - \hat{d}(s,a) \right)^2 + 4\eta^2 \left(r(s,a) - \hat{r}(s,a) \right)^2\\
&+ 8\gamma^2 \left( \sum_{s'} \left(h(s') - \hat{h}(s') \right) P(s,a,s')\right)^2 + 8 \gamma^2 \left(\sum_{s'} \hat{h}(s') \left( P(s,a,s') - \hat{P}(s,a,s')\right) \right)^2\\
&\le 4(1-\eta \lambda)^2  \left(d(s,a) - \hat{d}(s,a) \right)^2 + 4\eta^2 \left(r(s,a) - \hat{r}(s,a) \right)^2\\
&+ 8\gamma^2 \norm{h - \hat{h}}_2^2 + 8 \gamma^2 \norm{\hat{h}}_2^2 \sum_{s'}  \left( P(s,a,s') - \hat{P}(s,a,s') \right)^2\\
\end{align*}
Now summing over $s$ and $a$ we get the following bound.
$$
\norm{\GD_\eta(d) - \GD_\eta(\hat{d})}_2^2 \le 4(1-\eta \lambda)^2 \norm{d - \hat{d}}_2^2 + 4\eta^2 \norm{r - \hat{r}}_2^2 + 8 \gamma^2 SA \norm{h - \hat{h}}_2^2 + 8 \gamma^2 \norm{\hat{h}}_2^2 \norm{P - \hat{P}}_2^2
$$
We now use the assumptions $\norm{r - \hat{r}}_2 \le \epsilon_r \norm{d - \hat{d}}_2$ and $\norm{P - \hat{P}}_2 \le \epsilon_p \norm{d - \hat{d}}_2$.
\begin{align*}
\norm{\GD_\eta(d) - \GD_\eta(\hat{d})}_2^2 \le \left( 4(1-\eta \lambda)^2 + 4 \eta^2 \epsilon_r^2 + 8 \gamma^2 \norm{\hat{h}}_2^2 \epsilon_p^2 \right) \norm{d - \hat{d}}_2^2 + 8 \gamma^2 SA \norm{h - \hat{h}}_2^2
\end{align*}

\end{proof}

\begin{lemma}\label{lem:strong-convexity-projection}
The objective function $\calP(\cdot; M)$ (as defined in ~\ref{eq:dual-projection}) is $A(1-\gamma)^2$-strongly convex.
\end{lemma}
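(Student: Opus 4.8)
The plan is to observe that $\calP(\cdot; M)$ is a quadratic function of $h$, so proving $A(1-\gamma)^2$-strong convexity reduces to lower-bounding the smallest eigenvalue of its Hessian; the linear terms involving $d_t$, $r_t$, and $\rho$ are irrelevant. Concretely, I would first isolate the terms of \eqref{eq:dual-projection} that are quadratic in $h$: the term $\frac{A}{2}\sum_s h(s)^2 = \frac{A}{2}\norm{h}_2^2$; the bilinear term $-\gamma\sum_s h(s)\sum_{s',a} h(s')P_t(s',a,s)$; and the term $\frac{\gamma^2}{2}\sum_{s',s''}h(s')h(s'')\sum_{s,a}P_t(s,a,s')P_t(s,a,s'')$.

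Next, following the notation of Lemma~\ref{lem:eigenvalue-bound}, I would define for each action $a$ the matrix $M_a \in \R^{S\times S}$ with $M_a(s,s') = P_t(s,a,s')$. Then one checks $\sum_s h(s)\sum_{s',a}h(s')P_t(s',a,s) = \sum_a h^\top M_a^\top h = \tfrac12\sum_a h^\top(M_a + M_a^\top)h$ (the last step by symmetrizing a scalar) and $\sum_{s',s''}h(s')h(s'')\sum_{s,a}P_t(s,a,s')P_t(s,a,s'') = \sum_a h^\top M_a^\top M_a h$. Reading off the Hessian of $\calP(\cdot;M)$ therefore gives
\begin{align*}
\nabla^2\calP(h;M) = A\Identity - \gamma\sum_a\left(M_a + M_a^\top\right) + \gamma^2\sum_a M_a^\top M_a = \sum_a \left(\Identity - \gamma M_a\right)^\top\left(\Identity - \gamma M_a\right),
\end{align*}
which is independent of $h$.

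Finally, I would invoke Lemma~\ref{lem:eigenvalue-bound} directly: it asserts precisely that $\lambda_{\min}\bigl(\sum_a\Identity - \gamma(M_a + M_a^\top) + \gamma^2 M_a^\top M_a\bigr)\ge A(1-\gamma)^2$. Hence $\nabla^2\calP(h;M)\succcurlyeq A(1-\gamma)^2\,\Identity$ for every $h$, which is exactly the statement that $\calP(\cdot;M)$ is $A(1-\gamma)^2$-strongly convex. This mirrors the computation in Lemma~\ref{lem:strong-convexity}; the only difference is that here the coefficient of $\norm{h}_2^2$ is $A/2$ rather than $A/(2\lambda)$, so no $1/\lambda$ factor survives. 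I do not expect any genuine obstacle: the only care needed is to collect the quadratic-in-$h$ terms correctly and to symmetrize the bilinear term before reading off the Hessian, after which the eigenvalue bound is immediate from Lemma~\ref{lem:eigenvalue-bound}.
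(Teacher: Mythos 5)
Your proposal is correct and follows essentially the same route as the paper: both arguments reduce the claim to the positive-definiteness of $\sum_a \left(\Identity - \gamma M_a\right)^\top\left(\Identity - \gamma M_a\right)$ and invoke Lemma~\ref{lem:eigenvalue-bound}. The only cosmetic difference is that you read off the (constant) Hessian of the quadratic directly, whereas the paper verifies the equivalent gradient-monotonicity inequality $\left(\nabla\calP(h)-\nabla\calP(\tilde{h})\right)^\top(h-\tilde{h})\ge A(1-\gamma)^2\norm{h-\tilde{h}}_2^2$; for a quadratic objective these are the same computation.
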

\begin{proof}
The derivative of the objective function $\calP(\cdot; M)$ with respect to $h(s)$ is given as follows.
\begin{align}
    \begin{split}\label{eq:derivative-dual-projection}
        \frac{\partial \calP(h;M_t)}{\partial h(s)} &= - \sum_a \left((1-\eta \lambda) d_t(s,a) + \eta r_t(s,a) \right) + \rho(s) + \gamma \cdot \sum_{s',a} P_t(s',a,s) \left((1-\eta \lambda) d_t(s,a) + \eta r_t(s,a) \right) \\
        &+ A h(s) - \gamma \cdot \sum_{\tilde{s},a} h(\tilde{s}) \left(P_t(\tilde{s},a,s) + P_t(s,a,\tilde{s}) \right) + {\gamma^2} \sum_{s'} h(s') \sum_{\tilde{s},a} P_t(\tilde{s},a,s') P_t(\tilde{s},a,s)
    \end{split}
\end{align}
This gives us the following identity.
\begin{align*}
    \left(\nabla \calP(h;M_t) - \nabla \calP(\tilde{h};M_t) \right)^\top (h - \tilde{h}) &= A \norm{h - \tilde{h}}_2^2\\
    &- \gamma \cdot \sum_{s,s',a} (h(s') -\tilde{h}(s')) (P_t(s',a,s) + P_t(s,a,s')) (h(s) - \tilde{h}(s))\\
    &+ \gamma^2 \sum_{s',s}(h(s') - \tilde{h}(s')) \sum_{\tilde{s},a} P_t(\tilde{s},a,s') P_t(\tilde{s},a,s) (h(s) - \tilde{h}(s))
\end{align*}
Now for each action $a$, we define the following matrix $M_a \in R^{S \times S}$ with entries $M_a(s,s') = P_t(s,a,s')$. Note that matrix $M_a$ is row-stochastic and has eigenvalues bounded between $-1$ and $1$. 
\begin{align*}
     \left(\nabla \calP(h;M_t) - \nabla \calP(\tilde{h};M_t) \right)^\top (h - \tilde{h}) &= A \norm{h - \tilde{h}}_2^2 - \gamma \cdot (h - \tilde{h})^\top \left( \sum_a M_a + M_a^\top \right) (h - \tilde{h}) \\
     &+ \gamma^2 (h - \tilde{h})^\top \left(\sum_a M_a^\top M_a \right) (h - \tilde{h})\\
     &= (h -\tilh)^\top \sum_a \left( \Identity - \gamma (M_a + M_a^\top) + \gamma^2 M_a^\top M_a\right) (h - \tilh)\\
     &\ge A(1-\gamma)^2 \norm{h - \tilh}_2^2
\end{align*}
The last line uses lemma~\ref{lem:eigenvalue-bound}.
\end{proof}

\begin{lemma}\label{lem:dual-smoothness-projection}
The dual function (as defined in \ref{eq:dual-projection}) satisfies the following guarantee for any $h$, occupancy measures $d, \hat{d}$, and MDP $M, \hat{M}$.
\begin{align*}
\norm{\nabla \calP(h; M, d) - \nabla \calP(h; \widehat{M}, \hat{d})}_2 &\le 5A(1-\eta \lambda)^2 (1+2\gamma^2 S^2) \norm{d - \hat{d}}_2^2 \\&+ 5\eta^2 A \left( (1-\eta \lambda)^2 + 2\gamma^2 S^2 \right) \norm{r - \hat{r}}_2^2\\
&+ 5 \gamma^2 SA \left( \frac{2(1-\eta \lambda)^2}{(1-\gamma)^2} + 2\eta^2 + 6 \gamma^2 \norm{h}_2^2 \right) \norm{P - \hat{P}}_2^2
t\end{align*}
\end{lemma}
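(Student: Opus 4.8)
The plan is to expand $\nabla\calP(h;M,d) - \nabla\calP(h;\widehat{M},\hat{d})$ coordinate-by-coordinate using the closed form for $\partial\calP/\partial h(s)$ in \cref{eq:derivative-dual-projection}, group the resulting difference by which object ($d$, $r$, or $P$) produces it, and bound each group with Jensen's and Cauchy--Schwarz inequalities plus row-stochasticity of the transition kernels, mirroring the proof of Lemma~\ref{lem:lipschitz-wrt-M} almost step for step. Fixing a coordinate $s$, the $\rho(s)$ and $Ah(s)$ terms cancel because they do not depend on $(M,d)$, and the remaining difference is a sum of five pieces: (i) $-\sum_a\big[(1-\eta\lambda)(d(s,a)-\hat{d}(s,a)) + \eta(r(s,a)-\hat{r}(s,a))\big]$; (ii) the term linear in $d,r$, namely $\gamma\sum_{s',a}\big[P(s',a,s)((1-\eta\lambda)d + \eta r)(s',a) - \hat{P}(s',a,s)((1-\eta\lambda)\hat{d} + \eta\hat{r})(s',a)\big]$; (iii)--(iv) the two $\gamma$-terms $-\gamma\sum_{\tilde{s},a} h(\tilde{s})\big[(P(\tilde{s},a,s)-\hat{P}(\tilde{s},a,s)) + (P(s,a,\tilde{s})-\hat{P}(s,a,\tilde{s}))\big]$; and (v) the $\gamma^2$-term $\gamma^2\sum_{s'}h(s')\sum_{\tilde{s},a}\big[P(\tilde{s},a,s')P(\tilde{s},a,s) - \hat{P}(\tilde{s},a,s')\hat{P}(\tilde{s},a,s)\big]$. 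Squaring, applying $(\sum_{i=1}^{5}x_i)^2 \le 5\sum_i x_i^2$, and summing over $s$ reduces the lemma to bounding the squared $\ell_2$-norm of each of these five groups.

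Groups (i), (iii), (iv) and (v) go through essentially as in Lemma~\ref{lem:lipschitz-wrt-M}. For (i), $\sum_s(\sum_a\,\cdot\,)^2 \le A\sum_{s,a}(\,\cdot\,)^2 \le 2A(1-\eta\lambda)^2\norm{d-\hat{d}}_2^2 + 2A\eta^2\norm{r-\hat{r}}_2^2$. Groups (iii) and (iv) are the analogues of Bounds 2 and 3 there: Cauchy--Schwarz in $\tilde{s}$ against $\norm{h}_2$ together with $\sum_{\tilde{s},a}(P(\tilde{s},a,s)-\hat{P}(\tilde{s},a,s))^2 \le \norm{P-\hat{P}}_2^2$ and row-stochasticity give $O(\gamma^2 A\norm{h}_2^2\norm{P-\hat{P}}_2^2)$ each. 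Group (v) is the analogue of Bound 4: telescope $P(\tilde{s},a,s')P(\tilde{s},a,s) - \hat{P}(\tilde{s},a,s')\hat{P}(\tilde{s},a,s)$ as $P(\tilde{s},a,s')(P(\tilde{s},a,s)-\hat{P}(\tilde{s},a,s)) + \hat{P}(\tilde{s},a,s)(P(\tilde{s},a,s')-\hat{P}(\tilde{s},a,s'))$, then Jensen, Cauchy--Schwarz and row-stochasticity yield $O(\gamma^4 S^2 A\norm{h}_2^2\norm{P-\hat{P}}_2^2)$.

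The one genuinely delicate step is group (ii), because $P$, $d$, and $r$ all vary simultaneously there. I would add and subtract the cross term $\hat{P}(s',a,s)((1-\eta\lambda)d + \eta r)(s',a)$, splitting (ii) into a ``$P$-changes'' part $\gamma\sum_{s',a}(P-\hat{P})(s',a,s)\,((1-\eta\lambda)d + \eta r)(s',a)$ and a ``$d,r$-change'' part $\gamma\sum_{s',a}\hat{P}(s',a,s)\,((1-\eta\lambda)(d-\hat{d}) + \eta(r-\hat{r}))(s',a)$. For the first, Cauchy--Schwarz in $(s',a)$ followed by the occupancy-measure norm bound $\norm{d}_2\le\norm{d}_1\le 1/(1-\gamma)$ and the reward bound $\norm{r}_2^2\le SA$ delivers the $\tfrac{(1-\eta\lambda)^2}{(1-\gamma)^2}$ and $\eta^2$ contributions to the coefficient of $\norm{P-\hat{P}}_2^2$; for the second, Cauchy--Schwarz against the (sub-)stochastic matrix $\hat{P}(\cdot,\cdot,s)$ and summing over $s$ produces the $S^2$-type factors multiplying $\norm{d-\hat{d}}_2^2$ and $\norm{r-\hat{r}}_2^2$. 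Collecting the five contributions, regrouping by $\norm{d-\hat{d}}_2^2$, $\norm{r-\hat{r}}_2^2$, $\norm{P-\hat{P}}_2^2$, and over-estimating constants (using $SA\ge 1$) gives the stated inequality. The main obstacle is precisely this splitting in (ii): one must invoke the boundedness of occupancy measures and rewards --- a feature absent from the purely primal Lemma~\ref{lem:lipschitz-wrt-M} --- and carefully track which scalar multiplies which norm so that the $1/(1-\gamma)^2$ factor lands on the $\norm{P-\hat{P}}_2^2$ term and nowhere else; the rest is routine bookkeeping.
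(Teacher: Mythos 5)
Your proposal follows essentially the same route as the paper's proof: the same coordinate-wise expansion of \cref{eq:derivative-dual-projection}, the same $(\sum_i x_i)^2 \le 5\sum_i x_i^2$ split into groups, the same telescoping of the $P\cdot d$, $P\cdot r$ and $P\cdot P$ products, and the same use of $\norm{d}_2 \le \norm{d}_1 \le 1/(1-\gamma)$ and reward boundedness to place the $1/(1-\gamma)^2$ factor on the $\norm{P-\hat{P}}_2^2$ term. One small correction: for your group (v) the analogy with Bound~4 of Lemma~\ref{lem:lipschitz-wrt-M} overshoots by a factor of $S$ --- here the $h(s')$ sits in an outer sum $\sum_{s'}h(s')\sum_{\tilde{s},a}(\cdots)$ rather than inside the triple sum, and the careful telescoping yields $4SA\norm{h}_2^2\norm{P-\hat{P}}_2^2$ rather than $O(\gamma^0 S^2A\norm{h}_2^2\norm{P-\hat{P}}_2^2)$, which is what is needed to land inside the stated coefficient $30\gamma^4 SA\norm{h}_2^2$.
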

\begin{proof}
From the expression of the derivative of the function $\calP(\cdot; M, d)$ (\ref{eq:derivative-dual-projection}) we have the following bound.
\begin{align*}
&\norm{\nabla \calP(h; M, d) - \nabla \calP(h; \widehat{M}, \hat{d})}_2^2 = \sum_s \left\{- (1-\eta \lambda)\sum_a (d(s,a) - \hat{d}(s,a)) \right. \\
&+ \eta (1-\eta \lambda) \sum_a (r(s,a) - \hat{r}(s,a)) + \gamma \eta \sum_{s',a}\left(P(s',a,s) r(s,a) - \hat{P}(s',a,s) \hat{r}(s,a) \right)\\
&+ \gamma (1-\eta \lambda) \sum_{s',a} \left( P(s',a,s) d(s,a) - \hat{P}(s',a,s) \hat{d}(s,a) \right)\\
&- \gamma \cdot \sum_{\tilde{s},a} h(\tilde{s}) \left( P(\tilde{s},a,s) + P(s,a,\tilde{s}) - \hat{P}(\tilde{s},a,s) - \hat{P}(s,a,\tilde{s})\right) \\
& + \left. \gamma^2 \cdot \sum_{s'} h(s') \sum_{\tilde{s},a} \left(P(\tilde{s},a,s') P(s,a,\tilde{s}) - \hat{P}(\tilde{s},a,s') \hat{P}(s,a,\tilde{s})\right) \right\}^2\\
&\le 5(1-\eta \lambda)^2 \sum_s \left(\sum_a (d(s,a) - \hat{d}(s,a)) \right)^2 + 5 \eta^2 (1-\eta \lambda)^2 \sum_s \left( \sum_a (r(s,a) - \hat{r}(s,a)) \right)^2 \\
&+ 5 \gamma^2 \eta^2 \sum_s \left(\sum_{s',a} \left( P(s',a,s) r(s,a) - \hat{P}(s',a,s) \hat{r}(s,a) \right) \right)^2\\
&+ 5 \gamma^2 (1-\eta \lambda)^2 \sum_s \left(\sum_{s',a} \left( P(s',a,s) d(s,a) - \hat{P}(s',a,s) \hat{d}(s,a) \right) \right)^2\\
&+ 5 \gamma^2 \sum_s \left( \sum_{\tilde{s},a} h(\tilde{s}) \left( P(\tilde{s},a,s) + P(s,a,\tilde{s}) - \hat{P}(\tilde{s},a,s) - \hat{P}(s,a,\tilde{s})\right)\right)^2\\
&+ 5 \gamma^4 \sum_s \left(\sum_{s'} h(s') \sum_{\tilde{s},a} \left(P(\tilde{s},a,s') P(s,a,\tilde{s}) - \hat{P}(\tilde{s},a,s') \hat{P}(s,a,\tilde{s})\right) \right)^2
\end{align*}
We now establish several bounds to complete the proof. The bounds mainly use the Cauchy-Schwarz inequality and the Jensen's inequality.

\begin{align*}
\textrm{Bound 1}: &\sum_s \left(\sum_{s',a} \left( P(s',a,s) d(s,a) - \hat{P}(s',a,s) \hat{d}(s,a) \right) \right)^2 \\
&\le \sum_s \left( \sum_{s',a} P(s',a,s) \left(d(s,a) - \hat{d}(s,a) \right) + \hat{d}(s,a) \left(P(s',a,s) - \hat{P}(s',a,s) \right)\right)^2\\
&\le 2 \sum_s \left( \sum_{s',a} P(s',a,s) \left(d(s,a) - \hat{d}(s,a) \right) \right)^2 + 2 \sum_s \left( \sum_{s',a} \hat{d}(s,a) \left(P(s',a,s) - \hat{P}(s',a,s) \right) \right)^2\\
&\le 2 \sum_s \sum_{s',a}\left( d(s,a) - \hat{d}(s,a)\right)^2 \sum_{s',a} \left( P(s',a,s) \right)^2 \\&+ 2 \sum_s \sum_a (\hat{d}(s,a))^2 \sum_a \left( \sum_{s'} P(s',a,s) - \hat{P}(s',a,s)\right)^2\\
&\le 2 S \norm{d - \hat{d}}_2^2 \sum_{s,a,s'} P(s',a,s) + \frac{2AS}{(1-\gamma)^2} \sum_{s,a,s'}\abs{P(s',a,s) - \hat{P}(s',a,s)}^2\\
&\le 2 S^2 A \norm{d - \hat{d}}_2^2 + \frac{2AS}{(1-\gamma)^2} \norm{P - \hat{P}}_2^2
\end{align*}
Similarly one can establish the following bound.
\begin{align*}
\textrm{Bound 2}: &\sum_s \left(\sum_{s',a} \left( P(s',a,s) r(s,a) - \hat{P}(s',a,s) \hat{r}(s,a) \right) \right)^2 \le 2 S^2 A \norm{r - \hat{r}}_2^2 + {2AS} \norm{P - \hat{P}}_2^2
\end{align*}
\begin{align*}
\textrm{Bound 3}: &\sum_s \left( \sum_{\tilde{s},a} h(\tilde{s}) \left( P(\tilde{s},a,s) + P(s,a,\tilde{s}) - \hat{P}(\tilde{s},a,s) - \hat{P}(s,a,\tilde{s})\right)\right)^2\\
&\le \sum_s \sum_{\tilde{s}} h(\tilde{s})^2 \sum_{\tilde{s}} \left( \sum_a \left( P(\tilde{s},a,s) + P(s,a,\tilde{s}) - \hat{P}(\tilde{s},a,s) - \hat{P}(s,a,\tilde{s})\right) \right)^2\\
&\le A \norm{h}_2^2 \sum_{s,\tilde{s},a} \left( P(\tilde{s},a,s) + P(s,a,\tilde{s}) - \hat{P}(\tilde{s},a,s) - \hat{P}(s,a,\tilde{s})\right)^2\\
&\le 2 A \norm{h}_2^2 \norm{P - \hat{P}}_2^2
\end{align*}
\begin{align*}
\textrm{Bound 4}: & \sum_s \left(\sum_{s'} h(s') \sum_{\tilde{s},a} \left(P(\tilde{s},a,s') P(s,a,\tilde{s}) - \hat{P}(\tilde{s},a,s') \hat{P}(s,a,\tilde{s})\right) \right)^2\\
&\le \sum_s \sum_{s'} h(s')^2 \sum_{s'} \left(\sum_{\tilde{s},a} \left(P(\tilde{s},a,s') P(s,a,\tilde{s}) - \hat{P}(\tilde{s},a,s') \hat{P}(s,a,\tilde{s})\right) \right)^2\\
&= \norm{h}_2^2 \sum_{s,s'} \left( \sum_{\tilde{s},a} P(s,a,\tilde{s})\left(P(\tilde{s},a,s') - \hat{P}(\tilde{s},a,s') \right) + \hat{P}(\tilde{s},a,s') \left( P(s,a,\tilde{s}) - \hat{P}(s,a,\tilde{s}) \right) \right)^2\\
&\le 2 \norm{h}_2^2 \sum_{s,s'} \left(\sum_{\tilde{s},a}  P(s,a,\tilde{s})\left(P(\tilde{s},a,s') - \hat{P}(\tilde{s},a,s') \right) \right)^2\\
&+  2 \norm{h}_2^2 \sum_{s,s'} \left(\sum_{\tilde{s},a}  \hat{P}(\tilde{s},a,s') \left( P(s,a,\tilde{s}) - \hat{P}(s,a,\tilde{s}) \right) \right)^2\\
&\le 2 \norm{h}_2^2 \sum_{s,s'} \sum_{\tilde{s},a} P(s,a,\tilde{s}) \sum_{\tilde{s},a} \left( P(\tilde{s},a,s') - \hat{P}(\tilde{s},a,s')\right)^2\\
&+ 2 \norm{h}_2^2 \sum_{s,s'} \sum_{\tilde{s},a}\hat{P}(\tilde{s},a,s') \left( P(s,a,\tilde{s}) - \hat{P}(s,a,\tilde{s}) \right)^2\\
&\le 2 \norm{h}_2^2 \norm{P-\hat{P}}_2^2 \sum_{s,\tilde{s},a} P(s,a,\tilde{s}) + 2 \norm{h}_2^2 \sum_{s,\tilde{s},a} \left(P(s,a,\tilde{s}) - \hat{P}(s,a,\tilde{s}) \right)^2\\
&\le 4SA \norm{h}_2^2 \norm{P - \hat{P}}_2^2
\end{align*}
We now substitute the four bounds above to complete the proof.
\begin{align*}
&\norm{\nabla \calP(h; M, d) - \nabla \calP(h; \widehat{M}, \hat{d})}_2^2 \le 5(1-\eta \lambda)^2 A \norm{d-\hat{d}}_2^2 + 5\eta^2 (1-\eta \lambda)^2 A \norm{r - \hat{r}}_2^2\\
&+ 5 \gamma^2 (1-\eta\lambda)^2 \left( 2 S^2 A \norm{d - \hat{d}}_2^2 + \frac{2AS}{(1-\gamma)^2} \norm{P - \hat{P}}_2^2\right) \\
&+ 5\gamma^2 \eta^2 \left(2 S^2 A \norm{r - \hat{r}}_2^2 + {2AS} \norm{P - \hat{P}}_2^2 \right)\\
&+ 10 \gamma^2 \norm{h}_2^2 \norm{P - \hat{P}}_2^2 + 20\gamma^4 SA \norm{h}_2^2 \norm{P - \hat{P}}_2^2  \\
&\le 5A(1-\eta \lambda)^2 (1+2\gamma^2 S^2) \norm{d - \hat{d}}_2^2 + 5\eta^2 A \left( (1-\eta \lambda)^2 + 2\gamma^2 S^2 \right) \norm{r - \hat{r}}_2^2\\
&+ 5 \gamma^2 SA \left( \frac{2(1-\eta \lambda)^2}{(1-\gamma)^2} + 2\eta^2 + 6 \gamma^2 \norm{h}_2^2 \right) \norm{P - \hat{P}}_2^2\\
\end{align*}
\end{proof}

\begin{lemma}\label{lem:bound-optimal-dual-projection}
For any choice of MDP $M$ and occupancy measure $d$, the $L_2$-norm of the optimal solution to the dual objective (as defined in \ref{eq:dual-projection}) is bounded by
$$
\frac{1+2\eta S}{(1-\gamma)^2} + 2\abs{1-\eta \lambda} \frac{S/\sqrt{A}}{(1-\gamma)^3}.
$$
\end{lemma}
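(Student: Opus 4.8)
The plan is to mimic the proof of Lemma~\ref{lem:bound-optimal-dual}. By Lemma~\ref{lem:strong-convexity-projection} the objective $\calP(\cdot; M)$ is strongly convex, hence has a unique minimizer $h^\star$, which is characterized by the first-order condition $\nabla \calP(h^\star; M) = 0$. Writing $M_a \in \R^{S\times S}$ with $M_a(s,s') = P_t(s,a,s')$ and setting $v(s,a) := (1-\eta\lambda) d_t(s,a) + \eta r_t(s,a)$, the gradient (from \cref{eq:dual-projection}) is $\nabla \calP(h;M) = B h - b$, where
\[
B = A \Identity - \gamma \sum_a \left(M_a + M_a^\top\right) + \gamma^2 \sum_a M_a^\top M_a = \sum_a \left(\Identity - \gamma M_a\right)^\top\left(\Identity - \gamma M_a\right),
\]
and $b$ collects the $h$-independent terms, $b(s) = \sum_a v(s,a) - \rho(s) - \gamma \sum_{s',a} P_t(s',a,s)\, v(s',a)$. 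By Lemma~\ref{lem:eigenvalue-bound}, $B \succcurlyeq A(1-\gamma)^2 \Identity$, so $B$ is invertible and $\norm{h^\star}_2 = \norm{B^{-1} b}_2 \le \norm{b}_2 / \big(A(1-\gamma)^2\big)$.

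It then remains to bound $\norm{b}_2$, which I would do by splitting $b$ into the $\rho$ piece, the pieces linear in $d_t$, and the pieces linear in $r_t$, and applying the triangle inequality. For the $\rho$ piece, $\norm{\rho}_2 \le \norm{\rho}_1 = 1$. The vector $s \mapsto \sum_a d_t(s,a)$ has $\ell_2$-norm at most $\norm{d_t}_1 \le \frac{1}{1-\gamma}$, and for the transition-weighted piece $s \mapsto \sum_{s',a} P_t(s',a,s) d_t(s',a)$, a Cauchy--Schwarz step pulling out $\sqrt{\sum_{s',a} P_t(s',a,s)} \le \sqrt{SA}$ followed by summing $P_t(s',a,s)$ over $s$ gives $\ell_2$-norm at most $\sqrt{SA}\,\norm{d_t}_2 \le \sqrt{SA}/(1-\gamma)$. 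For the reward pieces, using $r_t \in [0,1]^{S\times A}$ we have $\sum_a r_t(s,a) \le A$ and $\sum_{s',a} P_t(s',a,s) r_t(s',a) \le \sum_{s',a} P_t(s',a,s)$, and since $\sum_s \big(\sum_{s',a} P_t(s',a,s)\big)^2 \le (SA)^2$, the two corresponding $\ell_2$-norms are at most $A\sqrt{S}$ and $SA$. Altogether $\norm{b}_2 \le |1-\eta\lambda|\big(\tfrac{1}{1-\gamma} + \tfrac{\gamma\sqrt{SA}}{1-\gamma}\big) + 1 + \eta\big(A\sqrt{S} + \gamma SA\big)$.

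Dividing by $A(1-\gamma)^2$ and simplifying with $S,A\ge 1$ and $\gamma < 1$ — using $\tfrac{1}{A}\le 1$, $A\sqrt S + \gamma SA \le 2SA$, and $\tfrac{1}{A} + \tfrac{\gamma\sqrt{SA}}{A} = \tfrac{1}{A} + \tfrac{\gamma\sqrt S}{\sqrt A} \le \tfrac{2S}{\sqrt A}$ — gives $\norm{h^\star}_2 \le \tfrac{1+2\eta S}{(1-\gamma)^2} + 2|1-\eta\lambda|\tfrac{S/\sqrt A}{(1-\gamma)^3}$, which is the claimed bound.

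The norm estimates themselves are routine; the one step requiring care is the transition-weighted $d_t$ term, where Cauchy--Schwarz must be applied so that, after summing $P_t(s',a,s)$ over the target state $s$, one is left with exactly $\norm{d_t}_2^2$ rather than an extra factor of $S$ — this is what makes the final bound scale like $S/\sqrt A$ rather than $S^{3/2}/\sqrt A$. It is also worth being explicit that in $b$ the probability-weighted terms sum $v$ over the \emph{source} state $s'$ (consistent with the Bellman-flow constraint in \cref{eq:dual-projection}), since summing over the target state instead would introduce additional powers of $S$ in the reward term and break the stated bound.
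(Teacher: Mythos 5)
Your proposal is correct and follows essentially the same route as the paper's proof: characterize the unique minimizer via the first-order condition $Bh=b$, lower-bound $B \succcurlyeq A(1-\gamma)^2\Identity$ using Lemma~\ref{lem:eigenvalue-bound}, bound $\norm{b}_2$ (with the same Cauchy--Schwarz treatment of the transition-weighted occupancy term), and divide. The only cosmetic difference is that you bound $\norm{b}_2$ piecewise in $\ell_2$ rather than passing through $\norm{b}_1$ as the paper does, and your closing remark about summing $v$ over the source state correctly flags what is in fact a typo ($d(s,a)$ for $d(s',a)$) in the paper's displayed expression for $b(s)$.
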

\begin{proof}
The objective function $\calP$ is strongly-convex and has a unique solution. We set the derivative with respect to $h$ to zero and get the following system of equations.
\begin{align*}
&h(s) \left( A - 2\gamma \sum_a P(s,a,s) + \gamma^2 \sum_{\tilde{s},a} P(\tilde{s},a,s)^2 \right) \\
&+ \sum_{s' \neq s} h(s') \left( -\gamma \sum_a (P(s',a,s) + P(s,a,s')) + \gamma^2 \sum_{\tilde{s},a} P(\tilde{s},a,s') P(\tilde{s},a,s) \right)\\
&= \sum_a \left((1-\eta \lambda) d(s,a) + \eta r(s,a) \right) - \rho(s) - \gamma \cdot \sum_{s',a} P(s',a,s) \left((1-\eta \lambda) d(s,a) + \eta r(s,a) \right)
\end{align*}
Let us now define matrix $B \in \R^{S \times S}$ and a vector $b \in \R^S$ with the following entries.
\begin{align*}
B(s,s') = \left\{ \begin{array}{cc}
A - 2\gamma \sum_a P(s,a,s) + \gamma^2 \sum_{\tilde{s},a} P(\tilde{s},a,s)^2 & \textrm{if } s' = s\\
 - \gamma \sum_a (P(s',a,s) + P(s,a,s')) + \gamma^2 \sum_{\tilde{s},a} P(\tilde{s},a,s') P(\tilde{s},a,s) & \textrm{o.w.}
\end{array}\right.
\end{align*}
\begin{align*}
b(s) = \sum_a \left((1-\eta \lambda) d(s,a) + \eta r(s,a) \right) - \rho(s) - \gamma \cdot \sum_{s',a} P(s',a,s) \left((1-\eta \lambda) d(s,a) + \eta r(s,a) \right)
\end{align*}
Then the optimal solution is the solution to the system of equations $Bx=b$. We now provide a bound on the $L_2$-norm of such a solution. For each action $a$, we define a matrix $M_a \in \R^{S \times S}$ with entries $M_a(s,s') = P(s,a,s')$. Then the matrix $B$ can be expressed as
follows.
$$
B = A \cdot \Identity - \gamma \sum_a (M_a + M_a^\top) + \gamma^2 \sum_a M_a^\top M_a \succcurlyeq A(1-\gamma)^2 \Identity
$$
The last inequality uses lemma~\ref{lem:eigenvalue-bound}. This also implies that for $\gamma < 1$ the matrix $B$ is invertible.

We now bound the norm of the vector $b$. We will use the fact that for any state $\sum_{s,a} d(s,a) = 1/(1-\gamma)$.
\begin{align*}
    \norm{b}_2 &\le \norm{b}_1 \le \abs{1-\eta \lambda} \sum_{s,a}  d(s,a) + \eta \sum_{s,a} r(s,a)  + \sum_{s,a} \rho(s)\\
    &+ \gamma \abs{1-\eta \lambda} \cdot \sum_{s, s',a} P(s',a,s) d(s,a)  + \eta \gamma \sum_{s,s',a} P(s',a,s) \abs{r(s,a)} \\
    &\le \frac{\abs{1-\eta \lambda}}{1-\gamma} + \eta SA + A + \gamma \abs{1-\eta \lambda} \sum_{s'} \sqrt{\sum_{s,a} P(s',a,s)^2} \sqrt{\sum_{s,a} d(s,a)^2} + \eta \gamma SA\\
    &\le \frac{\abs{1-\eta \lambda}}{1-\gamma} + \eta SA + A + \gamma \abs{1-\eta \lambda} \norm{d}_2 \sum_{s'} \sqrt{\sum_{s,a} P(s',a,s) } + \eta \gamma SA\\
    &\le \frac{\abs{1-\eta \lambda}}{1-\gamma} + \eta SA + A + \gamma \abs{1-\eta \lambda} \frac{S\sqrt{A}}{1-\gamma}  + \eta \gamma SA \le A (1 + 2\eta S) + 2\abs{1-\eta \lambda} \frac{S\sqrt{A}}{1-\gamma} 
\end{align*}
The optimal solution to the dual objective is bounded by 
$$
\norm{A^{-1} b}_2 \le \frac{\norm{b}_2}{\lambda_{\min}(A) } \le \frac{1+2\eta S}{(1-\gamma)^2} + 2\abs{1-\eta \lambda} \frac{S/\sqrt{A}}{(1-\gamma)^3}
$$
\end{proof}

\subsection{Formal Statement and Proof of Convergence with Finite Samples (Theorem~\ref{thm:finite-samples-convergence})}
 \if 0
Recall the definition of the Lagrangian.
\begin{align}
    \calL(d,h; M_t) &= -\frac{\lambda}{2} \norm{d}_2^2 + \sum_s h(s)\rho(s) \nonumber \\
    &+ \sum_{s,a} d_t(s,a) \frac{d(s,a)}{d_t(s,a)} \left(r_t(s,a) - h(s) + \gamma \sum_{s'} P_t(s,a,s') h(s') \right) \label{eq:Lagrangian-modified}
\end{align}
We repeatedly solve the following optimization problem.
$$
(d_{t+1}, h_{t+1}) = \argmax_d \argmin_h \hat{\calL}(d,h; M_t)
$$
\fi 
\begin{theorem}
Suppose assumption~\ref{sensitivity-assumption} holds with $\lambda \ge \frac{24 S^{3/2} (2\epsilon_r + 5S\epsilon_p)}{(1-\gamma)^4}$, and assumption~\ref{asn:overlap} holds with parameter $B$. For a given $\delta$, and error probability $p$, if we repeatedly solve the optimization problem~\ref{eq:repeated-optim-finite} with number of samples 
\begin{align*}
    m_t \ge \frac{64 A (B+\sqrt{A})^2}{\beta^4 \delta^4 (2\epsilon_r + 5S \epsilon_p)^2} \left( \ln\left(\frac{t}{p}\right) + \ln\left(\frac{4S(B+\sqrt{A})}{\beta \delta(2\epsilon_r + 5S \epsilon_p)} \right)\right)
\end{align*}
then we have
\begin{align*}
    \norm{d_t - d_S}_2 \le \delta \ \forall t \ge (1-\mu)^{-1}\ln \left( \frac{2}{\delta(1-\gamma)}\right)\quad \textrm{with probability at least } 1-p
\end{align*}
where $\mu = \frac{24 S^{3/2} (2\epsilon_r + 5S\epsilon_p)}{(1-\gamma)^4}$.
\end{theorem}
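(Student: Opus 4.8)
The plan is to reduce the analysis to the exact-update case of Theorem~\ref{thm:primal-convergence}. Write $\GD(\cdot)$ for the operator that maps $d_t$ to the exact maximizer of \eqref{eq:regularized-rl-discounting}; by (the proof of) Theorem~\ref{thm:primal-convergence} this map is a contraction with modulus $\mu_0 = \tfrac{12 S^{3/2}(2\epsilon_r+5S\epsilon_p)}{\lambda(1-\gamma)^4}$ whose unique fixed point is $d_S$, and under $\lambda \ge \tfrac{24 S^{3/2}(2\epsilon_r+5S\epsilon_p)}{(1-\gamma)^4}$ we have $\mu_0 \le \tfrac12$. I will show that, with $m_t$ samples as in the statement, the empirical saddle point $(d_{t+1},h_{t+1})$ of \eqref{eq:repeated-optim-finite} satisfies $\norm{d_{t+1}-\GD(d_t)}_2 \le \mu_0\delta$ with high probability, which at once yields
\[
\norm{d_{t+1}-d_S}_2 \;\le\; \norm{d_{t+1}-\GD(d_t)}_2 + \norm{\GD(d_t)-d_S}_2 \;\le\; \mu_0\delta + \mu_0\,\norm{d_t-d_S}_2 .
\]
From this recurrence the conclusion follows exactly as in Theorem~3.10 of \cite{MPZH20}: since $\mu_0<\tfrac12$, the map $x\mapsto\mu_0\delta+\mu_0 x$ strictly contracts whenever $x>\delta$, so the iterate reaches the $\delta$-ball around $d_S$ after $O\!\big((1-2\mu_0)^{-1}\ln(\norm{d_1-d_S}_2/\delta)\big)$ rounds, and once inside it stays (as $\mu_0\delta+\mu_0\delta<\delta$); plugging in $\norm{d_1-d_S}_2\le\tfrac{2}{1-\gamma}$ and setting $\mu:=2\mu_0$ gives the stated bound $t\ge(1-\mu)^{-1}\ln(2/\delta(1-\gamma))$. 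The factor $\ln(t/p)$ in $m_t$ is present so that a union bound over the $O((1-\mu)^{-1}\ln(2/\delta(1-\gamma)))$ relevant rounds keeps the total failure probability below $p$.

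The core step is the per-round bound $\norm{d_{t+1}-\GD(d_t)}_2 \le \mu_0\delta$, and I would get it from a uniform closeness of the empirical and exact Lagrangians followed by a strong-convexity transfer. First, the $d$-maximization in \eqref{eq:repeated-optim-finite} may be restricted to $\{d\ge0:\ d(s,a)\le B\,\bar{d}_t(s,a)\ \forall s,a\}$ without changing the optimum, since by Assumption~\ref{asn:overlap} the exact maximizer $\rho^\star_{d_t}$ already lies there, and the $h$-minimization may be restricted to $\calH=\{h:\norm{h}_2\le 3S/(1-\gamma)^2\}$, which by Lemma~\ref{lem:bound-optimal-dual} contains the exact dual optimum. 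On this product set each of the $m_t$ i.i.d.\ summands of $\hat\calL$ in \eqref{eq:empirical-Lagrangian} is bounded in magnitude by $O\!\big(\tfrac{(B+\sqrt A)\,S}{(1-\gamma)^3 m_t}\big)$ — the importance weights are controlled by $O(B)$ by Assumption~\ref{asn:overlap} together with the box constraint $d\le B\bar{d}_t$, and $\norm{h}_\infty \le 3S/(1-\gamma)^2$ on $\calH$ — so a Chernoff--Hoeffding bound controls $|\hat\calL(d,h;M_t)-\calL(d,h;M_t)|$ for a fixed $(d,h)$, and an $\varepsilon$-net over the (polynomially bounded) product set, using that $\calL$ and $\hat\calL$ are Lipschitz in $(d,h)$ with polynomial constants, upgrades this to a uniform $\varepsilon$-bound at the price of an extra logarithmic factor. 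This is precisely where Assumption~\ref{asn:overlap} is needed: unlike in performative prediction, the round-$t$ data are generated by the behavior occupancy rather than by the target distribution, so the importance weights in $\hat\calL$ must be bounded.

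For the transfer, I use that $\calL(\cdot,\cdot;M_t)$ is $\lambda$-strongly concave in $d$ and, by Lemma~\ref{lem:strong-convexity}, $\tfrac{A(1-\gamma)^2}{\lambda}$-strongly convex in $h$, so it has a unique saddle point, namely $(\GD(d_t),h^\star_t)$; a uniform $\varepsilon$-approximation of its value then transfers to a bound of order $\sqrt{\varepsilon/\lambda}+\sqrt{\varepsilon\lambda/(A(1-\gamma)^2)}$ on the distance between $(d_{t+1},h_{t+1})$ and $(\GD(d_t),h^\star_t)$, hence $\norm{d_{t+1}-\GD(d_t)}_2 = O(\sqrt\varepsilon)$ up to $\lambda,\gamma,S,A$ factors. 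Forcing this to be $\le\mu_0\delta$ fixes $\varepsilon$ of order $\lambda\mu_0^2\delta^2$, and inverting the Hoeffding bound gives a requirement of the form $m_t = \tilde O\!\big(\tfrac{A(B+\sqrt A)^2}{\mu_0^4\delta^4(2\epsilon_r+5S\epsilon_p)^2}\ln\tfrac{t}{p}\big)$, matching the statement; the $(2\epsilon_r+5S\epsilon_p)^{-2}$ there comes from the threshold value of $\lambda$ being proportional to $2\epsilon_r+5S\epsilon_p$ and from $\mu_0\propto(2\epsilon_r+5S\epsilon_p)/\lambda$. Finally, the fixed point reached is $d_S$ because $\GD$ already has $d_S$ as its unique fixed point and the recurrence shows every limit point lies in every $\delta$-ball around $d_S$.

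The main obstacle is the uniform concentration step: correctly isolating the bounded-difference constant of each empirical summand in terms of $B,A,S,(1-\gamma)^{-1}$ (this is where the $B$-dependence and the $A$-dependence — the latter through the uniform fallback policy in \eqref{eq:dtopi} — enter), and choosing the $\varepsilon$-net granularity so the net contributes only a logarithmic overhead, which needs explicit polynomial Lipschitz constants for both $\calL$ and $\hat\calL$ in both arguments. A related subtlety is that the objective-to-iterate transfer must use strong convexity--concavity in \emph{both} variables simultaneously — a bound on the $d$-marginal alone does not suffice, since an inaccurate $h_{t+1}$ would spoil the maximizing $d_{t+1}$ — and this two-sided argument is available only because $\lambda$ is taken large relative to $2\epsilon_r+5S\epsilon_p$, which simultaneously makes $\calL$ strongly concave in $d$ with a useful constant and keeps $\mu_0<\tfrac12$.
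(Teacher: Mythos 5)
Your overall strategy is the same as the paper's: reduce to the exact operator $\GD(\cdot)$, which is a $\mu_0$-contraction with fixed point $d_S$ by the proof of Theorem~\ref{thm:primal-convergence}; show the empirical saddle point is within $O(\sqrt{\varepsilon/\lambda})$ of $\GD(d_t)$ via uniform concentration of $\hat{\calL}$ around $\calL$ on $\{d : d \le B\bar{d}_t\}\times\calH$ plus a strong-concavity transfer; derive the recurrence $\norm{d_{t+1}-d_S}_2 \le \beta\delta + \beta\norm{d_t-d_S}_2$; and finish with the contraction-outside/stay-inside case analysis from \cite{MPZH20} and a union bound over rounds. Two remarks. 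First, your insistence that the objective-to-iterate transfer ``must use strong convexity--concavity in both variables simultaneously'' is stronger than what is needed: the paper's Lemma~\ref{lem:apx-saddle-point} telescopes $\calL(d^\star,h^\star)-\calL(\hat d,h^\star)$ through five terms, three of which are nonpositive by optimality of the respective minimizers and two of which are at most $\varepsilon$ by the uniform closeness, so the final step uses only $\lambda$-strong concavity in $d$ evaluated at the \emph{exact} dual optimum $h_{t+1}$. Your worry that an inaccurate $\hat h$ spoils the maximizing $d$ is real but is absorbed by that telescoping; your two-sided variant would also work but costs an extra (and unnecessary) term.

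Second, there is a genuine gap in your concentration step. You propose an additive $\varepsilon$-net over the restricted set of occupancy measures, justified by ``polynomial Lipschitz constants'' of $\calL$ and $\hat{\calL}$. But the empirical Lagrangian~\eqref{eq:empirical-Lagrangian} contains importance weights $d(s_i,a_i)/d_t(s_i,a_i)$, so its additive Lipschitz constant in $d$ scales like $\max_{i} 1/d_t(s_i,a_i)$, which is not polynomially bounded: the box constraint $d \le B\bar d_t$ bounds the \emph{ratio}, not the sensitivity of the ratio to an additive perturbation of $d$. The paper circumvents this by building a \emph{multiplicative} $\varepsilon$-net on each coordinate (grid points $(1+\varepsilon)^k d_t(s,a)$ up to $B$), together with a truncation argument that discards coordinates with $d_t(s,a) \le \varepsilon/(4SABH)$ at an $O(\varepsilon)$ cost to the objective. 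Without some such device your union bound does not close, so this step of your sketch needs repair; everything else matches the paper's argument.
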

 
\begin{proof}
 
We will write $\widehat{\GD}(d_t)$ to denote the occupancy measure $d_{t+1}$. 
$$
(\widehat{\GD}(d_t), \hat{h}_{t+1}) = \argmax_d \ \argmin_h \ \hat{\calL}(d,h; M_t)
$$
Let us also write $\widehat{\GD}(d_S)$ to denote the primal solution corresponding to the stable solution $d_S$ i.e.
$$
(\widehat{\GD}(d_S), \hat{h}_S) = \argmax_d \ \argmin_h \ \hat{\calL}(d,h; M_S)
$$
Let us also recall the definition of the operator $\GD(\cdot)$. Given occupancy measure $d_{t}$, $\GD(d_t)$ is the optimal solution to the optimization problem~\ref{eq:regularized-rl-discounting} when we use the exact model $M_t$. Because of strong-duality this implies there exists $h_{t+1}$ so that
$$
({\GD}(d_t), {h}_{t+1}) = \argmax_d \ \argmin_h \ {\calL}(d,h; M_t)
$$
Since $\GD(d_S) = d_S$ there also exists $h_S$ so that 
$$
(d_S, h_S) = \argmax_d \ \argmin_h \ {\calL}(d,h; M_S)
$$
Because of lemma~\ref{lem:bound-optimal-dual} we can assume the $L_2$-norms of the dual solutions $h_{t+1}, \hat{h}_S$, and $\hat{h}_S$ are bounded by $\frac{3S}{(1-\gamma)^2}$. Since there exists a saddle point with bounded norm, we can just consider the restricted set $\calH = \set{h : \norm{h}_2 \le \frac{3S}{(1-\gamma)^2}}$.\footnote{See lemma 3 of \cite{ZHHJ+22} for a proof of this statement.} Moreover, by assumption~\ref{asn:overlap} we know that $\GD(d_t)(s,a) / d_t(s,a) \le B$ for any $(s,a)$. Therefore, we can apply lemma \ref{lem:concentration-Lagrangian} with $\delta_1 = p / 2 t^2$ and $H = 3S/(1-\gamma)^2$ to get the following bound,
\begin{align}\label{eq:bound-empirical-Lagrangian}
    \abs{\hat{\calL}(d,h; M_t) - \calL(d,h; M_t)} \le \frac{18 S^{1.5}(B + \sqrt{A})\epsilon}{(1-\gamma)^3}
\end{align}
as long as 
\begin{align}\label{eq:bound-num-samples}
m_t \ge \frac{4A}{\epsilon^2} (\ln(t/p) + \ln(S/(1-\gamma) \epsilon))
\end{align}
$h \in \calH$ and $\max_{s,a} d(s,a) / d_t(s,a) \le B$. Since the event~\eqref{eq:bound-empirical-Lagrangian} holds at time $t$ with probability at least $1-\frac{p}{2t^2}$, by a union bound over all time steps, this event holds with probability at least $1-p$.

Note that the objective $\calL(\cdot, h_{t+1}; M_t) $ is $\lambda$-strongly concave. Therefore, we have
$$
\calL(\widehat{\GD}(d_t), {h}_{t+1}; M_t) - \calL(\GD(d_t), {h}_{t+1}; M_t) \le - \frac{\lambda}{2} \norm{\GD(d_t) - \widehat{\GD}(d_t)}_2^2.
$$
Rearranging and using lemma~\ref{lem:apx-saddle-point} we get the following bound.
\begin{align*}
\norm{\GD(d_t) - \widehat{\GD}(d_t)}_2 &\le \sqrt{\frac{2 \left(\calL(\GD(d_t), {h}_{t+1}; M_t) - \calL(\widehat{\GD}(d_t), {h}_{t+1}; M_t) \right)}{\lambda}} \\
&\le \frac{6\sqrt{ S^{1.5}(B + \sqrt{A}) \epsilon }}{(1-\gamma)^{1.5} } \frac{1}{\sqrt{\lambda}}
\end{align*}

The proof of theorem~\ref{thm:primal-convergence} establishes that the operator $\GD(\cdot)$ is a contraction. In particular, it shows that
$$
\norm{\GD(d_t) - d_S}_2 \le \beta \norm{d_t - d_S}_2\quad \textrm{for } \beta = \frac{12 S^{3/2} (2\epsilon_r + 5\epsilon_p)}{\lambda (1-\gamma)^4} \ \textrm{ and } \lambda > 12 S^{3/2}(1-\gamma)^{-4} (2\epsilon_r + 5S \epsilon_p)
$$
This gives us the following recurrence relation on the iterates of the algorithm.
\begin{align*}
    \norm{d_{t+1} - d_S}_2 &= \norm{\widehat{\GD}(d_{t}) - d_S}_2 \le \norm{\widehat{\GD}(d_{t}) - {\GD}(d_{t})}_2 + \norm{{\GD}(d_{t}) - d_S}_2\\
    &\le \frac{6\sqrt{ S^{1.5}(B + \sqrt{A}) \epsilon }}{(1-\gamma)^{1.5} } \frac{1}{\sqrt{\lambda}} + \beta \norm{d_t - d_S}_2
\end{align*}
We choose $\lambda = 24 S^{3/2}(1-\gamma)^{-4}(2\epsilon_r + 5S\epsilon_p)$ which ensures $\beta < 1/2$ and gives the following recurrence.
\begin{align}\label{eq:recurrence-sampling}
    \norm{d_{t+1} - d_S}_2 \le 2\sqrt{1-\gamma} \sqrt{\frac{(B + \sqrt{A}) \epsilon}{2\epsilon_r + 5S\epsilon_p}} + \beta \norm{d_t - d_S}_2 \le \beta \delta + \beta \norm{d_t - d_S}_2
\end{align}
The last line requires the following bound on $\epsilon$.
\begin{align}
    \label{eq:epsilon-bound}
    \epsilon \le \frac{\beta^2 \delta^2 (2 \epsilon_r + 5S \epsilon_p)}{4 (1-\gamma) (B + \sqrt{A})}
\end{align}
Substituting the bound on $\epsilon$ in equation~\eqref{eq:bound-num-samples} the required number of samples at time-step $t$ is given as follows.
\begin{align*}
    m_t \ge \frac{64 A (B+\sqrt{A})^2}{\beta^4 \delta^4 (2\epsilon_r + 5S \epsilon_p)^2} \left( \ln\left(\frac{t}{p}\right) + \ln\left(\frac{4S(B+\sqrt{A})}{\beta \delta(2\epsilon_r + 5S \epsilon_p)} \right)\right)
\end{align*}

In order to analyze the recurrence relation~\eqref{eq:recurrence-sampling} we consider two cases. First, if $\norm{d_t - d_S}_2 \ge \delta$ we have
\begin{align*}
    \norm{d_{t+1} - d_S}_2 \le 2\beta \norm{d_t - d_S}_2
\end{align*}
Since $\beta < 1/2$ this is a contraction, and after $\ln(\norm{d_0 - d_S}_2)/\ln(1/2\beta)$ iterations we are guaranteed that $\norm{d_t - d_S}_2 \le \delta$. Since $\norm{d_0 - d_S}_2 \le \frac{2}{1-\gamma}$, the required number of iterations for this event to occur is given by the following upper bound.
$$
\frac{\ln(\norm{d_0 - d_S}_2)}{\ln(1/2\beta)} \le 2(1-2\beta)^{-1} \ln \left( \frac{2}{\delta(1-\gamma)}\right)
$$

On the other hand, if $\norm{d_t - d_S} \le \delta$, equation~\eqref{eq:recurrence-sampling} gives us
$$
\norm{d_{t+1}-d_S}_2 \le 2\beta \delta < \delta\quad \textrm{[Since $\beta < 1/2$]}
$$
Therefore, once $\norm{d_t - d_S}_2 \le \delta$ we are guaranteed that $\norm{d_{t'} - d_S}_2 \le \delta$ for any $t' \ge t$.
\end{proof}

\begin{lemma}\label{lem:concentration-Lagrangian}
Suppose $m \ge \frac{1}{\epsilon^2} \left( A\ln(2/\delta_1) +  \ln(4H/\epsilon) + 2 A \ln(\ln(SABH/\epsilon) /\epsilon)\right)$. Then for any occupancy measure $d$ satisfying $\max_{s,a} d(s,a) / d_t(s,a) \le B$ and any $h$ with $\norm{h}_2 \le H$ the following bound holds with probability at least $1-\delta_1$.
$$
\abs{\hat{\calL}(d,h; M_t) - \calL(d,h; M_t)} \le \frac{6H\sqrt{S}(B + \sqrt{A}) \epsilon}{1-\gamma}
$$
\end{lemma}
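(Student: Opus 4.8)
The plan is to prove Lemma~\ref{lem:concentration-Lagrangian} by recognizing $\hat{\calL}(d,h;M_t)-\calL(d,h;M_t)$ as the deviation of an i.i.d.\ average from its mean, bounding it for a fixed pair $(d,h)$ via Hoeffding's inequality, and making the bound uniform through a covering argument over the (effectively low-dimensional) parameter space.

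First I would note that the quadratic term $-\tfrac\lambda2\|d\|_2^2$ and the term $\sum_s h(s)\rho(s)$ are identical in $\calL$ and $\hat{\calL}$ and cancel. Writing $\psi(s,a;h)=r_t(s,a)-h(s)+\gamma\sum_{s'}P_t(s,a,s')h(s')$ for the population Bellman residual, the remaining difference is
$$
\calL(d,h;M_t)-\hat{\calL}(d,h;M_t)=\sum_{s,a}d(s,a)\psi(s,a;h)-\frac1{m}\sum_{i=1}^{m}\frac{d(s_i,a_i)}{d_t(s_i,a_i)}\cdot\frac{r_i-h(s_i)+\gamma h(s_i')}{1-\gamma}.
$$
Under the data model $(s_i,a_i)\sim\tilde{d}_t=(1-\gamma)d_t$, $r_i\sim r_t(s_i,a_i)$, $s_i'\sim P_t(\cdot|s_i,a_i)$, the summand $X_i(d,h):=\frac{d(s_i,a_i)}{d_t(s_i,a_i)}\frac{r_i-h(s_i)+\gamma h(s_i')}{1-\gamma}$ satisfies $\E[X_i(d,h)]=\sum_{s,a}d(s,a)\psi(s,a;h)$, so the displayed quantity is exactly $\E[\bar{X}]-\bar{X}$ for $\bar{X}=\frac1m\sum_i X_i$. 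Using $r_i\in[0,1]$, $|h(s_i)|,|h(s_i')|\le\|h\|_\infty\le\|h\|_2\le H$, and the overlap-type bound $d(s,a)/d_t(s,a)\le B$, each $X_i(d,h)$ lies in an interval of width $O\!\big(B(1+H)/(1-\gamma)\big)$, so Hoeffding gives $|\calL-\hat{\calL}|\le O\!\big(\tfrac{B(1+H)}{1-\gamma}\sqrt{\ln(1/\delta')/m}\big)$ with probability $\ge 1-\delta'$ for each fixed $(d,h)$.

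To make the bound uniform, I would exploit that the difference is linear in $d$ over the box $\{0\le d(s,a)\le B d_t(s,a)\}$: its supremum over $d$ for fixed $h$ equals a weighted $\ell_1$ norm $B\sum_{s,a}d_t(s,a)\,|v(s,a;h)|$ of the per-state-action estimation errors $v(s,a;h)=\psi(s,a;h)-\hat{\psi}(s,a;h)$, so it suffices to union-bound the $SA$ per-$(s,a)$ concentration events rather than to net over $d\in\R^{SA}$. This leaves only a net over $h$ in the ball of radius $H$, over which the difference is $O\!\big((1+B)/(1-\gamma)\big)$-Lipschitz; an $\epsilon_0$-net with $\epsilon_0$ polynomially small in $\epsilon$, $(1-\gamma)$ and $1/H$ adds only logarithmic overhead. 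Combining the per-$(s,a)$ bounds via Cauchy--Schwarz (using $\sum_{s,a}d_t(s,a)=1/(1-\gamma)$ and a multiplicative Chernoff bound relating $n_{s,a}$ to $m(1-\gamma)d_t(s,a)$) gives a uniform bound of order $\frac{H\sqrt{SA}(B+\sqrt{A})}{\sqrt{m}\,(1-\gamma)}$ on $|\calL-\hat{\calL}|$; forcing this below $\frac{6H\sqrt{S}(B+\sqrt{A})\epsilon}{1-\gamma}$ yields the stated lower bound on $m$.

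The main obstacle is the state-action pairs visited few or zero times, for which the plug-in $\hat{\psi}(s,a;h)$ is uncontrolled: the argument must split the pairs into a \emph{well-sampled} class, where a multiplicative Chernoff bound forces $n_{s,a}=\Theta(m(1-\gamma)d_t(s,a))$ and Cauchy--Schwarz produces the $\sqrt{SA}$ scaling, and a \emph{rarely-sampled} class whose total $d_t$-mass is $O(SA/(m(1-\gamma)))$ with high probability, so that its contribution (bounded crudely via $d(s,a)\le B d_t(s,a)$ and $|v(s,a;h)|\le 1+2H$) is lower order. Making this dichotomy hold uniformly over $h$ and across the different magnitudes of the counts $n_{s,a}$ — a dyadic peeling over count scales — is the delicate part, and is the source of the extra logarithmic factors in the sample-size requirement.
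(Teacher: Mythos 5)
Your opening steps match the paper's: the quadratic and $\rho$-terms cancel, the remaining difference is an i.i.d.\ average minus its mean with each summand bounded by $O(BH/(1-\gamma))$, Hoeffding handles a fixed pair $(d,h)$, and an $\epsilon$-net over the ball $\norm{h}_2\le H$ handles uniformity in $h$. Where you diverge is uniformity in $d$: the paper builds a \emph{multiplicative} $\epsilon$-net over $d$ (grid points $(1+\epsilon)^j d_t(s,a)$, after discarding coordinates with $d_t(s,a)\le \epsilon/(4SABH)$, whose total contribution to $\hat{\calL}$ is provably $O(\epsilon)$) and then applies the single-pair Hoeffding bound at each net point; this keeps the randomness in the form of one i.i.d.\ sum over $i=1,\dots,m$ and never requires controlling per-state-action empirical counts. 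You instead exploit linearity in $d$ to reduce the supremum over the box $\set{0\le d(s,a)\le B d_t(s,a)}$ to the weighted sum $B\sum_{s,a} d_t(s,a)\abs{v(s,a;h)}$ and then do per-$(s,a)$ concentration. That is a legitimate alternative strategy, and your back-of-envelope rate ($\sqrt{SA}$ from Cauchy--Schwarz, cancelling against $m\gtrsim A/\epsilon^2$) is consistent with the stated bound, but it forces you into exactly the count-splitting machinery the paper avoids.

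Within that machinery there is one concretely false step: for the rarely-sampled class you claim the crude bound $\abs{v(s,a;h)}\le 1+2H$. This fails because the empirical term $\hat{\psi}(s,a;h)$ is normalized by $m(1-\gamma)d_t(s,a)$, not by the empirical count $n_{s,a}$; if $d_t(s,a)$ is tiny but a sample happens to land on $(s,a)$, then $\abs{\hat{\psi}(s,a;h)}$ can be of order $(1+2H)/(m(1-\gamma)d_t(s,a))$, which is unbounded. The contribution of these pairs is still controllable --- one must instead bound $\sum_{\textrm{rare}} d_t(s,a)\abs{\hat\psi(s,a;h)} \le \frac{1+2H}{m(1-\gamma)}\sum_{\textrm{rare}} n_{s,a}$ and then concentrate the number of samples falling in the rare set --- but that is a different argument from the one you wrote, and it, together with the dyadic peeling over count scales that you defer, is precisely the load-bearing part of your route. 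The paper's truncation-plus-multiplicative-net device resolves the same small-$d_t(s,a)$ difficulty in one line (Lemma~\ref{lem:epsnet-apx} supplies the net-approximation error of order $BH\sqrt{S}\epsilon/(1-\gamma)$ and $\sqrt{SA}H\epsilon/(1-\gamma)$, which is where the $(B+\sqrt{A})$ in the final bound comes from), so I would recommend either adopting it or fully carrying out the corrected rare-pair argument before treating your proof as complete.
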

\begin{proof}
Note that the expected value of the objective above equals $\calL(d,h; M_t)$.
\begin{align*}
    &\E\left[ \hat{\calL}(d,h; M_t)\right] = -\frac{\lambda}{2} \norm{d}_2^2 + \sum_s h(s) \rho(s) + \frac{1}{m} \sum_{i=1}^m \E\left[\frac{d(s_i,a_i)}{d_t(s_i,a_i)} \left( r(s_i,a_i) - h(s_i) + \gamma h(s_i') \right)\right]\\
    &=  -\frac{\lambda}{2} \norm{d}_2^2 + \sum_s h(s) \rho(s) + \sum_{s,a} d_t(s,a) \frac{d(s,a)}{d_t(s,a)} \left( r(s,a) - h(s) + \gamma \sum_{s'} P_t(s'|s,a) h(s')\right)\\
    &= \calL(d,h;M_t)
\end{align*}
By the overlap assumption~\ref{asn:overlap} and the assumption $\norm{h}_2 \le H$ the following bound holds for each $i$, $$\frac{1}{1-\gamma} \frac{d(s_i,a_i)}{d_t(s_i,a_i)}\left( r(s_i,a_i) - h(s_i) + \gamma h(s_i')\right) \le \frac{2BH}{1-\gamma}.$$ 
Therefore we can apply the Chernoff-Hoeffding inequality and obtain the following inequality.
$$
P\left( \abs{\hat{\calL}(d,h; M_t) - \calL(d,h; M_t)} \ge \frac{2BH}{1-\gamma} \sqrt{\frac{\ln(2/\delta_1) }{m} }\right) \le \delta_1
$$
We now extend this bound to any occupancy measure $d \in \calD$ and $h$ in the set $\calH = \set{h : \norm{h}_2 \le H}$. 
By lemma 5.2 of \cite{Vershy10} we can choose an $\epsilon$-net, $\calH_\epsilon$ of the set $\calH$ of size at most $\left(1 + \frac{2H}{\epsilon} \right)^S$ and for any  point $h \in \calH$ we are guaranteed to find $h' \in \calH_\epsilon$ so that $\norm{h - h'}_2$. 

However, such an additive error bound is not sufficient for the set of occupancy measures because of the overlap assumption~\ref{asn:overlap}. So instead we choose a multiplicative $\epsilon$-net as follows. For any $(s,a)$ we consider the grid points $d_t(s,a), (1+\epsilon) d_t(s,a), \ldots, (1+\epsilon)^p d_t(s,a)$ for $p = \log(B/d_t(s,a))/\log(1+\epsilon)$. Although $d_t(s,a)$ can be arbitrarily small, without loss of generality we can assume that $d_t(s,a) \ge \frac{\epsilon}{4SABH}$. This is because from the expression of $\calL(d,h; M_t)$~\eqref{eq:empirical-Lagrangian}, it is clear that ignoring such small terms introduces an error of at most $\epsilon/ 4$. Therefore we can choose $p = 2 \log(SABH/\epsilon) / \log(1+\epsilon)$. Taking a Cartesian product over the set of all state, action pairs we see that we can choose an $\epsilon$-net $\calD_\epsilon$ so that $\abs{\calD_\epsilon} \le \left( \frac{2 \log(SABH/\epsilon) }{\log(1+\epsilon)}\right)^{SA} \le \left( \frac{2 \log(SABH/\epsilon) }{\epsilon}\right)^{SA}$. Notice that we are guaranteed that for any $d \in \calD$ there exists a $d' \in \calD_\epsilon$ such that $d(s,a) / d'(s,a) \le 1+\epsilon$.

By a union bound over the elements in $\calH_\epsilon$ and $\calD_\epsilon$ the following bound holds for any $d \in \calD_\epsilon$ and $h \in \calH_\epsilon$ with probability at least $1-\delta_1$.
$$
\abs{\hat{\calL}(d,h; M_t) - \calL(d,h; M_t)} \le \frac{2BH}{1-\gamma} \underbrace{\sqrt{\frac{SA\ln\left(\frac{2}{\delta_1}\right) + S \ln\left(\frac{4H}{\epsilon}\right) + 2 SA \ln(\ln(SABH/\epsilon) /\epsilon) }{m} }}_{:=T_m} 
$$
We now extend the bound above for any $d \in \calD$ and $h \in \calH$ using lemma \ref{lem:epsnet-apx}. There exists $\tilde{d} \in \calD_\epsilon$ so that $\max_{s,a} d(s,a) / \tilde{d}(s,a) \le \epsilon$. Similarly there exists $h_\epsilon \in \calH_\epsilon$ so that $\norm{h - \tilde{h}}_2 \le \epsilon$. Let $\calL^0(d,h;M_t) =\calL(d,h; M_t) +\frac{\lambda}{2} \norm{d}_2^2 - \sum_s h(s) \rho(s)$.

\begin{align*}
    &\abs{\hat{\calL}(d,h; M_t) - \calL(d,h; M_t)} \le \abs{\hat{\calL}^0(d,h; M_t) - \hat{\calL}^0(\tilde{d},\tilde{h}; M_t)}\\
    &+ \abs{\hat{\calL}(\tilde{d},\tilde{h}; M_t) - {\calL}(\tilde{d},\tilde{h}; M_t)} + \abs{{\calL}^0(\tilde{d},\tilde{h}; M_t) - {\calL}^0({d},{h}; M_t)}\\
    &\le \frac{2BHT_m}{1-\gamma} +  \frac{6\sqrt{SA}H\epsilon}{1-\gamma} + \frac{4BH\sqrt{S}\epsilon}{1-\gamma}
\end{align*}
Therefore, if $m \ge \frac{1}{\epsilon^2} \left( A\ln(2/\delta_1) +  \ln(4H/\epsilon) + 2 A \ln(\ln(SABH/\epsilon) /\epsilon)\right)$ then $T_m \le \sqrt{S} \epsilon$ and we have the following bound.
$$
\abs{\hat{\calL}(d,h; M_t) - \calL(d,h; M_t)} \le \frac{6H\sqrt{S}(B + \sqrt{A}) \epsilon}{1-\gamma}
$$
\end{proof}

\begin{lemma}\label{lem:epsnet-apx}
Suppose we are guaranteed that $\frac{d(s,a)}{\tilde{d}(s,a)} \le 1+\epsilon$ and $\norm{h - \tilde{h}}_2 \le \epsilon$, and $\norm{h}_2, \norm{\tilde{h}}_2 \le H$. Let $\calL^0(d,h;M_t) =\calL(d,h; M_t) +\frac{\lambda}{2} \norm{d}_2^2 - \sum_s h(s) \rho(s)$ and define $\hat{\calL}^0(d,h; M_t)$ analogously. Then the following inequalities hold.
$$
\abs{\calL^0(d,h;M_t) - \calL^0(\tilde{d},\tilde{h}; M_t)} \le \frac{6 \sqrt{SA} H \epsilon}{1-\gamma} \quad \textrm{ and }
$$
$$
\abs{\hat{\calL}^0(d,h;M_t) - \hat{\calL}^0(\tilde{d},\tilde{h}; M_t)} \le  \frac{4BH\sqrt{S} \epsilon}{1-\gamma}
$$
\end{lemma}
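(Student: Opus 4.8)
The plan is to reduce both estimates to elementary H\"older-type bounds after rewriting the two functionals in a convenient form. Subtracting the shared quadratic-in-$d$ and linear-in-$h$ pieces, one has $\calL^0(d,h;M_t)=\langle d,g_h\rangle$ and $\hat{\calL}^0(d,h;M_t)=\frac{1}{m_t(1-\gamma)}\sum_{i=1}^{m_t}\frac{d(s_i,a_i)}{d_t(s_i,a_i)}\,g_h(s_i,a_i)$, where $g_h(s,a):=r_t(s,a)-h(s)+\gamma\sum_{s'}P_t(s,a,s')h(s')$. I would first record three a priori bounds used repeatedly: (i) $\|g_h\|_\infty\le 1+2\|h\|_\infty\le 1+2H\le 3H$, using $\|h\|_\infty\le\|h\|_2\le H$, bounded rewards $\|r_t\|_\infty\le 1$, and $H\ge 1$ (which holds in the application, where $H=3S/(1-\gamma)^2$); (ii) $\|g_h-g_{\tilde h}\|_\infty\le(1+\gamma)\|h-\tilde h\|_\infty\le 2\|h-\tilde h\|_2\le 2\epsilon$, since each $P_t(s,a,\cdot)$ is a probability vector; and (iii) $\tilde d(s,a)\le d(s,a)$ and $\tilde d(s,a)/d_t(s,a)\le B$, since in the regime where the lemma is applied $\tilde d$ is the largest multiplicative grid point below $d$ and the grid is capped at $B\,d_t$. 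From (iii) and the hypothesis $d(s,a)/\tilde d(s,a)\le 1+\epsilon$ we also get $0\le d(s,a)-\tilde d(s,a)\le\epsilon\,\tilde d(s,a)$; and since $d$ is an occupancy measure and $\tilde d\le d$, $\|d\|_1,\|\tilde d\|_1\le 1/(1-\gamma)$, hence $\|d-\tilde d\|_1\le\epsilon/(1-\gamma)$.

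For the first inequality I would split $\calL^0(d,h)-\calL^0(\tilde d,\tilde h)=\langle d-\tilde d,\,g_h\rangle+\langle\tilde d,\,g_h-g_{\tilde h}\rangle$ and bound the two terms by $\|d-\tilde d\|_1\|g_h\|_\infty\le 3H\epsilon/(1-\gamma)$ and $\|\tilde d\|_1\|g_h-g_{\tilde h}\|_\infty\le 2\epsilon/(1-\gamma)$, using (i)--(iii). Their sum is at most $5H\epsilon/(1-\gamma)\le 6\sqrt{SA}\,H\epsilon/(1-\gamma)$ since $\sqrt{SA}\ge 1$; routing the first term through Cauchy--Schwarz with $\|g_h\|_2\le\sqrt{SA}\,\|g_h\|_\infty$ reproduces the stated $\sqrt{SA}$ factor directly, and in either case the bound is loose.

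For the empirical inequality I would split $\hat{\calL}^0(d,h)-\hat{\calL}^0(\tilde d,\tilde h)=\bigl[\hat{\calL}^0(d,h)-\hat{\calL}^0(\tilde d,h)\bigr]+\bigl[\hat{\calL}^0(\tilde d,h)-\hat{\calL}^0(\tilde d,\tilde h)\bigr]$. In the first bracket each summand equals $\frac{1}{m_t(1-\gamma)}\frac{d(s_i,a_i)-\tilde d(s_i,a_i)}{d_t(s_i,a_i)}g_h(s_i,a_i)$, and since $\frac{d-\tilde d}{d_t}\le\epsilon\frac{\tilde d}{d_t}\le\epsilon B$ and $|g_h|\le 3H$, the average is at most $3BH\epsilon/(1-\gamma)$. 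In the second bracket each summand equals $\frac{1}{m_t(1-\gamma)}\frac{\tilde d(s_i,a_i)}{d_t(s_i,a_i)}(g_h-g_{\tilde h})(s_i,a_i)$, with $\tilde d/d_t\le B$ and $|g_h-g_{\tilde h}|\le 2\epsilon$, so it is at most $2B\epsilon/(1-\gamma)$. The total is $\le 5BH\epsilon/(1-\gamma)\le 4BH\sqrt{S}\,\epsilon/(1-\gamma)$ for $S\ge 2$, and the $S=1$ case follows by using $\|r_t\|_\infty\le 1$ rather than $\le H$ in the bound on $\|g_h\|_\infty$ inside the first bracket.

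I do not anticipate a real obstacle: the statement is a perturbation (Lipschitz) estimate and every step is a one-line H\"older or triangle inequality. The only point needing care is the importance-weighting ratio $d(s,a)/d_t(s,a)$ in $\hat{\calL}^0$, where one must combine the multiplicative closeness $d/\tilde d\le 1+\epsilon$ of the net with the overlap bound $\tilde d/d_t\le B$ (inherited from Assumption~\ref{asn:overlap} together with the grid construction) to keep all coefficients bounded --- this is precisely why Lemma~\ref{lem:concentration-Lagrangian} uses a multiplicative rather than an additive $\epsilon$-net on the occupancy measures.
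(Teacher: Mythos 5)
Your proof is correct and follows essentially the same route as the paper's: a direct perturbation bound obtained by splitting the difference into a $d$-perturbation term and an $h$-perturbation term and applying triangle/H\"older inequalities, with the same implicit reliance on facts imported from the net construction ($\tilde{d}\le d$, $\tilde{d}/d_t\le B$, $H\ge 1$) that the paper also uses without stating. The only difference is bookkeeping — you bound everything through $\norm{g_h}_\infty$ and $\norm{d-\tilde d}_1$ where the paper uses termwise Cauchy--Schwarz in $L_2$ — which yields slightly tighter intermediate constants that you then absorb into the stated $\sqrt{SA}$ and $\sqrt{S}$ factors.
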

\begin{proof}
First note that $\norm{d - \tilde{d}}_2^2 = \sum_{s,a}\left( d(s,a) - \tilde{d}(s,a)\right)^2 \le \sum_{s,a} d(s,a)^2 \epsilon^2 \le \frac{\epsilon^2}{(1-\gamma)^2}$.
\begin{align*}
    &\abs{\calL^0(d,h;M_t) - \calL^0(\tilde{d},\tilde{h}; M_t)} \le \sum_{s,a} \abs{d(s,a) - \tilde{d}(s,a)} \\
    &+ \underbrace{\abs{\sum_{s,a} d(s,a) h(s) - \tilde{d}(s,a) \tilde{h}(s)}}_{:=T_1} + \gamma \underbrace{\sum_{s,a} \abs{d(s,a) \sum_{s'} P_t(s,a,s') h(s') - \tilde{d}(s,a) \sum_{s'} P_t(s,a,s') \tilde{h}(s')} }_{:=T_2}
\end{align*}
We now bound the terms $T_1$ and $T_2$.
\begin{align*}
    T_1 &= \abs{\sum_{s,a} d(s,a) \left( h(s) - \tilde{h}(s)\right) + \tilde{h}(s) \left( d(s,a) - \tilde{d}(s,a)\right)}\\
    &\le \norm{h - \tilde{h}}_1 \sum_{s,a} d(s,a) + \sqrt{\sum_s (\tilde{h}(s))^2} \sqrt{\sum_s \left( \sum_a (d(s,a) - \tilde{d}(s,a)\right)^2}\\
    &\le \frac{\sqrt{S}\epsilon}{1-\gamma} + \frac{H \sqrt{A} \epsilon }{1-\gamma}
\end{align*}
\begin{align*}
    T_2 &= \sum_{s,a} \abs{d(s,a) \sum_{s'} P_t(s,a,s') h(s') - \tilde{d}(s,a) \sum_{s'} P_t(s,a,s') \tilde{h}(s')} \\
    &= \sum_{s,a} \abs{d(s,a) - \tilde{d}(s,a)} \sum_{s'} P_t(s,a,s') \abs{h(s')}
    + \sum_{s,a} \tilde{d}(s,a) \sum_{s'}P_t(s,a,s') \abs{h(s') - \tilde{h}(s')}\\
    &\le \norm{h}_2 \norm{d - \tilde{d}}_1 + \norm{h - \tilde{h}}_1 \sum_{s,a} \tilde{d}(s,a)\\
    &\le \frac{\sqrt{SA}H \epsilon}{1-\gamma} + \frac{\sqrt{S} \epsilon}{1-\gamma}
\end{align*}
Substituting the bounds on $T_1$ and $T_2$ we get the following bound on $\abs{\calL(d,h; M_t) - \calL(\tilde{d}, \tilde{h}; M_t)}$.
\begin{align*}
    \abs{\calL^0(d,h;M_t) - \calL^0(\tilde{d},\tilde{h}; M_t)} \le \sqrt{SA} \epsilon + \frac{2\sqrt{S} \epsilon}{1-\gamma} + \frac{2 \sqrt{SA} H \epsilon}{1-\gamma}
\end{align*}
We now consider bounding the difference $\abs{\hat{\calL}^0(d,h;M_t) - \hat{\calL}^0(\tilde{d},\tilde{h}; M_t)}$.
\begin{align*}
    &\abs{\hat{\calL}^0(d,h;M_t) - \hat{\calL}^0(\tilde{d},\tilde{h}; M_t)} \le\\ 
    &+ \underbrace{\frac{1}{m(1-\gamma)} \sum_{i=1}^m \abs{\frac{d(s_i,a_i)}{d_t(s_i,a_i)} (r(s_i,a_i) - h(s_i) + \gamma h(s'_i)) - \frac{\tilde{d}(s_i,a_i)}{d_t(s_i,a_i)} (r(s_i,a_i) - \tilde{h}(s_i) + \gamma \tilde{h}(s'_i))} }_{:= T_3}
\end{align*}
We now bound the term $T_3$.
\begin{align*}
    T_3 &= \frac{1}{m(1-\gamma)} \sum_{i=1}^m \abs{\frac{d(s_i,a_i)}{d_t(s_i,a_i)} (r(s_i,a_i) - h(s_i) + \gamma h(s'_i)) - \frac{\tilde{d}(s_i,a_i)}{d_t(s_i,a_i)} (r(s_i,a_i) - \tilde{h}(s_i) + \gamma \tilde{h}(s'_i))} \\
    &\le \frac{1}{m(1-\gamma)} \sum_{i=1}^m B \left( \abs{h(s_i) - \tilde{h}(s_i)} + \gamma \abs{h(s'_i) - \tilde{h}(s'_i)} \right)\\
    &+ \frac{1}{m(1-\gamma)} \sum_{i=1}^m \abs{\frac{d(s_i,a_i) - \tilde{d}(s_i,a_i)}{d_t(s_i,a_i)}} \abs{r(s_i,a_i) - \tilde{h}(s_i) + \gamma \tilde{h}(s'_i)} \\
    &\le \frac{B}{1-\gamma} \norm{h - \tilde{h}}_1 + \frac{\epsilon B}{1-\gamma} \left(1 + 2 \norm{h}_1\right)\\
    &\le \frac{B\sqrt{S} \epsilon}{1-\gamma} + \frac{3BH\sqrt{S}\epsilon}{1-\gamma}
\end{align*}
\end{proof}

\begin{lemma}\label{lem:apx-saddle-point}
Let $(d^\star, h^\star) \in \argmax_d\ \argmin_h \calL(d,h; M)$ and $(\hat{d}, \hat{h}) \in \argmax_d\ \argmin_h {\calL}(d,h; \widehat{M})$. Moreover, 
suppose $\abs{\calL(d,h; M) - \calL(d,h; \widehat{M})} \le \epsilon$ for all $d$ and $h$ with $\norm{h}_2 \le 3S/(1-\gamma)^2$. Then we have
$$
\calL(d^\star, h^\star) - \calL(\hat{d}, h^\star) \le 2\epsilon
$$
\end{lemma}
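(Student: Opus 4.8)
The plan is to run the standard ``perturbed saddle point'' comparison, exploiting that the two functions $\calL(\cdot,\cdot;M)$ and $\calL(\cdot,\cdot;\widehat M)$ differ by at most $\epsilon$ on the region that matters, together with the defining inequalities of the two saddle points. First I would record that the hypothesis only asserts closeness for dual variables with $\norm{h}_2 \le 3S/(1-\gamma)^2$, so I need both $h^\star$ and $\hat h$ to lie in that ball; this is exactly Lemma~\ref{lem:bound-optimal-dual}, which bounds the norm of the optimal dual solution by $3S/(1-\gamma)^2$ regardless of the underlying model. Hence the $\epsilon$-closeness can be invoked at each of the four pairs $(d^\star,h^\star)$, $(d^\star,\hat h)$, $(\hat d,h^\star)$, $(\hat d,\hat h)$.

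The core is then the chain of inequalities
\begin{align*}
\calL(d^\star,h^\star;M) &\le \calL(d^\star,\hat h;M) && \text{($h^\star$ minimizes $\calL(d^\star,\cdot;M)$)}\\
&\le \calL(d^\star,\hat h;\widehat M) + \epsilon && \text{(closeness)}\\
&\le \calL(\hat d,\hat h;\widehat M) + \epsilon && \text{($\hat d$ maximizes $\calL(\cdot,\hat h;\widehat M)$)}\\
&\le \calL(\hat d,h^\star;\widehat M) + \epsilon && \text{($\hat h$ minimizes $\calL(\hat d,\cdot;\widehat M)$)}\\
&\le \calL(\hat d,h^\star;M) + 2\epsilon && \text{(closeness),}
\end{align*}
which rearranges to the claimed bound $\calL(d^\star,h^\star;M) - \calL(\hat d,h^\star;M) \le 2\epsilon$. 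Each ``$\le$'' uses only one fact: the saddle-point property of $(d^\star,h^\star)$ for $\calL(\cdot,\cdot;M)$, the saddle-point property of $(\hat d,\hat h)$ for $\calL(\cdot,\cdot;\widehat M)$, or the uniform $\epsilon$-closeness on the restricted domain.

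Since the argument is a short deterministic chain, there is no real obstacle; the only point requiring care is ensuring the norm restriction in the closeness hypothesis holds at every point where it is used, which is handled by Lemma~\ref{lem:bound-optimal-dual} (and, when invoking this lemma inside the proof of Theorem~\ref{thm:finite-samples-convergence}, by restricting the dual variable to the set $\calH = \set{h : \norm{h}_2 \le 3S/(1-\gamma)^2}$). I would also flag that the ordering of the chain matters: one must first move the dual variable from $h^\star$ to $\hat h$, then swap models, then move the primal variable from $d^\star$ to $\hat d$, then swap back — performing the swaps in a different order does not telescope and does not yield the $2\epsilon$ bound.
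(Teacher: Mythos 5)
Your chain is correct and yields the claimed bound, but it is worth comparing the one step where you genuinely diverge from the paper. Both arguments telescope through intermediate points, invoke the $\epsilon$-closeness exactly twice, and absorb everything else into optimality inequalities; the difference is in how the primal variable is moved from $d^\star$ to $\hat d$. You use the inequality $\calL(d^\star,\hat h;\widehat M)\le\calL(\hat d,\hat h;\widehat M)$, justified by ``$\hat d$ maximizes $\calL(\cdot,\hat h;\widehat M)$'' for the \emph{fixed} $\hat h$ --- that is the saddle-point inequality in $d$, which does not follow from the literal hypothesis $(\hat d,\hat h)\in\argmax_d\argmin_h\calL(d,h;\widehat M)$ alone (for a general bivariate function, the outer maximizer of $d\mapsto\min_h\calL(d,h)$ need not maximize $\calL(\cdot,\hat h)$ at an arbitrarily tie-broken inner minimizer $\hat h$). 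The paper avoids this by introducing $\hat h(d^\star)=\argmin_h\hat{\calL}(d^\star,h)$ and comparing $\min_h\hat{\calL}(d^\star,h)\le\max_d\min_h\hat{\calL}(d,h)=\hat{\calL}(\hat d,\hat h)$, which uses only the max--min definition. In the present setting your step is nonetheless valid: $\hat{\calL}$ is strongly concave in $d$ and linear in $h$, strong duality holds, and the paper itself treats these pairs as genuine saddle points, so the inequality $\calL(d,\hat h;\widehat M)\le\calL(\hat d,\hat h;\widehat M)$ is available --- but you should state explicitly that you are invoking the saddle-point property rather than the sequential $\argmax$--$\argmin$ reading. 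A small compensating advantage of your route is that closeness is only ever applied at $(d^\star,\hat h)$ and $(\hat d,h^\star)$, so the norm restriction needs to be verified only for $\hat h$ and $h^\star$ (both covered by Lemma~\ref{lem:bound-optimal-dual}), whereas the paper's decomposition also evaluates the closeness at the auxiliary point $(d^\star,\hat h(d^\star))$ and therefore implicitly needs $\norm{\hat h(d^\star)}_2$ bounded as well.
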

\begin{proof}
 We will drop the dependence on the underlying model and write $\calL(d, h; M)$ as $\calL(d,h)$ and $\calL(d,h; \widehat{M})$ as $\hat{\calL}(d,h)$.
\begin{align*}
    \calL(d^\star, h^\star) - \calL(\hat{d}, h^\star) &= \underbrace{\calL(d^\star, h^\star) - \calL(d^\star, \hat{h}(d^\star)}_{:=T_1} + \underbrace{\calL(d^\star, \hat{h}(d^\star)) - \hat{\calL}(d^\star, \hat{h}(d^\star))}_{:= T_2} \\&+ \underbrace{\hat{\calL}(d^\star, \hat{h}(d^\star)) - \hat{\calL}(\hat{d}, \hat{h})}_{:= T_3} + \underbrace{\hat{\calL}(\hat{d}, \hat{h}) - \hat{\calL}(\hat{d}, h^\star)}_{:=T_4} + \underbrace{\hat{\calL}(\hat{d}, h^\star) - \calL(\hat{d}, h^\star)}_{:= T_5}
\end{align*}
Here we write $\hat{h}(\tilde{d}) = \argmin_h \hat{\calL}(\tilde{d},h)$ i.e. the dual solution that minimizes the objective $\hat{\calL}(\tilde{d}, \cdot)$. Since $h^\star$ minimizes $\calL(d^\star, \cdot)$, by lemma~\ref{lem:bound-optimal-dual} we have $\norm{h^\star}_2 \le 3S/(1-\gamma)^2$. By a similar argument $\norm{\hat{h}}_2 \le 3S/(1-\gamma)^2$. Therefore,
both $T_2$ and $T_5$ are at most $\epsilon$. 

Given $d^\star$, $h^\star$ minimizes $\calL(d^\star, \cdot)$. Therefore, the term $T_1$ is at most zero. By a similar argument the term $T_4$ is at most zero. Now for the term $T_3$, notice that $\hat{h} = \hat{h}(\hat{d})$ and it also minimizes the objective $\hat{\calL}(d, \hat{h}(d))$. Therefore, the term $T_3$ is also at most zero.
\end{proof}

\subsection{Proof of Proposition~\ref{prop:existence}}

\begin{proof}
Let $\calD$ be the following set $\calD = \set{d \in \R^{S\times A} : d(s,a) \ge 0 \ \forall s,a \textrm{ and } \sum_{s,a} d(s,a) = \frac{1}{1-\gamma} }$. We define a set-valued function $\phi: \calD \rightarrow 2^{\calD}$ as follows.
\begin{align}
\label{eq:standard-LP}
\begin{split}
    \phi(d) = \argmax_{\tilde{d}  \ge 0} &\ \sum_{s,a} \tilde{d} (s,a) r_d(s,a)  \\
    \textrm{s.t.} & \sum_a \tilde{d} (s,a) = \rho(s) + \gamma \cdot \sum_{s',a} \tilde{d} (s',a) P_d(s',a,s)\ \forall s
\end{split}
\end{align}
Any fixed point of $\phi(\cdot)$ corresponds to a stable point. First,
note that $\phi(d)$ is non-empty as one can always choose $\tilde{d}$ to be the occupancy measure associated with any arbitrary policy $\pi$ in an MDP with probability transition function $P_d$. Now suppose $d_1, d_2 \in S(d)$. Then for any $\rho \in [0,1]$ it is easy to show that $\rho d_1 + (1-\rho) d_2 \in \phi(d)$. This is because all the constraints are linear, so $\rho d_1 + (1-\rho) d_2$ is feasible. Moreover, the objective is linear, so $\rho d_1 + (1-\rho) d_2$ also attains the same objective value.

We now show that the function $\phi$ is upper hemicontinuous. Let $L$ be the Lagrangian of the optimization problem~\eqref{eq:standard-LP}.
$$
L(\tilde{d},h; M_d) = \sum_{s,a} \tilde{d}(s,a) r_d(s,a) + \sum_s h(s) \left( \sum_a \tilde{d}(s,a) - \rho(s) - \gamma \cdot \sum_{s',a} \tilde{d}(s',a) P_d(s',a,s) \right)
$$
Note that the Lagrangian is continuous (in fact linear) in $\tilde{d}$, and continuous in $d$ (from the assumption of $(\epsilon_r, \epsilon_p)$-sensitivity). Finally, observe the alternative definition of the function $\phi$.
$$
\phi(d) = \argmax_{\tilde{d} \in \calD} \min_h L(\tilde{d}, h; M_d)
$$
Since the minimum of continuous functions is also continuous, and the set $\calD$ is compact, we can apply Berge's maximum theorem to conclude that the function $\phi(\cdot)$ is upper hemicontinuous. Now an application of Kakutani fixed point theorem~\cite{Glicksberg52} shows that $\phi$ has a fixed point.

\end{proof}

\subsection{Assumptions Regarding Quadratic Regularizer}
Throughout we performed repeated optimization with quadratic regularization. Our proof techniques can be easily generalized if we consider a strongly convex regularizer $\mathrm{R}(d)$. Suppose, at time $t$ we solve the following optimization problem.
\begin{align}\label{eq:general-regularized-rl}
\begin{split}
    \max_{\tilde{d}  \ge 0} &\ \sum_{s,a} \tilde{d} (s,a) r_t(s,a) - \mathrm{R}(\tilde{d}) \\
    \textrm{s.t.} & \sum_a \tilde{d} (s,a) = \rho(s) + \gamma \cdot \sum_{s',a} \tilde{d} (s',a) P_t(s',a,s)\ \forall s
\end{split}
\end{align}
Since $\mathrm{R}$ is strongly convex, $(\mathrm{R}')^{-1}$ exists and we can use this result to show that the dual of the \eqref{eq:general-regularized-rl} is strongly convex. In fact, as in the proof of theorem~\ref{thm:primal-convergence} we can write down the lagrangian $\calL(d,h)$ and at an optimal solution we must have $\nabla_d \calL(d,h) = 0$. This gives the following expression.
\begin{equation}\label{eq:generalized-dtoh}
d(s,a) = (\mathrm{R}')^{-1}\left(r_t(s,a) - h(s) + \gamma \cdot \sum_{\tilde{s}} h(\tilde{s}) P_t(s,a,\tilde{s}) \right)
\end{equation}
We can use the result above to show that the dual is strongly convex and  the optimal dual solutions form a contraction. Then we can translate this guarantee back to the primal using \eqref{eq:generalized-dtoh}. 

{\color{black} 

\subsection{Omitted Proofs from Subsection~\ref{subsec:apx-unregularization}}

We will write $d^\lambda_{PO}$ to write the performatively optimal solution when using the regularization parameter $\lambda$ i.e.

\begin{align}
    \begin{split}
        d^\lambda_{PO} \in \argmax_{d\ge 0}\ &  \sum_{s,a} d(s,a) {r^{\lambda}_{PO}(s,a)} - \frac{\lambda}{2}\norm{d}_2^2\\
       \textrm{s.t. } & \sum_a d(s,a) = \rho(s) + \gamma \cdot \sum_{s',a} d(s',a) {P^\lambda_{PO}(s',a,s)}\ \forall s\\
    \end{split}
\end{align}

Here we write $r^\lambda_{PO} = \mathcal{R}(d^\lambda_{PO})$ and $P^\lambda_{PO} = \mathcal{P}(d^\lambda_{PO})$ to denote the reward function and probability transition function in response to the optimal occupancy measure $d^\lambda_{PO}$. 
We will also write $d^\lambda_S$ to denote the performatively stable solution and $r^\lambda_S$ (resp. $P^\lambda_S$) to denote the corresponding reward (resp. probability transition) function. The next lemma bounds the distance between $d^\lambda_S$ and $d^\lambda_{PO}$.

\subsubsection{Proof of Theorem~\ref{eq:apx-stable-point}}
\begin{proof}
Suppose repeatedly maximizing the regularized objective converges to a stable solution $d^\lambda_S$ i.e.
$$
\sum_{s,a} r_{d^\lambda_S} (s,a) d^\lambda_S(s,a) - \frac{\lambda}{2} \norm{d^\lambda_S}_2^2 \ge \max_{d \in \mathcal{C}(d^\lambda_S) } \sum_{s,a} r_{d^\lambda_S} (s,a) d(s,a) - \frac{\lambda}{2} \norm{d}_2^2 
$$
Therefore,
\begin{align*}
    \sum_{s,a} r_{d^\lambda_S} (s,a) d^\lambda_S(s,a) &\ge \max_{d \in \mathcal{C}(d^\lambda_S) } \sum_{s,a} r_{d^\lambda_S} (s,a) d(s,a) - \frac{\lambda}{2} \norm{d}_2^2 \\
    &\ge \max_{d \in \mathcal{C}(d^\lambda_S) } \sum_{s,a} r_{d^\lambda_S} (s,a) d(s,a) - \frac{\lambda}{2(1-\gamma)^2} 
\end{align*}
The last inequality uses $\norm{d}_2^2 = \sum_{s,a} d(s,a)^2 = (1-\gamma)^{-2} \sum_{s,a} \left((1-\gamma) d(s,a)\right)^2 \le (1-\gamma)^{-2} \sum_{s,a} (1-\gamma) d(s,a) = (1-\gamma)^{-2}$. Now we substitute $\lambda = \frac{12 S^{3/2}(2 \epsilon_r + 5 S \epsilon_p)}{(1-\gamma)^4}$ from theorem~\ref{thm:primal-convergence} and get the following bound.
\begin{align*}
    \sum_{s,a} r_{d^\lambda_S} (s,a) d^\lambda_S(s,a) &\ge  \max_{d \in \mathcal{C}(d^\lambda_S)}  \sum_{s,a} r_{d^\lambda_S} (s,a) d(s,a) - \frac{6 S^{3/2}(2 \epsilon_r + 5 S \epsilon_p)}{(1-\gamma)^6}
\end{align*}
\end{proof}

\subsubsection{Formal Statement and Proof of Theorem~\ref{thm:apx-perf-optimal}}
\begin{theorem}
Let $d_{PO}$ be the performatively optimal solution with respect to the original (unregularized) objective. Then there exists a choice of regularization parameter ($\lambda$) such that repeatedly optimizing objective ~\eqref{eq:dual-discounted} converges to a policy ($d^\lambda_S$) with the following guarantee
$$
\sum_{s,a} r_{d^\lambda_S}(s,a) d^\lambda_{S}(s,a) \ge \sum_{s,a} r_{d_{PO}}(s,a) d_{PO}(s,a) - \Delta$$
where
$$\Delta = O\left(\max \set{\frac{SA^{1/3}}{(1-\gamma)^{10/3}} \left( (1+\gamma \sqrt{S}) \epsilon_r + \frac{\gamma S \epsilon_p}{(1-\gamma)^2} \right)^{2/3} , \frac{\epsilon_r}{(1-\gamma)^2} + \frac{\epsilon_p S}{(1-\gamma)^4}} \right)
$$
\end{theorem}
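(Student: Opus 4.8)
The plan is to interpolate between the unregularized performative optimum $d_{PO}$ and the regularized stable point $d^\lambda_S$ through the regularized performative optimum $d^\lambda_{PO}$ --- the maximizer of $\sum_{s,a} d(s,a) r_d(s,a) - \tfrac{\lambda}{2}\norm{d}_2^2$ over occupancy measures $d$ feasible in $M(d)$ --- and to bound the suboptimality of $d^\lambda_S$ as the sum of two jumps, $[\,f(d_{PO}) - f(d^\lambda_{PO})\,] + [\,f(d^\lambda_{PO}) - f(d^\lambda_S)\,]$, where $f(d) := \sum_{s,a} r_d(s,a)\, d(s,a)$ is the unregularized performative value. The first jump is the cheap one: since $d_{PO}$ is itself a feasible point for the regularized performative problem and $d^\lambda_{PO}$ maximizes that problem, $f(d^\lambda_{PO}) - \tfrac{\lambda}{2}\norm{d^\lambda_{PO}}_2^2 \ge f(d_{PO}) - \tfrac{\lambda}{2}\norm{d_{PO}}_2^2$; dropping the nonnegative term and using $\norm{d}_2^2 \le \norm{d}_1^2 = (1-\gamma)^{-2}$ for any occupancy measure gives $f(d^\lambda_{PO}) \ge f(d_{PO}) - \tfrac{\lambda}{2(1-\gamma)^2}$. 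This is the term that \emph{grows} with $\lambda$.

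For the second jump I would invoke the distance bound stated above, $\norm{d^\lambda_S - d^\lambda_{PO}}_2 \le D(\lambda)$ with $D(\lambda) = O\!\big(\tfrac{S^2\sqrt{A}}{\lambda(1-\gamma)^4}(\eps_r(1+\gamma\sqrt{S}) + \eps_p\tfrac{\gamma S}{(1-\gamma)^2})\big)$, and convert it to a value gap. Write $f(d^\lambda_{PO}) - f(d^\lambda_S) = \langle r_{d^\lambda_{PO}} - r_{d^\lambda_S},\, d^\lambda_{PO}\rangle + \langle r_{d^\lambda_S},\, d^\lambda_{PO} - d^\lambda_S\rangle$; the first term is at most $\norm{r_{d^\lambda_{PO}} - r_{d^\lambda_S}}_2\,\norm{d^\lambda_{PO}}_2 \le \eps_r D(\lambda)/(1-\gamma)$ by Assumption~\ref{sensitivity-assumption}. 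The second term is the delicate one: bounding it by $\norm{r_{d^\lambda_S}}_2\,D(\lambda) \le \sqrt{SA}\,D(\lambda)$ is too lossy, so instead I would use that $d^\lambda_S$, being stable, is optimal for the $\lambda$-regularized RL problem in $M(d^\lambda_S)$, and compare it against the occupancy measure of $\pi^{d^\lambda_{PO}}$ in the kernel $P_{d^\lambda_S}$ --- which is feasible for that problem and, by a simulation-lemma estimate together with $\norm{P_{d^\lambda_S} - P_{d^\lambda_{PO}}}_2 \le \eps_p D(\lambda)$, lies within $O\!\big(\tfrac{\gamma}{(1-\gamma)^2}\eps_p D(\lambda)\big)$ of $d^\lambda_{PO}$. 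This replaces a bare $D(\lambda)$ with $\eps_p D(\lambda)$, so the second jump costs only $O\!\big(\tfrac{\eps_r}{(1-\gamma)^2}D(\lambda) + \tfrac{\sqrt{SA}\,\gamma\eps_p}{(1-\gamma)^2}D(\lambda)\big)$, a quantity that \emph{shrinks} as $\lambda$ grows.

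Adding the two jumps yields a bound of the form $f(d^\lambda_S) \ge f(d_{PO}) - O\!\big(\tfrac{\lambda}{(1-\gamma)^2} + \tfrac{c(\eps_r,\eps_p,S,A)}{\lambda(1-\gamma)^{\Theta(1)}}\big)$, and I would optimize it over $\lambda$. The subtlety is that $d^\lambda_S$ is only guaranteed to be stable when $\lambda \gtrsim S^{3/2}(2\eps_r + 5S\eps_p)/(1-\gamma)^4$ (the threshold of Theorem~\ref{thm:primal-convergence}), so the unconstrained minimizer of the bound may be infeasible. When it is feasible, balancing the two terms produces the first branch of $\Delta$ (the $(\cdot)^{2/3}$ term, from carefully tracking how the $\eps$-dependence enters the second jump through $D(\lambda)$); when it is not, substituting the threshold value of $\lambda$ produces the second, linear-in-$\eps$ branch; the stated $\Delta$ is the maximum of the two, and letting $\eps = \max\{\eps_r,\eps_p\}\to 0$ drives $\Delta\to 0$, so $d^\lambda_S$ converges to a performatively optimal solution of the unregularized objective.

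The main obstacle is the distance bound $\norm{d^\lambda_S - d^\lambda_{PO}}_2 \le D(\lambda)$ itself, since it relates a fixed point of the retraining operator $\GD(\cdot)$ to a maximizer of the performative objective. I would prove it by writing $\norm{d^\lambda_S - d^\lambda_{PO}}_2 \le \norm{\GD(d^\lambda_S) - \GD(d^\lambda_{PO})}_2 + \norm{\GD(d^\lambda_{PO}) - d^\lambda_{PO}}_2 \le \beta\,\norm{d^\lambda_S - d^\lambda_{PO}}_2 + \norm{\GD(d^\lambda_{PO}) - d^\lambda_{PO}}_2$ using the contraction factor $\beta < 1$ from the proof of Theorem~\ref{thm:primal-convergence}, and then estimating how far the performative optimum is from being stable, i.e.\ $\norm{\GD(d^\lambda_{PO}) - d^\lambda_{PO}}_2$, via strong concavity of the fixed-environment regularized RL objective together with the first-order optimality conditions of the performative problem, where Assumption~\ref{sensitivity-assumption} makes the "performative part" of the gradient contribute only an $O(\eps_r + \eps_p\norm{h})$ term (with $\norm{h} = O(S/(1-\gamma)^2)$ by Lemma~\ref{lem:bound-optimal-dual}). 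Getting the exact polynomial dependence on $S$, $A$, and $1/(1-\gamma)$ out of this step, and matching it with the value-gap conversion of the second jump, is where essentially all the bookkeeping lies.
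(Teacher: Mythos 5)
Your skeleton is the paper's: interpolate through the regularized performative optimum $d^\lambda_{PO}$, pay $\lambda/(2(1-\gamma)^2)$ for deregularization exactly as you do in the first jump, control the remaining jump via a distance bound $\norm{d^\lambda_S - d^\lambda_{PO}}_2 \le D(\lambda) = O(1/\lambda)$, and optimize $\lambda$ subject to the convergence threshold with the same two-case split. The divergence is in how the second jump and the distance bound are executed, and both deviations conceal real gaps.

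The decisive one is the second jump. You claim it costs $O(\eps D(\lambda))$ and ``shrinks as $\lambda$ grows,'' but the comparison you invoke --- optimality of $d^\lambda_S$ for the \emph{regularized} problem in $M(d^\lambda_S)$ against the feasible competitor $\tilde d$ --- only yields $\langle r_{d^\lambda_S}, \tilde d - d^\lambda_S\rangle \le \frac{\lambda}{2}\left(\norm{\tilde d}_2^2 - \norm{d^\lambda_S}_2^2\right) \le \frac{\lambda}{1-\gamma}\norm{\tilde d - d^\lambda_S}_2 = \Theta(\lambda D(\lambda))$, a regularization residual that is \emph{constant} in $\lambda$ (since $D(\lambda)\propto 1/\lambda$) and appears nowhere in your tally. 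Moreover, even granting your claimed form $O(\eps D(\lambda)) = O(\eps^2/\lambda)$, balancing it against $\lambda/(1-\gamma)^2$ gives a bound linear in $\eps$, not the $\eps^{2/3}$ branch you assert the balancing produces: the $(\cdot)^{2/3}$ exponent requires the second jump to decay like $\eps^2/\lambda^2$. The paper obtains exactly that extra factor of $1/\lambda$ by never leaving the dual --- by strong duality the two regularized performative values equal $\calL_d(h^\lambda_{PO}; M^\lambda_{PO})$ and $\calL_d(h^\lambda_S; M^\lambda_S)$, their difference is bounded (using minimality of $h^\lambda_{PO}$) by $\calL_d(h^\lambda_S; M^\lambda_{PO}) - \calL_d(h^\lambda_S; M^\lambda_S)$, and Lemma~\ref{lem:ubd-model-deviation} makes this $O\left(\eps\norm{d^\lambda_S - d^\lambda_{PO}}_2/\lambda\right) = O(\eps^2/\lambda^2)$; the regularization residual cancels identically under strong duality instead of having to be estimated. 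Separately, your route to the distance bound via the contraction plus ``first-order optimality conditions of the performative problem'' is not mere bookkeeping: $d^\lambda_{PO}$ maximizes over the decision-dependent set $\set{d : d\in\calC_{}(d)}$, and writing stationarity conditions there is precisely the difficulty the dual formulation is built to avoid. The paper's Lemma~\ref{lem:deviation-between-stability-optimality} uses only the value sandwich $\calL_d(h^\lambda_S; M^\lambda_{PO}) \ge \calL_d(h^\lambda_{PO}; M^\lambda_{PO}) \ge \calL_d(h^\lambda_S; M^\lambda_S)$ together with strong convexity of the dual at the fixed model $M^\lambda_{PO}$ and Lemma~\ref{lem:deviation-d-to-h}; no stationarity of the performative optimum is needed. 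Patched to account for the residual, your plan would still yield \emph{a} bound (one linear in $\eps$ but with worse polynomial factors in $S$, $A$, $1/(1-\gamma)$ than the stated $\Delta$), but as written it does not establish the theorem.
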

\begin{proof}
Let us  write $h^\lambda_{PO}$ to denote the dual optimal solution i.e. 
$$
h^\lambda_{PO} \in \argmin_h \calL_d(h; M^\lambda_{PO})
$$
Moreover, let $h^\lambda_{S}$ be the  dual optimal solution corresponding to the stable solution $d^\lambda_S$. 
\begin{align*}
    &\sum_{s,a} r_{d_{PO}}(s,a) d_{PO}(s,a) - \sum_{s,a} r_{d^\lambda_S}(s,a) d^\lambda_{S}(s,a) \\
    =& \left(\sum_{s,a} r_{d_{PO}}(s,a) d_{PO}(s,a) - \frac{\lambda}{2} \norm{d_{PO}}_2^2 \right) + \frac{\lambda}{2} \norm{d_{PO}}_2^2 \\ -& \left(\sum_{s,a} r_{d^\lambda_S}(s,a) d^\lambda_{S}(s,a) - \frac{\lambda}{2} \norm{d^\lambda_S}_2^2 \right) - \frac{\lambda}{2} \norm{d^\lambda_S}_2^2\\
    \le & \left(\sum_{s,a} r_{d^\lambda_{PO}}(s,a) d^\lambda_{PO}(s,a) - \frac{\lambda}{2} \norm{d^\lambda_{PO}}_2^2 \right) + \frac{\lambda}{2} \norm{d_{PO}}_2^2 \\ -& \left(\sum_{s,a} r_{d^\lambda_S}(s,a) d^\lambda_{S}(s,a) - \frac{\lambda}{2} \norm{d^\lambda_S}_2^2 \right) - \frac{\lambda}{2} \norm{d^\lambda_S}_2^2\\
    \le & \calL_d(h^\lambda_{PO}; M^\lambda_{PO})  - \calL_d(h^\lambda_S; M^\lambda_S) + \frac{\lambda}{2} \norm{d_{PO}}_2^2
\end{align*}
The first inequality uses the fact that $d^\lambda_{PO}$ is the performatively optimal solution with regularization parameter $\lambda$. The second inequality uses strong duality and expresses the objective in terms of optimal dual variables. We now bound the difference $\calL_d(h^\lambda_{PO}; M^\lambda_{PO})  - \calL_d(h^\lambda_S; M^\lambda_S)$.
\begin{align*}
     &\calL_d(h^\lambda_{PO}; M^\lambda_{PO})  - \calL_d(h^\lambda_S; M^\lambda_S)\\
    = &\ \calL_d(h^\lambda_{PO}; M^\lambda_{PO})  - \calL_d(h^\lambda_S; M^\lambda_{PO}) + \calL_d(h^\lambda_S; M^\lambda_{PO}) - \calL_d(h^\lambda_S; M^\lambda_S) \\
    \le &\ \calL_d(h^\lambda_S; M^\lambda_{PO}) - \calL_d(h^\lambda_S; M^\lambda_S) \quad \textrm{[Since $h^\lambda_{PO}$ minimizes $\calL_d(\cdot; M^\lambda_{PO})$]}\\
    \le &\ \frac{\norm{h^\lambda_S}_2 \sqrt{A}}{\lambda} \left( (1+\gamma \sqrt{S}) \epsilon_r + \gamma(2\sqrt{S} + \norm{h^\lambda_S}_2) \epsilon_p\right) \norm{d^\lambda_S - d^\lambda_{PO}}_2 \ \textrm{[By inequality~\ref{eq:upper-bound-diff-perf} ]}\\
    \le &\ \frac{S^3 A}{\lambda^2 (1-\gamma)^6} \left( (1+\gamma \sqrt{S}) \epsilon_r + \gamma \left(2\sqrt{S} + \frac{3S}{(1-\gamma)^2}\right) \epsilon_p\right)^2 \ \textrm{[By lemma~\ref{lem:deviation-between-stability-optimality}]}
\end{align*}
The term $\norm{d_{PO}}_2^2$ can be bounded as $\sum_{s,a} d^2_{PO}(s,a) = (1-\gamma)^{-2} \sum_{s,a} (d_{PO}(s,a) (1-\gamma))^2 \le (1-\gamma)^{-2} \sum_{s,a} d_{PO}(s,a)  (1-\gamma) = (1-\gamma)^{-2} $. This gives us the following bound.

\begin{align*}
    &\sum_{s,a} r_{d_{PO}}(s,a) d_{PO}(s,a) - \sum_{s,a} r_{d^\lambda_S}(s,a) d^\lambda_{S}(s,a) \\
    \le &\ \frac{S^3 A}{\lambda^2 (1-\gamma)^6} \left( (1+\gamma \sqrt{S}) \epsilon_r + \gamma \left(2\sqrt{S} + \frac{3S}{(1-\gamma)^2}\right) \epsilon_p\right)^2 + \frac{\lambda}{2(1-\gamma)^2}\\
    = &\ \frac{1}{\lambda^2} \underbrace{\frac{S^3 A}{ (1-\gamma)^6} \left( (1+\gamma \sqrt{S}) \epsilon_r + \gamma \left(2\sqrt{S} + \frac{3S}{(1-\gamma)^2}\right) \epsilon_p\right)^2}_{:= T_1} + \lambda \underbrace{\frac{1}{2(1-\gamma)^2} }_{:= T_2}
\end{align*}
Note that in order to apply lemma~\ref{lem:deviation-between-stability-optimality} we need $\lambda \ge \lambda_0 = 2 \left(2 \epsilon_r + 9 \epsilon_p S (1-\gamma)^{-2} \right)$. So we consider two cases. First if $(2 T_1 / T_2)^{-1/3} > \lambda_0$. In that case, we can use $\lambda = (2 T_1 / T_2)^{-1/3}$ and get the following upper bound.
\begin{align*}
&\sum_{s,a} r_{d_{PO}}(s,a) d_{PO}(s,a) - \sum_{s,a} r_{d^\lambda_S}(s,a) d^\lambda_{S}(s,a) \le O\left( T_1^{1/3} T_2^{2/3}\right)\\
=&\ O\left(\frac{SA^{1/3}}{(1-\gamma)^{10/3}} \left( (1+\gamma \sqrt{S}) \epsilon_r + \gamma \left(2\sqrt{S} + \frac{3S}{(1-\gamma)^2}\right) \epsilon_p\right)^{2/3} \right)
\end{align*}
On the other hand, if $(2 T_1 / T_2)^{-1/3} \le \lambda_0$ then we can substitute $\lambda = \lambda_0$ and get  the following bound.
\begin{align*}
    &\sum_{s,a} r_{d_{PO}}(s,a) d_{PO}(s,a) - \sum_{s,a} r_{d^\lambda_S}(s,a) d^\lambda_{S}(s,a) \le \frac{1}{\lambda_0^2} T_1 + \lambda_0 T_2\\
    = &\frac{1}{(T_1^{1/3} T_2^{-1/3})^2} + \lambda_0 T_2 = T_1^{1/3}T_2^{2/3} + \lambda_0 T_2 \le 2 \lambda_0 T_2\\
    =&\ O\left( \frac{\epsilon_r}{(1-\gamma)^2} + \frac{\epsilon_p S}{(1-\gamma)^4}\right)
\end{align*}
\end{proof}

\begin{lemma} \label{lem:deviation-between-stability-optimality}
Suppose $\lambda \ge 2 \left(2\epsilon_r + \frac{9\epsilon_p S }{ (1-\gamma)^2}\right)$. Then we have
$$
\norm{d^\lambda_S - d^\lambda_{PO}}_2 \le O\left( \frac{S^{2} \sqrt{A}}{\lambda (1-\gamma)^4}\left( \epsilon_r \left( 1 + \gamma \sqrt{S}\right) +  \epsilon_p  \frac{\gamma {S} }{(1-\gamma)^2}\right) \right)
$$
\end{lemma}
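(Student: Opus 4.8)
I would adapt the stable-versus-optimal comparison used for performative prediction (cf.\ the argument behind Proposition~E.1 of~\cite{PZMH20}) to the present constrained, decision-dependent setting. For an occupancy measure $d$ write $\mathcal{C}(d)$ for the polytope associated with $P_d$, write $g_d(\tilde d):=\sum_{s,a}\tilde d(s,a)r_d(s,a)-\tfrac{\lambda}{2}\norm{\tilde d}_2^2$ for the regularized objective against the model $M(d)$, write $\GD(d):=\argmax_{\tilde d\in\mathcal{C}(d)}g_d(\tilde d)$ for the regularized best response (so that $d^\lambda_S=\GD(d^\lambda_S)$), and write $F(d):=g_d(d)$ for the regularized performative objective (so that $d^\lambda_{PO}$ maximizes $F$ over performatively feasible $d$ and $F(d^\lambda_S)=g_{d^\lambda_S}(d^\lambda_S)$). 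The starting point is
$$\norm{d^\lambda_S-d^\lambda_{PO}}_2\ \le\ \norm{\GD(d^\lambda_S)-\GD(d^\lambda_{PO})}_2\ +\ \norm{\GD(d^\lambda_{PO})-d^\lambda_{PO}}_2 .$$

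The first term is handled by the apparatus already built for Theorem~\ref{thm:primal-convergence}: $\GD(d^\lambda_S)$ and $\GD(d^\lambda_{PO})$ are the primal optima of the regularized RL problems for the models $M(d^\lambda_S)$ and $M(d^\lambda_{PO})$, which by Assumption~\ref{sensitivity-assumption} differ by at most $\epsilon_r\norm{d^\lambda_S-d^\lambda_{PO}}_2$ in reward and $\epsilon_p\norm{d^\lambda_S-d^\lambda_{PO}}_2$ in transitions. Passing to the dual — which is $A(1-\gamma)^2/\lambda$-strongly convex (Lemma~\ref{lem:strong-convexity}), locally Lipschitz in the model at the dual optimum through the norm bound $\norm{h}_2\le 3S/(1-\gamma)^2$ (Lemmas~\ref{lem:lipschitz-wrt-M} and~\ref{lem:bound-optimal-dual}), and transferable back to the primal via Lemma~\ref{lem:deviation-d-to-h}, whose hypothesis $\lambda\ge 2(2\epsilon_r+3\epsilon_p\cdot 3S/(1-\gamma)^2)$ is exactly the hypothesis of the present lemma and lets one cap the prefactor $1+\tfrac{4\epsilon_r+6\epsilon_p\norm{\hat h}_2}{\lambda}$ by $2$ — reproduces the contraction computation in the proof of Theorem~\ref{thm:primal-convergence} and gives $\norm{\GD(d^\lambda_S)-\GD(d^\lambda_{PO})}_2\le \tfrac{c\,S^{2}\sqrt A}{\lambda(1-\gamma)^4}\big(\epsilon_r(1+\gamma\sqrt S)+\epsilon_p\gamma S/(1-\gamma)^2\big)\norm{d^\lambda_S-d^\lambda_{PO}}_2$ for an absolute constant $c$. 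When this coefficient is at least $\tfrac12$ the parameter $\lambda$ is small enough that the inequality asserted by the lemma is no stronger than the trivial bound $\norm{d^\lambda_S-d^\lambda_{PO}}_2\le 2/(1-\gamma)$ and there is nothing left to prove (after possibly enlarging the $O(\cdot)$-constant); so I may assume it is below $\tfrac12$, in which case the first term is absorbed into the left-hand side and it remains to bound the second.

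For the second term — how far the performative optimum lies from its own regularized best response — I would use $\lambda$-strong concavity of $g_{d^\lambda_{PO}}$ on $\mathcal{C}(d^\lambda_{PO})$ together with $d^\lambda_{PO}\in\mathcal{C}(d^\lambda_{PO})$ to write $\tfrac{\lambda}{2}\norm{\GD(d^\lambda_{PO})-d^\lambda_{PO}}_2^2\le g_{d^\lambda_{PO}}(\GD(d^\lambda_{PO}))-g_{d^\lambda_{PO}}(d^\lambda_{PO})$, so that it suffices to certify that $d^\lambda_{PO}$ is $O(\epsilon)$-suboptimal for the fixed model $M(d^\lambda_{PO})$. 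This I would do by a pull-back argument: for $t\in[0,1]$ put $d_t:=(1-t)d^\lambda_{PO}+t\,\GD(d^\lambda_{PO})\in\mathcal{C}(d^\lambda_{PO})$, let $\tilde d_t$ be the fixed point of the map $d\mapsto$ (occupancy of $\pi^d$ under $P_d$) started from $\pi^{d_t}$ (eq.~\eqref{eq:dtopi}), so that $\tilde d_t$ is performatively feasible, and bound $\norm{\tilde d_t-d_t}_2$ by a simulation-lemma-type estimate: the kernels $P_{d_t}$ and $P_{d^\lambda_{PO}}$ differ in $\ell_2$-norm by at most $\epsilon_p\norm{d_t-d^\lambda_{PO}}_2=\epsilon_p\,t\,\norm{\GD(d^\lambda_{PO})-d^\lambda_{PO}}_2$, and the occupancy measure of a fixed policy moves by at most a $\mathrm{poly}(S,A,\tfrac{1}{1-\gamma})$ factor times that change, with an extra $1/(1-\gamma)$ for the contraction of the defining iteration. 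Performative optimality then gives $F(\tilde d_t)\le F(d^\lambda_{PO})=g_{d^\lambda_{PO}}(d^\lambda_{PO})$; comparing $g_{d^\lambda_{PO}}(d_t)$ with $F(\tilde d_t)$ — these differ only through the reward shift $\norm{r_{\tilde d_t}-r_{d^\lambda_{PO}}}_2\le\epsilon_r\norm{\tilde d_t-d^\lambda_{PO}}_2$, the displacement $\norm{\tilde d_t-d_t}_2$, and the induced change of $\norm{\cdot}_2^2$ (both controlled using $\norm{d}_2\le 1/(1-\gamma)$) — and invoking $\lambda$-strong concavity of $g_{d^\lambda_{PO}}$ along the segment, I would divide by $t$, let $t\to 0$, and rearrange to obtain $\norm{\GD(d^\lambda_{PO})-d^\lambda_{PO}}_2=O\!\big(\tfrac{1}{\lambda}(\epsilon_r/(1-\gamma)+\ldots)\big)$; tracking the $\mathrm{poly}$ factors from the simulation estimate is what produces the $\epsilon_p\gamma S/(1-\gamma)^2$ contribution. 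Combining the two terms and dividing by $(1-\tfrac12)$ then yields the claim.

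\textbf{Main obstacle.} The genuinely new difficulty, relative to unconstrained performative prediction, is that the feasible set $\mathcal{C}(d)$ is itself decision-dependent: an arbitrary competitor need not be feasible for $\mathcal{C}(d^\lambda_{PO})$, and the regularized best response $\GD(d^\lambda_{PO})$ is not performatively feasible, so one must shuttle candidate occupancy measures between neighbouring polytopes and pay for the transition-kernel perturbation each time. Establishing a clean, correctly-scaled bound on how the occupancy measure of a fixed policy reacts to an $\ell_2$ change in the transition kernel, and verifying that the fixed-point iteration defining $\tilde d_t$ is a contraction under the stated hypotheses, is the step I expect to demand the most care and the one that fixes the exponents of $S$ and $1/(1-\gamma)$ in the final bound.
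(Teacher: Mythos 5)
Your decomposition $\norm{d^\lambda_S-d^\lambda_{PO}}_2\le\norm{\GD(d^\lambda_S)-\GD(d^\lambda_{PO})}_2+\norm{\GD(d^\lambda_{PO})-d^\lambda_{PO}}_2$ is a genuinely different route from the paper's, and both halves of it have real problems under the lemma's stated hypothesis. For the first half you need the Lipschitz coefficient of $\GD$ to be below $\tfrac12$, but the hypothesis here is only $\lambda\ge 2(2\epsilon_r+9\epsilon_pS/(1-\gamma)^2)$, not the much stronger $\lambda\gtrsim S^{3/2}(\epsilon_r+S\epsilon_p)/(1-\gamma)^4$ that Theorem~\ref{thm:primal-convergence} needs for contraction; at $\lambda$ near the threshold the coefficient scales like $S^{3/2}/(1-\gamma)^4$ and is enormous. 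Your fallback to the trivial bound $2/(1-\gamma)$ does not rescue the stated inequality with an absolute constant (it forces the $O(\cdot)$ constant to grow like $1/(1-\gamma)$), and the downstream Theorem~\ref{thm:apx-perf-optimal} invokes this lemma precisely at $\lambda=\lambda_0=2(2\epsilon_r+9\epsilon_pS/(1-\gamma)^2)$, i.e.\ in the regime where your argument degrades. For the second half, the pull-back construction of performatively feasible points $\tilde d_t$ rests on a fixed-point iteration whose contraction you do not establish and which is not implied by the hypotheses (it would need a separate smallness condition on $\epsilon_p$ relative to $(1-\gamma)$, nowhere assumed here); moreover the normalization map $d\mapsto\pi^d$ in \eqref{eq:dtopi} is not Lipschitz where $\sum_a d(s,a)$ is small, so the simulation-type perturbation bound you invoke is not free. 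You correctly identify this as the hard step, but it is exactly the step that is missing, and the lemma does not hold "modulo bookkeeping" along this route.

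The paper sidesteps all of this by never leaving the dual and never constructing a performatively feasible competitor. The whole argument is the sandwich $\calL_d(h^\lambda_S;M^\lambda_{PO})\ge\calL_d(h^\lambda_{PO};M^\lambda_{PO})\ge\calL_d(h^\lambda_S;M^\lambda_S)$: the first inequality is just optimality of $h^\lambda_{PO}$ for $\calL_d(\cdot;M^\lambda_{PO})$, and the second is performative optimality of $d^\lambda_{PO}$ over $d^\lambda_S$ combined with strong duality (each dual optimum equals the corresponding primal performative value). Consequently $\calL_d(h^\lambda_S;M^\lambda_{PO})-\calL_d(h^\lambda_{PO};M^\lambda_{PO})\le\calL_d(h^\lambda_S;M^\lambda_{PO})-\calL_d(h^\lambda_S;M^\lambda_S)$; the left side is bounded below by $\tfrac{A(1-\gamma)^2}{2\lambda}\norm{h^\lambda_S-h^\lambda_{PO}}_2^2$ (Lemma~\ref{lem:strong-convexity}), which Lemma~\ref{lem:deviation-d-to-h} converts to a quantity quadratic in $\norm{d^\lambda_S-d^\lambda_{PO}}_2$ using only the hypothesis $\lambda\ge 2(2\epsilon_r+9\epsilon_pS/(1-\gamma)^2)$, while the right side is bounded above linearly in $\norm{d^\lambda_S-d^\lambda_{PO}}_2$ by the Lipschitz-in-model estimate of Lemma~\ref{lem:ubd-model-deviation} together with Assumption~\ref{sensitivity-assumption} and the dual norm bound of Lemma~\ref{lem:bound-optimal-dual}. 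Dividing out one power of $\norm{d^\lambda_S-d^\lambda_{PO}}_2$ gives the claim. If you want to salvage your approach, the piece to import is this use of strong duality to compare the two performative values directly in the dual, which removes both the need for $\GD$ to contract and the need to move candidates between the decision-dependent polytopes.
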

\begin{proof}
Let us  write $h^\lambda_{PO}$ to denote the dual optimal solution i.e. 
$$
h^\lambda_{PO} \in \argmin_h \calL_d(h; M^\lambda_{PO})
$$
Moreover, let $h^\lambda_{S}$ be the  dual optimal solution corresponding to the stable solution $d^\lambda_S$. 

Since the dual $\calL_d(\cdot; M^\lambda_{PO})$ objective is strongly convex (lemma~\ref{lem:strong-convexity}) and $h^\lambda_{PO}$ is the corresponding optimal solution we have,
\begin{equation}\label{eq:dual-deviation-lambda}
\calL_d(h^\lambda_{S}; M^\lambda_{PO}) - \calL_d(h^\lambda_{PO}; M^\lambda_{PO}) \ge \frac{A(1-\gamma)^2}{2\lambda} \norm{h^\lambda_S - h^\lambda_{PO}}_2^2
\end{equation}
From lemma~\eqref{lem:deviation-d-to-h} we get the following bound.
\begin{align*}
   \left( 1 - \frac{2\epsilon_r + 3\epsilon_p \norm{{h}^\lambda_{S}}_2}{\lambda}\right) \norm{d^\lambda_S -{d}^\lambda_{PO}}_2 \le  \frac{3\sqrt{AS}}{\lambda} \norm{h^\lambda_S - {h}^\lambda_{PO}}_2
\end{align*}
Substituting the above bound in \cref{eq:dual-deviation-lambda} and using lemma~\ref{lem:bound-optimal-dual} we get the following inequality for any $\lambda > 2 \epsilon_r + 9\epsilon_r S / (1-\gamma)^2$.
\begin{align}
    \calL_d(h^\lambda_{S}; M^\lambda_{PO}) - \calL_d(h^\lambda_{PO}; M^\lambda_{PO}) \ge \frac{(1-\gamma)^2}{18S}\left( 1 - \frac{2 \epsilon_r + 3\epsilon_p \norm{h^\lambda_S}_2    }{\lambda}\right)^2 \norm{d^\lambda_S - d^\lambda_{PO} }_2^2 \label{eq:lower-bound-diff-perf}
\end{align}

We now upper bound $\calL_d(h^\lambda_{S}; M^\lambda_{PO}) - \calL_d(h^\lambda_{S}; M^\lambda_{S})$. Using lemma~\ref{lem:ubd-model-deviation} we get the following bound.
\begin{align}
    \calL_d(h^\lambda_{S}; M^\lambda_{PO}) - \calL_d(h^\lambda_{S}; M^\lambda_{S}) &\le \frac{\norm{h^\lambda_S}_2 \sqrt{A}}{\lambda} \left( (1+\gamma \sqrt{S}) \norm{r^\lambda_S - r^\lambda_{PO}}_2 + \gamma(2\sqrt{S} + \norm{h^\lambda_S}_2) \norm{P^\lambda_S - P^\lambda_{PO}}_2\right)\nonumber \\
    &\le \frac{\norm{h^\lambda_S}_2 \sqrt{A}}{\lambda} \left( (1+\gamma \sqrt{S}) \epsilon_r + \gamma(2\sqrt{S} + \norm{h^\lambda_S}_2) \epsilon_p\right) \norm{d^\lambda_S - d^\lambda_{PO}}_2 \label{eq:upper-bound-diff-perf}
\end{align}
Note that the following sequence of inequalities hold.
$$
\calL_d(h^\lambda_S; M^\lambda_{PO}) \ge \calL_d(h^\lambda_{PO}; M^\lambda_{PO}) \ge \calL_d(h^\lambda_S; M^\lambda_S)
$$
The first inequality is true because $h^\lambda_{PO}$ minimizes $\calL_d(\cdot; M^\lambda_{PO})$. The second inequality holds because the primal objective at performative optimal solution ($d^\lambda_{PO}$) upper bound the primal objective at performatively stable solution ($d^\lambda_S$) and by strong duality the primal objectives are equal to the corresponding dual objectives.
Therefore we must have $\calL_d(h^\lambda_{S}; M^\lambda_{PO}) - \calL_d(h^\lambda_{S}; M^\lambda_{S}) \ge  \calL_d(h^\lambda_{S}; M^\lambda_{PO}) - \calL_d(h^\lambda_{PO}; M^\lambda_{PO})$, and by using equations \eqref{eq:upper-bound-diff-perf} and \eqref{eq:lower-bound-diff-perf} we get the following bound on $\norm{d^\lambda_S - d^\lambda_{PO}}_2$.
\begin{align*}
    \norm{d^\lambda_S - d^\lambda_{PO}}_2 &\le \frac{\norm{h^\lambda_S}_2 \sqrt{A}}{\lambda} \left( (1+\gamma \sqrt{S}) \epsilon_r + \gamma(2\sqrt{S} + \norm{h^\lambda_S}_2) \epsilon_p\right) \frac{18S}{(1-\gamma)^2}\left( 1 - \frac{2 \epsilon_r + 3\epsilon_p \norm{h^\lambda_S}_2    }{\lambda}\right)^{-2}\\
    &\le \frac{108 S^2 \sqrt{A} \lambda}{(1-\gamma)^4}\left( \epsilon_r (1+\gamma \sqrt{S})  + \gamma \sqrt{S} \epsilon_p \left(2 + \frac{3\sqrt{S}}{(1-\gamma)^2}\right) \epsilon_p\right)
\end{align*}
The last line uses lemma~\ref{lem:bound-optimal-dual} and $\lambda \ge 2 (2\epsilon_r + 9\epsilon_p S / (1-\gamma)^2)$

\end{proof}



\begin{lemma}\label{lem:ubd-model-deviation}
$\abs{\calL_d(h; M) - \calL_d(h; \widehat{M})} \le \frac{\norm{h}_2 \sqrt{A}}{\lambda} \left( (1+\gamma \sqrt{S}) \norm{r - \hat{r}}_2 + \gamma(2\sqrt{S} + \norm{h}_2) \norm{P - \widehat{P}}_2\right)$
\end{lemma}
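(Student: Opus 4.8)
The idea is to work directly from the closed form of the dual objective in \eqref{eq:dual-discounted}. Expanding $\calL_d(h;M)-\calL_d(h;\widehat{M})$ term by term, the summands $\sum_s h(s)\rho(s)$ and $\frac{A}{2\lambda}\sum_s h(s)^2$ do not depend on the model and hence cancel, so the difference is the sum of the four model-dependent pieces: (i) $-\frac1\lambda\sum_{s,a}h(s)(r(s,a)-\hat r(s,a))$; (ii) $\frac\gamma\lambda\sum_{s,s',a}h(s)(r(s',a)P(s',a,s)-\hat r(s',a)\hat P(s',a,s))$; (iii) $-\frac\gamma\lambda\sum_{s,a}h(s)\sum_{\tilde s}h(\tilde s)(P(s,a,\tilde s)-\hat P(s,a,\tilde s))$; and (iv) $\frac{\gamma^2}{2\lambda}\sum_{s,a}\sum_{\tilde s,\hat s}h(\tilde s)h(\hat s)(P(s,a,\hat s)P(s,a,\tilde s)-\hat P(s,a,\hat s)\hat P(s,a,\tilde s))$. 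I would bound each of the four by elementary inequalities and then collect; this is a ``function-value'' analogue of Lemma~\ref{lem:lipschitz-wrt-M}, using the same toolkit (Cauchy--Schwarz, Jensen, $\abs{\hat r}\le 1$, and the row-stochasticity of the transition kernels).

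For (i), Cauchy--Schwarz over $(s,a)$ gives a bound of $\frac{\sqrt A}{\lambda}\norm{h}_2\norm{r-\hat r}_2$. For (ii), I would split $rP-\hat r\hat P=(r-\hat r)P+\hat r(P-\hat P)$; using $\abs{\hat r}\le 1$, Cauchy--Schwarz, and the facts that each row $P(s',a,\cdot)$ is a probability distribution (so $\sum_{\tilde s}P(s',a,\tilde s)=1$) while column sums are at most $S$, one gets a bound of the shape $\frac{\gamma\sqrt{SA}}{\lambda}\norm{h}_2(\norm{r-\hat r}_2+\norm{P-\hat P}_2)$. Adding (i) to the $r$-part of (ii) already produces exactly the claimed coefficient $\frac{\sqrt A}{\lambda}\norm{h}_2(1+\gamma\sqrt S)$ on $\norm{r-\hat r}_2$. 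For (iii), write $M_a$ for the matrix $M_a(s,\tilde s)=P(s,a,\tilde s)$; since $\norm{M_a}_2\le\sqrt{\norm{M_a}_1\norm{M_a}_\infty}\le\sqrt S$, term (iii) equals $-\frac\gamma\lambda\sum_a h^\top(M_a-\widehat{M}_a)h$ and is at most $\frac\gamma\lambda\norm{h}_2^2\sum_a\norm{M_a-\widehat{M}_a}_F\le\frac{\gamma\sqrt A}{\lambda}\norm{h}_2^2\norm{P-\hat P}_2$ by Cauchy--Schwarz over the $A$ actions.

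The only delicate piece is (iv), which is quadratic in the transition kernel. Noticing that $\sum_{\tilde s,\hat s}h(\tilde s)h(\hat s)P(s,a,\hat s)P(s,a,\tilde s)=\big((M_ah)(s)\big)^2$, term (iv) equals $\frac{\gamma^2}{2\lambda}\sum_a\big(\norm{M_ah}_2^2-\norm{\widehat{M}_ah}_2^2\big)$; equivalently, adding back the cancelled $\frac{A}{2\lambda}\norm{h}_2^2$, (iii)$+$(iv) is the difference over the two models of $\frac{1}{2\lambda}\sum_a\norm{(\Identity-\gamma M_a)h}_2^2$ (cf.\ Lemma~\ref{lem:eigenvalue-bound}), which invites a difference-of-squares on the vectors $(\Identity-\gamma M_a)h$ and $(\Identity-\gamma\widehat{M}_a)h$: bound $\norm{(M_a-\widehat{M}_a)h}_2\le\norm{M_a-\widehat{M}_a}_F\norm{h}_2$ and $\norm{(\Identity-\gamma M_a)h}_2\le(1+\gamma\sqrt S)\norm{h}_2$, then Cauchy--Schwarz over $a$, yielding a bound proportional to $\frac{\gamma\sqrt A(1+\gamma\sqrt S)}{\lambda}\norm{h}_2^2\norm{P-\hat P}_2$. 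Summing the bounds for (ii)--(iv) and using $\gamma\le 1$ then collapses the $\norm{P-\hat P}_2$ contributions into the stated form $\gamma(2\sqrt S+\norm{h}_2)\frac{\sqrt A\norm{h}_2}{\lambda}$. The main obstacle is therefore not conceptual but purely bookkeeping: tracking which of the $\sqrt{SA}\norm{h}_2$, $\sqrt A\norm{h}_2^2$, and $\gamma^2\sqrt{SA}\norm{h}_2^2$ pieces go where, and invoking $\gamma\le1$ and the row-stochasticity of each $M_a$ to fold them into the claimed coefficients.
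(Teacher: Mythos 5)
Your proposal follows essentially the same route as the paper: the identical four-term decomposition of the model-dependent part of the dual objective, the same splitting $rP-\hat r\hat P=(r-\hat r)P+\hat r(P-\hat P)$ for the cross term, and the same Cauchy--Schwarz/row-stochasticity estimates, with your difference-of-squares packaging of terms (iii)+(iv) yielding exactly the $\frac{\gamma\sqrt{A}(1+\gamma\sqrt{S})}{\lambda}\norm{h}_2^2\norm{P-\hat P}_2$ that the paper obtains by bounding $T_3$ and $T_4$ separately. One small caveat: folding the resulting $\gamma\sqrt{S}+\left(1+\gamma\sqrt{S}\right)\norm{h}_2$ coefficient into the stated $2\sqrt{S}+\norm{h}_2$ requires $\gamma\norm{h}_2\le 1$ rather than merely $\gamma\le 1$, but this bookkeeping slack is present in the paper's own final substitution step as well, so your argument is faithful to the original.
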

\begin{proof}
From the definition of the dual objective~\ref{eq:dual-discounted} we have,
\begin{align*}
    &\abs{\calL_d(h; M) - \calL_d(h; \widehat{M})} \le \frac{1}{\lambda}\underbrace{\abs{\sum_{s,a}h(s) (r(s,a) - \hat{r}(s,a))} }_{:= T_1}\\
    &+ \frac{\gamma}{\lambda} \underbrace{\abs{\sum_{s,s',a} h(s) \left( r(s',a) P(s',a,s) - \hat{r}(s',a) \hat{P}(s',a,s)\right)}}_{:= T_2}\\
    &+ \frac{\gamma}{\lambda} \underbrace{\abs{\sum_{s,a} h(s) \sum_{\tilde{s}}h(\tilde{s}) \left( P(s,a,\tilde{s}) - \hat{P}(s,a,\tilde{s})\right)}}_{:= T_3}\\
    &+ \frac{\gamma^2}{2\lambda} \underbrace{\abs{\sum_{s,a} \sum_{\tilde{s}, \hat{s}} h(\tilde{s}) h(\hat{s}) \left(P(s,a,\hat{s}) P(s,a,\tilde{s}) - \widehat{P}(s,a,\hat{s}) \widehat{P}(s,a,\tilde{s}) \right)}}_{:= T_4}
\end{align*}
\begin{align*}
    T_1 &\le \sqrt{\sum_s h(s)^2} \sqrt{\sum_s \left(\sum_a (r(s,a) - \hat{r}(s,a))\right)^2} \le \norm{h}_2 \sqrt{A} \norm{r - \hat{r}}_2
\end{align*}
\begin{align*}
    T_2 &\le \abs{\sum_{s,s',a} h(s) P(s',a,s) \left(r(s',a) - \hat{r}(s',a) \right)} + \abs{\sum_{s,s',a} h(s) \hat{r}(s',a) \left(P(s',a,s) - \hat{P}(s',a,s) \right)}  \\
    &\le \sqrt{\sum_s h(s)^2} \sqrt{\sum_s \left(  \sum_{s',a}P(s',a,s)\left( r(s',a) - \hat{r}(s',a)\right)\right)^2 } \\&+ \sqrt{\sum_s h(s)^2}\sqrt{\sum_s \left( \sum_{s',a} \left( P(s',a,s) - \hat{P}(s',a,s)\right)\right)^2}\\
    &\le \norm{h}_2 \sqrt{SA} \sqrt{\sum_s \sum_{s',a} P(s',a,s) \left(r(s',a) - \hat{r}(s',a) \right)^2} + \norm{h}_2 \sqrt{SA} \sqrt{\sum_s \sum_{s',a} \left( P(s',a,s) - \hat{P}(s',a,s)\right)^2}\\
    &\le \norm{h}_2 \sqrt{SA} \norm{r - \hat{r}}_2 + \norm{h}_2 \sqrt{SA} \norm{P - \hat{P}}_2 
\end{align*}
\begin{align*}
    T_3 &\le \norm{h}_2 \sqrt{\sum_s \left(\sum_{\tilde{s},a} h(\tilde{s}) \left(P(s,a,\tilde{s}) - \widehat{P}(s,a,\tilde{s}) \right) \right)^2}\\
    &\le \norm{h}_2 \sqrt{\sum_s \norm{h}_2^2 \sum_{\tilde{s}} \left( \sum_a \left( P(s,a,\tilde{s}) - \widehat{P}(s,a,\tilde{s})\right) \right)^2} \le \norm{h}_2^2 \sqrt{A} \norm{P - \widehat{P}}_2
\end{align*}
\begin{align*}
    T_4 &\le \abs{\sum_{s,a} \sum_{\tilde{s}, \hat{s}} h(\tilde{s}) h(\hat{s}) P(s,a,\hat{s}) \left(P(s,a,\tilde{s}) - \widehat{P}(s,a,\tilde{s}) \right) }\\
    &+ \abs{\sum_{s,a} \sum_{\tilde{s}, \hat{s}} h(\tilde{s}) h(\hat{s}) \widehat{P}(s,a,\hat{s}) \left(P(s,a,\hat{s}) - \widehat{P}(s,a,\hat{s}) \right) }\\
    &\le 2 \norm{h}_2 \abs{\sum_{s,a} \sum_{\tilde{s}} h(\tilde{s})\abs{ P(s,a,\tilde{s}) - \widehat{P}(s,a,\tilde{s})} } \quad \textrm{[By $\sum_{\hat{s}} h(\hat{s}) P(s,a,\hat{s}) \le \norm{h}_2 \sqrt{\sum_{\hat{s}} P(s,a,\hat{s})} = \norm{h}_2$]}\\
    &\le 2 \norm{h}_2^2 \sqrt{\sum_{\tilde{s}} \left( \sum_{s,a} \abs{P(s,a,\tilde{s}) - \widehat{P}(s,a,\tilde{s})}\right)^2} \\
    &\le 2\norm{h}_2^2 \sqrt{SA} \norm{P - \widehat{P}}_2
\end{align*}
Substituting the upper bounds on $T_1, T_2, T_3$, and $T_4$ into the upper bound on $\calL_d(h; M) - \calL_d(h; \widehat{M})$ gives the desired bound.
\end{proof}

}

\end{document}